\def\flow{RMP{flow}\xspace}
\def\algebra{RMP-algebra\xspace}
\def\tree{RMP-tree\xspace}
\def\pushforward{\texttt{pushforward}\xspace}
\def\pullback{\texttt{pullback}\xspace}
\def\resolve{\texttt{resolve}\xspace}
\def\CC{\mathcal{C}}
\def\MM{\mathcal{M}}\def\NN{\mathcal{N}}
\def\SS{\mathcal{S}}\def\TT{\mathcal{T}}
\def\Ab{\mathbf{A}}\def\Bb{\mathbf{B}}\def\Cb{\mathbf{C}}
\def\Db{\mathbf{D}}
\def\Gb{\mathbf{G}}\def\Hb{\mathbf{H}}
\def\Jb{\mathbf{J}}\def\Lb{\mathbf{L}}
\def\Mb{\mathbf{M}}\def\Nb{\mathbf{N}}
\def\Rb{\mathbf{R}}
\def\ab{\mathbf{a}}
\def\eb{\mathbf{e}}\def\fb{\mathbf{f}}
\def\gb{\mathbf{g}}
\def\jb{\mathbf{j}}\def\lb{\mathbf{l}}
\def\mbb{\mathbf{m}}
\def\nb{\mathbf{n}}
\def\qb{\mathbf{q}}
\def\ub{\mathbf{u}}
\def\vb{\mathbf{v}}\def\xb{\mathbf{x}}
\def\Rbb{\mathbb{R}}
\def\R{\Rbb}
\def\diag{\mathrm{diag}}
\def\t{\top}
\def\*{\star}
\newcommand{\norm}[1]{ \| #1 \|  }
\newcommand{\lr}[2]{ \left\langle #1, #2 \right\rangle}
\DeclareMathOperator*{\argmin}{arg\,min}
\newcommand{\half}{{\scriptstyle \frac{1}{2}}}
\newcommand{\trihat}[1]{{\stackrel{\scriptscriptstyle\Delta}{#1}}}
\newcommand{\ma}{\mathbf{a}}
\newcommand{\q}{\mathbf{q}}
\newcommand{\qd}{{\dot{\q}}}
\newcommand{\qdd}{{\ddot{\q}}}
\newcommand{\vv}{\mathbf{v}}
\newcommand{\x}{\mathbf{x}}
\newcommand{\xd}{{\dot{\x}}}
\newcommand{\xdd}{{\ddot{\x}}}
\newcommand{\y}{\mathbf{y}}
\newcommand{\yd}{{\dot{\y}}}
\newcommand{\ydd}{{\ddot{\y}}}
\newcommand{\z}{\mathbf{z}}
\newcommand{\zd}{{\dot{\z}}}
\newcommand{\f}{\mathbf{f}}
\newcommand{\J}{\mathbf{J}}
\newcommand{\Jd}{{\dot{\J}}}
\newcommand{\A}{\mathbf{A}}
\newcommand{\B}{\mathbf{B}}
\newcommand{\C}{\mathbf{C}}
\newcommand{\D}{\mathbf{D}}
\newcommand{\G}{\mathbf{G}}
\newcommand{\I}{\mathbf{I}}
\newcommand{\M}{\mathbf{M}}
\newcommand{\mT}{\mathbf{T}}
\newcommand{\V}{\mathbf{V}}
\newcommand{\wt}[1]{{\widetilde{#1}}}
\newif\ifLONG
\newif\ifAPP
\begin{document}
\mainmatter              
%
\title{RMP\textit{flow}: A Computational Graph for \\ Automatic Motion Policy Generation}
%
\titlerunning{\flow}
%
\vspace{-2mm}\author{
Ching-An Cheng\inst{1,2} 
\and 
Mustafa Mukadam\inst{1,2}
\and 
Jan Issac\inst{1} 
\and 
Stan Birchfield\inst{1}
\and 
\\
Dieter Fox\inst{1,3} 
\and 
Byron Boots\inst{1,2} 
\and 
Nathan Ratliff\inst{1} 
}

\institute{
    NVIDIA, Seattle Robotics Lab, Seattle, WA, USA
    \and
    Georgia Institute of Technology, 
    Robot Learning Lab,
    Atlanta, GA, USA\\
    \and
    University of Washington,
    Robotics and State Estimation Lab,
    Seattle, WA, USA
}

\authorrunning{Cheng, Mukadam, Issac, Birchfield, Fox, Boots, Ratliff} 

\maketitle

\vspace{-4mm}
\begin{abstract} 
%
We develop a novel policy synthesis algorithm, 
\flow, based
on geometrically consistent transformations of Riemannian Motion Policies (RMPs). RMPs are a class of reactive motion policies designed to parameterize non-Euclidean behaviors as dynamical systems in intrinsically nonlinear task spaces.  Given a set of RMPs designed for individual tasks, \flow can consistently combine these
local policies to generate an expressive global policy, while simultaneously exploiting
sparse structure for computational efficiency.  We
study the geometric properties of \flow  and provide sufficient
conditions for stability.
Finally, we experimentally demonstrate that accounting for the 
geometry of task policies can simplify 
classically difficult problems, such as 
planning 
through clutter on
high-DOF manipulation systems.

\vspace{-1mm}
\keywords{Motion and Path Planning, Collision Avoidance, 
Dynamics}
\end{abstract}

\vspace{-8mm}
\section{Introduction}\vspace{-2mm}


In this work, we develop a new  motion generation and control
framework that enables globally stable controller design within {\it
intrinsically}
non-Euclidean spaces.\footnote{Spaces defined by non-constant Riemannian metrics with non-trivial curvature.}
 Non-Euclidean geometries are not often modeled
explicitly in robotics, but are nonetheless common in the natural world.  One important example is the apparent non-Euclidean 
behavior of obstacle avoidance.
Obstacles become holes in this setting. 
As a result,  straight lines are no longer a reasonable definition of shortest distance---geodesics must, therefore, naturally flow around them. This behavior implies a form of non-Euclidean geometry: the space is naturally curved by the presence of obstacles.

The planning literature has made substantial progress in modeling non-Euclidean task-space behaviors, but at the expense of efficiency and reactivity.  
Starting with early differential geometric models of obstacle avoidance~\cite{rimon-ams-1991}
and building toward modern planning algorithms 
and optimization techniques~\cite{RIEMORatliff2015ICRA,VijayakumarTopologyMotionPlanning2013,Watterson-TrajOptManifolds-RSS-18,ToussaintTrajOptICML2009,LavallePlanningAlgorithms06,KaramanRRTStar2011,GammellBitStar2014,mukadam2017continuous},
these techniques can calculate highly nonlinear trajectories. However, they are often computationally intensive, sensitive to noise, and unresponsive to perturbation. In addition, the internal nonlinearities of robots due to kinematic constraints are sometimes simplified in the optimization.

At the same time, a separate thread of literature, emphasizing fast reactive control over computationally expensive planning, developed efficient closed-loop
control techniques such as Operational Space Control
(OSC)~\cite{khatib1987unified}. But while these techniques account for internal geometries from the robot's kinematic structure, they assume simple Euclidean geometry in task spaces~\cite{Peters_AR_2008,UdwadiaGaussPrincipleControl2003}, failing to provide a complete treatment of the external geometries. 
As a result, obstacle avoidance, e.g., has to rely on \emph{extrinsic} potential functions, leading to undesirable deacceleartion behavior when the robot is close to the obstacle. If the non-Euclidean geometry can be \emph{intrinsically} considered, then fast obstacle avoidance motion would naturally arise as traveling along the induced geodesic. 
The need for a holistic solution to motion generation and control has motivated a number of recent system architectures tightly integrating planning and control~\cite{2017_rss_system,Mukadam-ICRA-17}.


We develop a new approach to synthesizing control policies that can 
accommodate and leverage the modeling capacity of 
intrinsically non-Euclidean robotics tasks. 
Taking inspiration from Geometric Control
Theory~\cite{bullo2004geometric},\footnote{ See
Appendix~\ref{apx:WhyGeometricMechanics} for a discussion of why geometric
mechanics and geometric control theory constitute a good starting point.} we
design a novel recursive algorithm, \flow, based on a recently proposed
mathematical object for representing nonlinear policies known as the Riemannian
Motion Policy (RMP)~\cite{ratliff2018riemannian}. This algorithm
enables the
geometrically consistent fusion of many component policies defined across non-Euclidean task spaces that are related through a tree structure.  We show that \flow, which
generates behavior by calculating how the robot should accelerate, 
mimics the 
Recursive Newton-Euler algorithm~\cite{walker1982efficient} in structure,
but generalizes it beyond rigid-body systems to a broader class of highly-nonlinear 
transformations and spaces.

In contrast to existing frameworks, 
our framework 
naturally models non-Euclidean task spaces with Riemannian metrics that are not
only configuration dependent, but also \emph{velocity} dependent.
This allows \flow to consider, e.g., the \emph{direction} a robot travels to define the importance weights in combing policies.
For example, an obstacle, despite
being close to the robot, can usually be ignored if robot is heading away from
it.
This new class of policies leads to an extension of Geometric Control Theory,
building on a new class of non-physical mechanical systems 
we call Geometric Dynamical Systems (GDS).

We also show that \flow is Lyapunov-stable and coordinate-free. In particular, when using \flow,
robots can be viewed each as different parameterizations of the same task space, defining a precise notion of behavioral
consistency between robots. 
Additionally, under this framework, the implicit curvature arising from
non-constant Riemannian metrics (which may be roughly viewed as position-velocity dependent inertia matrices in OSC) produces nontrivial and intuitive policy contributions
that 
are critical to guaranteeing stability and generalization across embodiments. 
%
Our experimental results illustrate how these { curvature terms} can be
impactful in practice, generating nonlinear geodesics that
result in curving or orbiting around obstacles. 
Finally, we demonstrate the utility of our framework with a fully reactive real-world system 
on multiple dual-arm manipulation problems.

\vspace{-3mm}
\section{Motion Generation and Control}
\vspace{-2mm}
Motion generation and control can be formulated as the problem of transforming curves from the configuration space $\CC$ to the task space $\TT$. 
Specifically, let 
$\CC$ be a $d$-dimensional smooth manifold. A robot's motion can be described as a curve $q:[0,\infty) \to \CC$ such that the robot's configuration at time $t$ is a point $q(t) \in \CC$.
Without loss of generality, suppose $\CC$ has a global coordinate $\q: \CC \to \R^d$, called the \emph{generalized coordinate}; for short, we would identify the curve $q$ with its coordinate 
and write $\q(q(t))$ as $\q(t) \in \R^d$. 
A typical example of the generalized coordinate is the joint angles of a $d$-DOF (degrees-of-freedom) robot: we denote $\q(t)$ as the joint angles at time $t$ and $\qd(t)$, $\qdd(t)$ as the joint velocities and accelerations. 
To describe the tasks, we consider another manifold $\TT$, the task space, which is related to the configuration space $\CC$ through a smooth \emph{task map} $\psi: \CC \to \TT$. The task space $\TT$ can be the end-effector position/orientation~\cite{khatib1987unified,albu2002cartesian}, or more generally can be a space that describes whole-body robot motion, e.g., in simultaneous tracking and collision avoidance~\cite{sentis2006whole,lo2016virtual}. 
Thus, the goal of motion generation and control is to design the curve $q$ so that the transformed curve $\psi \circ q $ exhibits desired behaviors on the task space $\TT$.

\vspace{-2mm}
\paragraph{Notation}
For clarity, we use boldface to distinguish the coordinate-dependent representations from abstract objects; e.g. we write $ q(t) \in \CC$ and $\q(t) \in \R^d$.  In addition, we will often omit the time- and input-dependency of objects unless necessary; e.g. we may write $ q \in \CC$ and $(\q,\qd, \qdd)$.
For derivatives, we use both symbols $\nabla$ and $\partial$, with a transpose relationship: for $\x \in \R^m$ and a differential map $\y:\R^m \to \R^n$, we write $\nabla_\x \y(\x) = \partial_\x \y(\x)^\t \in \R^{m \times n}$. 
For a matrix $\M \in \R^{m\times m}$, we denote $\mbb_i = (\Mb)_i$ as its $i$th column and $M_{ij} = (\Mb)_{ij}$ as its $(i,j)$ element. To compose a matrix, 
we use $(\cdot)_{\cdot}^\cdot$ for vertical (or matrix) concatenation  and $[\cdot]_{\cdot}^\cdot$ for horizontal concatenation. For example, we write $\M = [\mbb_i]_{i=1}^m = (M_{ij})_{i,j=1}^m$ and $\M^\t = (\mbb_i^\t)_{i=1}^m =  (M_{ji})_{i,j=1}^m$. We use $\R^{m\times m}_{+}$ and $\R^{m\times m}_{++}$ to denote the symmetric, positive semi-definite/definite matrices, respectively.

\vspace{-3mm}
\subsection{Motion Policies and the Geometry of Motion}
\vspace{-2mm}
We model motion as a second-order
differential equation\footnote{ We assume the system has been feedback linearized. 
A torque-based setup can be similarly derived by setting the robot inertia matrix as the intrinsic metric on $\CC$~\cite{Peters_AR_2008}.} 
of $\qdd = \pi(\q, \qd)$, where we call $\pi$ a \textit{motion policy} and $(\q, \qd)$  the \emph{state}. 
In contrast to an open-loop trajectory, which forms the basis
of many motion planners, a motion policy expresses the entire continuous collection of its integral trajectories{\ifLONG\footnote{An integral curve is the trajectory starting from a particular state.}\fi} and therefore is robust to perturbations. 
Motion policies can model many adaptive behaviors, such as reactive obstacle avoidance
~\cite{DRCIntegratedSystemTodorov2013,2017_rss_system} or responses driven by planned Q-functions~\cite{OptimalControlTheoryTodorov06}, and their  second-order formulation enables rich behavior that cannot be realized by the velocity-based approach~\cite{liegeois1977automatic}.

The geometry of motion has been considered by many planning and control algorithms.
Geometrical modeling of task spaces is used in topological motion
planning~\cite{VijayakumarTopologyMotionPlanning2013}, and motion optimization
has leveraged Hessian to exploit the natural geometry of costs~\cite{RatliffCHOMP2009,ToussaintTrajOptICML2009,Mukadam-ICRA-16,Dong-RSS-16}.
Ratliff et al.~\cite{RIEMORatliff2015ICRA}, e.g., use the workspace geometry inside a Gauss-Newton optimizer and generate natural obstacle-avoiding reaching motion through traveling along geodesics of curved spaces.

Geometry-aware motion policies were also developed in parallel in controls. OSC is the best example~\cite{khatib1987unified}. 
Unlike the planning approaches, OSC focuses on the internal geometry of the robot and considers only simple task-space geometry. 
It reshapes the workspace dynamics into a simple spring-mass-damper system with a {constant} inertia matrix, enforcing a form of Euclidean geometry in the task space.
Variants of OSC have been proposed to consider different metrics~\cite{Nakanishi_IJRR_2008,Peters_AR_2008,lo2016virtual}, task hierarchies~\cite{sentis2006whole,platt2011multiple}, and non-stationary inputs~\cite{IjspeertDMPs2013}.

While these algorithms have led to many advances, we argue that their isolated focus on either the internal or the external geometry limits the performance. The planning approach fails to consider reactive dynamic behavior; the control approach cannot model the effects of velocity dependent metrics, which are critical to generating sensible obstacle avoidance motions, as discussed in the introduction.
 While the benefits of velocity dependent metrics was recently explored using RMPs~\cite{ratliff2018riemannian}, a systematic understanding is still an open question.

\vspace{-4mm}
\section{Automatic Motion Policy Generation with \flow} \label{sec:RMP algebra}
\vspace{-2mm}
\flow is an efficient manifold-oriented computational graph for automatic generation of motion policies. It is aimed for problems with a task space $\TT = \{\TT_{l_i}\}$ that is related to the configuration space $\CC$ through a tree-structured task map $\psi$, where $\TT_{l_i}$ is the $i$th subtask.
Given user-specified motion policies $\{\pi_{l_i}\}$ on 
$\{\TT_{l_i}\}$ as RMPs, \flow is designed to \emph{consistently} combine these subtask policies into a global policy $\pi$ on $\CC$. To this end, \flow introduces 1) a data structure, called the \emph{\tree}, to describe the tree-structured task map $\psi$ and the policies, and 2) a set of operators, called the \emph{\algebra}, to propagate information across the \tree. 
To compute $\pi(\q(t),\qd(t))$ at time $t$, \flow operates in two steps: it first performs a \emph{forward pass} to propagate the state from the root node (i.e. $\CC$) to the leaf nodes (i.e. $\{\TT_{l_i}\}$); then it performs a \emph{backward pass} to propagate the RMPs from the leaf nodes to the root node{\ifLONG while tracking their geometric information to achieve consistency\fi}. 
These two steps are realized by recursive use of \algebra, exploiting shared computation paths arising from the tree structure to maximize efficiency.


\vspace{-4mm}
\subsection{Structured Task Maps}
\vspace{-2mm}
\begin{figure}[t]\vspace{-4mm}
	\centering
	\centering
	\subfloat[\label{fig:taskmap1}]{
		\includegraphics[trim={0 30 0 20},clip, width=0.25\columnwidth,keepaspectratio]{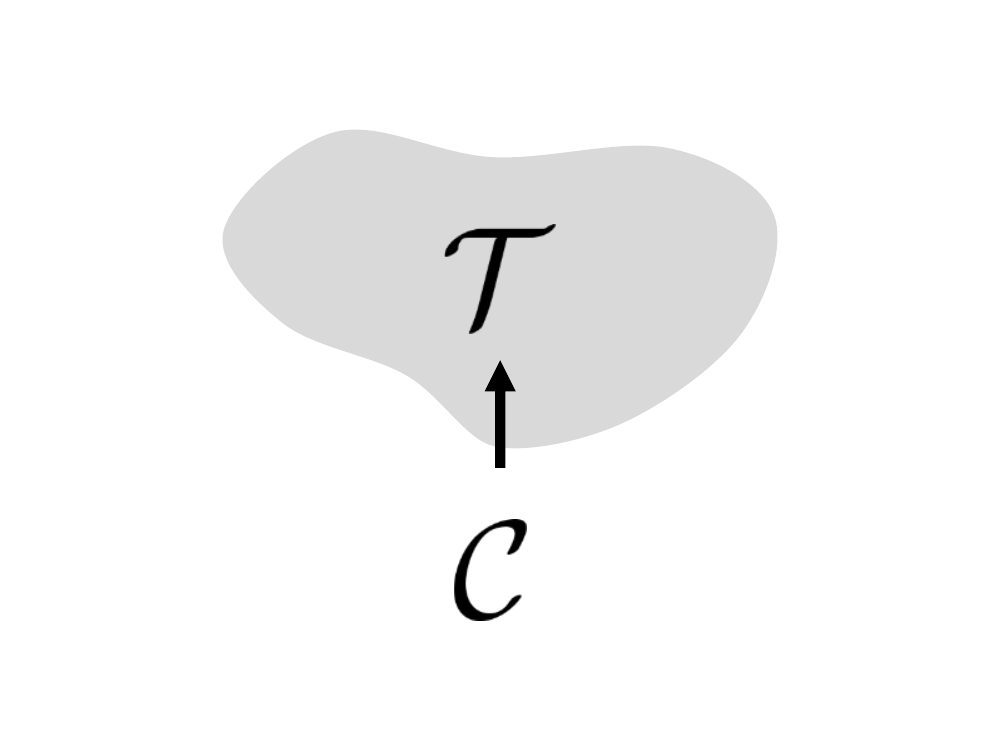}
	}
	\subfloat[\label{fig:taskmap2}]{
	\includegraphics[trim={0 30 0 20},clip, width=0.25\columnwidth,keepaspectratio]{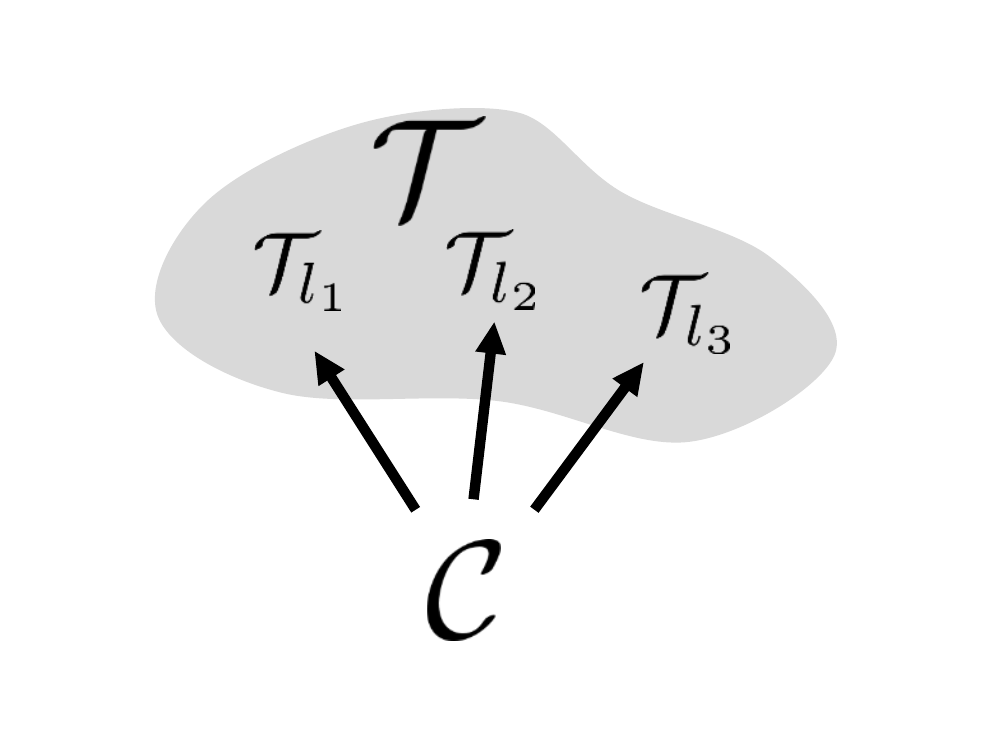}
	}
	\subfloat[\label{fig:taskmap3}]{
	\includegraphics[trim={0 20 0 20},clip, width=0.25\columnwidth,keepaspectratio]{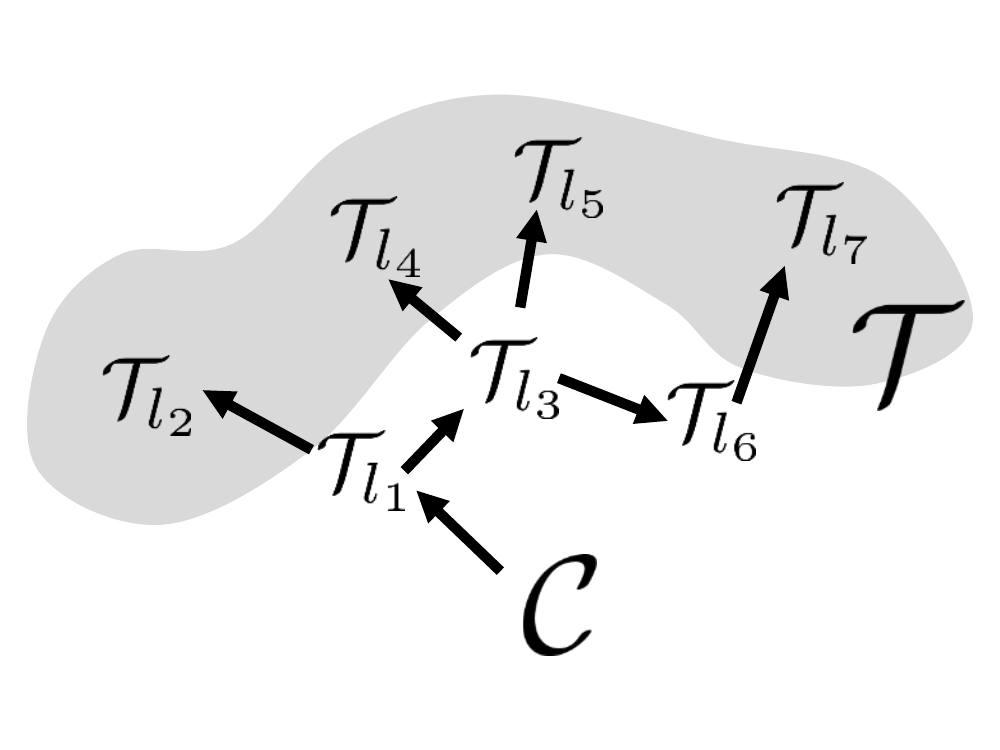}
	}
	\vspace{-2mm}
	\caption{Tree-structured task maps}
	\label{fig:taskmaps}
	\vspace{-6mm}
\end{figure}
In most cases, the task-space manifold $\TT$ is structured. In this paper, we consider the case where the task map $\psi$ can be expressed through a tree-structured composition of transformations $\{\psi_{e_i}\}$, where $\psi_{e_i}$ is the $i$th transformation. Fig.~\ref{fig:taskmaps} illustrates some common examples. Each node denotes a manifold and each edge denotes a transformation. This family trivially includes the unstructured task space $\TT$ (Fig.~\ref{fig:taskmaps}a) and the product manifold $\TT = \TT_{l_1} \times \dots \times \TT_{l_K}$ (Fig.~\ref{fig:taskmaps}b), where $K$ is the number of subtasks. A more interesting example is the kinematic tree (Fig.~\ref{fig:taskmaps}c), where, e.g., 
the subtask spaces on the leaf nodes can describe the tracking and obstacle avoidance tasks along a multi-DOF robot. 

The main motivation of explicitly handling the structure in the task map $\psi$ is two-fold. First, it allows \flow to exploit computation shared across different subtask maps. Second, it allows the user to focus on designing motion policies for each subtask individually, which is easier than directly designing a global policy for the entire task space $\TT$. For example, $\TT$ may describe the problem of humanoid walking, which includes staying balanced, scheduling contacts, and avoiding collisions. Directly parameterizing a policy to satisfy all these objectives can be daunting, whereas designing a policy for each subtask is more feasible. 

\vspace{-4mm}
\subsection{Riemannian Motion Policies (RMPs)}
\vspace{-1mm}

Knowing the structure of the task map is not sufficient for consistently combining subtask policies: we require some geometric information about the motion policies' behaviors~\cite{ratliff2018riemannian}. 
Toward this end, we adopt an abstract description of motion policies, called RMPs~\cite{ratliff2018riemannian},  for the nodes of the \tree. 
Specifically, let $\MM$ be an $m$-dimensional manifold with coordinate $\x \in \R^m$. 
The \emph{canonical form} of an RMP on $\MM$ is a pair $(\ab, \M)^\MM$, where 
$\ab : \R^m \times \R^m \to \R^m$ is a continuous motion policy and $\M: \R^m \times \R^m \to \R_+^{m\times m}$ is a differentiable map. 
Borrowing terminology from mechanics, we call $\ab(\x,\xd)$ the \emph{desired acceleration} and $\M(\x,\xd)$ the \emph{inertia matrix} at $(\x,\xd)$, respectively.\footnote{Here we adopt a slightly different terminology from~\cite{ratliff2018riemannian}. We note that $\M$ and $\f$ do not necessarily correspond to the inertia and force of a physical mechanical system. }	
$\M$ defines the directional importance of $\ab$ when it is combined with other motion policies. Later in Section~\ref{sec:analysis}, we will show that $\M$ is closely related to Riemannian metric, which describes how the space is stretched along the curve generated by $\ab$; when $\M$ depends on the state, the space becomes \emph{non-Euclidean}.
We additionally introduce a new RMP form, called the \emph{natural form}. Given an RMP in its canonical form $(\ab, \M)^\MM$, the natural form is a pair $[\f, \M]^\MM$, where $\f = \M \ab$ is the \emph{desired force} map. While the transformation between these two forms may look trivial, their distinction will be useful later when we introduce the \algebra. 

\vspace{-4mm}
\subsection{\tree}
\vspace{-1mm}

The \tree is the core data structure used by \flow. 
An \tree is a directed tree, in which each node represents an RMP and its state, and each edge corresponds to a transformation between manifolds. 
The root node of the \tree describes the global policy $\pi$ on $\CC$, and the leaf nodes describe the local policies $\{\pi_{l_i}\}$ on $\{\TT_{l_i}\}$. 
To illustrate, let us consider a node $u$ and its $K$ child nodes $\{v_i \}_{i=1}^K$. Suppose $u$ describes an RMP $[\f, \M ]^\MM$ and $v_i$ describes an RMP $ [\f_i, \M_i ]^{\NN_i}$, where $\NN_i = \psi_{e_i}(\MM)$ for some $\psi_{e_i}$. Then we write $u = ((\x,\xd), [\f, \M ]^\MM)$ and $v_i = ((\y_i,\yd_i), [\f_{i}, \M_i ]^{\NN_i})$; the edge connecting $u$ and $v_i$ points from  $u$ to $v_i$ along $\psi_{e_i}$. We will continue to use this example to illustrate how \algebra propagates the information across the \tree.

\subsection{\algebra} \label{sec:RMPAlgebra}
\vspace{-1mm}
The \algebra consists of three operators (\pushforward, \pullback, and \resolve) to propagate information.\footnote{Precisely it propagates the numerical values of RMPs and states at a particular time.} They form the basis of the forward and backward passes for automatic policy generation, described in the next section.
\begin{enumerate}
\item  \pushforward is the operator to forward propagate the \emph{state} from a parent node to its child nodes. Using the previous example, given $(\x,\xd)$ from $u$, it computes $(\y_i, \yd_i) = (\psi_{e_i}(\x) , \J_i (\x) \xd )$ for each child node $v_i$, where $\J_i = \partial_\x \psi_{e_i}$ is a Jacobian matrix. The name ``pushforward" comes from the linear transformation of tangent vector $\xd$ to the image tangent vector $\yd_i$. 

\item \pullback is the operator to backward propagate the natural-formed RMPs from the child nodes to the parent node. It is done by setting $[\f, \M ]^\MM$ with 
\begin{align} \label{eq:natural pullback}
\textstyle
\f = \sum_{i=1}^{K} \J_i^\t (\f_{i} - \M_i \dot{\J}_i \xd)
\qquad \text{and} \qquad
\M = \sum_{i=1}^{K} \J_i^\t \M_i \J_i
\end{align}
The name ``pullback" comes from the linear transformations of the cotangent vector (1-form) $\f_{i} - \M_i \dot{\J_i} \xd$  and the inertia matrix (2-form) $\M_i$.  
In summary, velocities can be pushfowarded along the direction of $\psi_i$, and forces and inertial matrices can be pullbacked in the opposite direction. 

To gain more intuition of \pullback, we write \pullback in the canonical form of RMPs. It can be shown that the canonical form $(\ab, \M)^{\MM}$ of the natural form $[\f,\M]^\MM$ above is the solution to a  least-squared problem:
\begin{align} \label{eq:least-square problem of pullback}
\ab 
&=\textstyle  
	\argmin_{\ab'} \frac{1}{2} \sum_{i=1}^{K} \norm{\J_i \ab' + \dot\J_i \xd -  \ab_{i} }_{\M_i}^2 
\end{align}
where $ \ab_i = \M_i^{\dagger}\f_i$ and $\norm{\cdot}_{\M_i}^2 = \lr{\cdot}{\M_i \cdot}$. Because $\ydd_i =  \J_i \xdd + \dot\J_i \xd$, \pullback attempts to find an $\ab$ that can realize the desired accelerations $\{\ab_{i}\}$ while trading off approximation errors with an importance weight defined by the inertia matrix $\M_i(\y_i,\yd_i)$. 
The use of state dependent importance weights is a distinctive feature of \flow. It allows \flow to activate different RMPs according to \emph{both} configuration and velocity (see Section~\ref{sec:example RMPs} for examples).
Finally, we note that the \pullback operator defined in this paper is slightly different from the original definition given in~\cite{ratliff2018riemannian}, which ignores the term $\dot\J_i \xd$ in~\eqref{eq:least-square problem of pullback}. While ignoring $\dot\J_i \xd$ does not necessary destabilize the system~\cite{lo2016virtual}, its inclusion is critical to implement consistent policy behaviors. {\ifLONG We will further explore this direction later in Section~\ref{sec:analysis} and \ref{sec:experiments}.\fi}

\item \resolve is the last operator of \algebra. It maps an RMP from its natural form to its canonical form. Given $[\f, \M]^{\MM}$, it outputs $(\ab, \M)^{\MM}$ with $\ab = \M^{\dagger} \f$, where $\dagger$ denotes Moore-Penrose inverse. The use of pseudo-inverse is because in general the inertia matrix is only positive semi-definite. Therefore, we also call the natural form of $[\f, \M]^{\MM}$ the \emph{unresolved form}, as potentially it can be realized by multiple RMPs in the canonical form. 
\end{enumerate}

\vspace{-4mm}
\subsection{Algorithm: Motion Policy Generation}
\vspace{-2mm}
Now we show how \flow uses the \tree and \algebra to generate a global policy $\pi$ on $\CC$\ifLONG{ from the user-specified subtask policies $\{\pi_{l_i}\}$ on $\{\TT_{l_i}\}$}\fi. Suppose each subtask policy is provided as an RMP. 
First, we construct an \tree with the same structure as $\psi$, where we assign subtask RMPs as the leaf nodes and the global RMP $[\f_r, \M_r]^\CC$ as the root node. 
With the \tree specified, \flow can perform automatic policy generation. At every time instance, it first performs a forward pass: it recursively calls \pushforward from the root node to the leaf nodes to update the state information in each node in the \tree. 
Second, it performs a backward pass: it recursively calls \pullback from the leaf nodes to the root node to back propagate the values of the RMPs in the natural form, and finally calls \resolve at the root node to transform the global RMP $[\f_r, \M_r]^\CC$ into its canonical form $(\ab_r, \M_r)^\CC$ for policy execution (i.e. setting $\pi(\q,\qd) = \ab_r$).

The process of policy generation of \flow uses the tree structure for computational efficiency. For $K$ subtasks, it has time complexity $O(K)$ in the worst case as opposed to $O(K\log K)$ of a naive implementation which does not exploit the tree structure.
Furthermore, all computations of \flow are carried out using matrix-multiplications, except for the final \resolve call, because the RMPs are expressed in the natural form in \pullback instead of the canonical form suggested originally in~\cite{ratliff2018riemannian}. 
This design makes \flow numerically stable, as only one matrix inversion $\M_r^\dagger \f_r$ is performed at the root node with both $\f_r$ and $\M_r$  in the span of the same Jacobian matrix due to \pullback. 

\vspace{-4mm}
\subsection{Example RMPs} \label{sec:example RMPs}
\vspace{-2mm}
We give a quick overview of some RMPs useful in practice (a complete discussion of these RMPs are postponed to Appendix~\ref{apx:Practice}). 
We recall from~\eqref{eq:least-square problem of pullback} that $\M$ dictates the directional importance of an RMP.

\vspace{-4mm}
\subsubsection{Collision/joint limit avoidance}
Barrier-type RMPs are examples that use velocity dependent inertia matrices, which can express importance as a function of robot heading (a property that traditional mechanical principles fail to capture).
Here we demonstrate a collision avoidance policy in the 1D distance space $x = d(\q)$ to an obstacle. Let $g(x,\dot{x}) = w(x)u(\dot{x}) > 0$ for some functions $w$ and $u$. We consider a motion policy such that $ m(x,\dot{x})\ddot{x}+ \frac{1}{2}\dot{x}^2\partial_x g(x,\dot{x}) 
= -\partial_x \Phi(x) - b\dot{x}$ and define its inertia matrix $m(x,\dot{x}) = g(x,\dot{x}) + \frac{1}{2}\dot{x}\partial_{\dot{x}}g(x,\dot{x})$, where $\Phi$ is a potential and $b>0$ is a damper.  
We choose $w(x)$ to increase as $x$ decreases
(close to the obstacle),  $u(\dot{x})$ to increase 
when $\dot{x}<0$ (moving toward the obstacle), and $u(\dot{x})$ to be constant when $\dot{x}\geq0$.
With this choice, the RMP can be turned off in \pullback when the robot heads away from the obstacle. 
This motion policy is a GDS and $g$ is its metric (cf. Section~\ref{sec:GDS}); the terms $\frac{1}{2}\dot{x}\partial_{\dot{x}}g(x, \dot{x})$ and $\frac{1}{2}\dot{x}^2\partial_x g(x, \dot{x})$ are due to non-Euclidean geometry and produce natural repulsive behaviors.

\vspace{-4mm}
\subsubsection{Target attractors}
Designing an attractor policy is relatively straightforward. For a task space with coordinate $\x$, we can consider an inertia matrix $\M(\x) \succ 0$ and a motion policy such that
$
\xdd = -\nabla\wt{\Phi} - \beta(\x)\xd - \M^{-1}\bm\xi_\M
$, where $\wt{\Phi}(\x) \approx \|\x\|$ is a smooth attractor potential, $\beta(\x)\geq0$ is a damper, and $\bm\xi_{\M}$ is a curvature term. 
It can be shown that this differential equation is also a GDS (see Appendix~\ref{apx:Attractors}).

\vspace{-4mm}
\subsubsection{Orientations} 
As \flow directly works with manifold objects, orientation controllers become straightforward to design, independent of the choice of coordinate (cf. Section~\ref{sec:geometric properties}). 
For example, we can define RMPs on a robotic link's surface in any preferred coordinate (e.g. in one or two axes attached to an arbitrary point) with the above described attractor to control the orientation.
This follows a similar idea outlined in the Appendix of~\cite{ratliff2018riemannian}.

\vspace{-4mm}
\subsubsection{Q-functions}
Perhaps surprising, RMPs can be constructed using Q-functions as metrics (we invite readers to read~\cite{ratliff2018riemannian} for details on how motion optimizers can be reduced to Q-functions and the corresponding RMPs).
While these RMPs may not satisfy the conditions of a GDS that we later analyze, they represent a broader class of RMPs that leads to substantial benefits (e.g. escaping local minima) in practice. Also, Q-functions are closely related to Lyapunov functions and geometric control~\cite{lewis2000geometry}; we will further explore this direction in future work.

\vspace{-4mm}
\section{Theoretical Analysis of \flow} \label{sec:analysis}
\vspace{-3mm}
\newcommand{\sdot}[2]{\overset{\lower0.1em\hbox{$\scriptscriptstyle #2$}}{#1}}

We investigate the properties of \flow when the child-node motion policies belong to a class of differential equations, which we call \emph{structured geometric dynamical systems} (structured GDSs). We present the following results.
\begin{enumerate}
	\item {\bf Closure}: We show that the \pullback operator retains a closure of structured GDSs. When the child-node motion policies are structured GDSs, the parent-node dynamics also belong to the same class.
	
	\item {\bf Stability}: Using the closure property, we provide sufficient conditions for the feedback policy of \flow to be stable. In particular, we cover a class of dynamics with \emph{velocity-dependent} metrics that are new to the literature.
	
	\item {\bf Invariance}: As its name suggests, \flow is closely related to differential geometry. We show that 
	\flow is intrinsically coordinate-free. This means that a set of subtask RMPs designed for one robot can be transferred to another robot  while maintaining the same task-space behaviors.  
\end{enumerate}


\vspace{-4mm}
\paragraph{Setup}
\ifLONG
Below we consider the manifolds in the nodes of the \tree to be finite-dimensional and smooth. Without loss of generality, for now we assume that each manifold can be described in a single chart, so that we can write down the equations concretely using finite-dimensional variables. This restriction will be removed when we presents the coordinate-free form in Section~\ref{sec:geometric properties}. We also assume that all the maps are sufficiently smooth so the required derivatives are well defined. 
The proofs of this section are provided in Appendix~\ref{app:proof of analysis}.
\else
We assume that all manifolds and maps are sufficiently smooth. 
For now, we also assume that each manifold  has a single chart; the coordinate-free analysis is postponed to Section~\ref{sec:geometric properties}. All the proofs are provided in Appendix~\ref{app:proof of analysis}.
\fi

\vspace{-4mm}
\subsection{Geometric Dynamical Systems (GDSs)} \label{sec:GDS}
\vspace{-2mm}
We define a new family of dynamics useful to specify RMPs on manifolds. 
Let manifold $\MM$ be $m$-dimensional with chart $(\MM, \x)$. Let $\Gb: \R^m \times \R^m \to \R^{m\times m}_{+}$, $\B: \R^m \times \R^m \to \R^{m\times m}_{+}$, and $\Phi: \R^m \to \R$.
The tuple $(\MM, \Gb, \B, \Phi)$ is called a \emph{GDS} if and only if
\begin{align} \label{eq:GDS}
\left(\Gb(\x,\xd) + \bm\Xi_{\G}(\x,\xd)\right) \xdd 
+ \bm\xi_{\G}(\x,\xd)  = - \nabla_\x \Phi(\x) - \Bb(\x,\xd)\xd,
\end{align}
where
\ifLONG{ let $\gb_{i}(\x,\xd)$ be the $i$th column of $\Gb(\x,\xd)$ and we define}\fi 
$\bm\Xi_{\G}(\x,\xd) \coloneqq \frac{1}{2} \sum_{i=1}^m  \dot{x}_i \partial_{\xd} \gb_{i}(\x,\xd)$, $\bm\xi_{\G}(\x,\xd) \coloneqq  \sdot{\Gb}{\x}(\x,\xd) \xd - \frac{1}{2} \nabla_\x (\xd^\t \Gb(\x,\xd) \xd)$, and
$\sdot{\Gb}{\xb}(\x,\xd) \coloneqq  [\partial_{\x}  \gb_{i} (\x,\xd) \xd]_{i=1}^m$. 
We refer to $\Gb(\x,\xd)$ as the \emph{metric} matrix,
$\B(\x,\xd)$ as the \emph{damping} matrix, and $\Phi(\x)$ as the \emph{potential} function which is lower-bounded. 
In addition, we define $\Mb(\x, \xd) \coloneqq \Gb(\x,\xd) + \bm\Xi_{\G}(\x,\xd)$ as the \emph{inertia} matrix, which can be asymmetric.
We say a GDS is \emph{non-degenerate} if $\M(\x,\xd) $ is nonsingular. We will assume~\eqref{eq:GDS} is non-degenerate so that it uniquely defines a differential equation and discuss the general case in Appendix~\ref{app:GDSs}.
$\Gb(\x,\xd)$ induces a metric of $\xd$, 
measuring its length as $\frac{1}{2} \xd^\t \Gb(\x,\xd) \xd$. 
When $\Gb(\x,\xd)$ depends on $\x$ and $\xd$, it also induces the \emph{curvature} terms $\bm\Xi(\x,\xd)$ and $\bm\xi(\x,\xd)$. 
In a particular case when $\G(\x,\xd) = \G(\x)$, the GDSs reduce to the widely studied \emph{simple mechanical systems} (SMSs)~\cite{bullo2004geometric},
$\Mb(\x) \xdd 
+  \C(\x,\xd)\xd + \nabla_\x \Phi(\x) = -\Bb(\x,\xd)\xd$; in this case $\Mb(\x) = \Gb(\x)$
and the Coriolis force $\Cb(\x,\xd) \xd$ is equal to $\bm\xi_{\Gb}(\x,\xd)$.
The extension to velocity-dependent $\G(\x,\xd)$ is important and non-trivial. 
As discussed in Section~\ref{sec:example RMPs}, it generalizes the dynamics of classical rigid-body systems, allowing the space to morph according to the velocity direction.

As its name suggests, GDSs possess geometric properties. Particularly, when $\Gb(\x,\xd)$ is invertible, the left-hand side of~\eqref{eq:GDS} is related to a quantity $\ab_{\G} = \xdd + \Gb(\x,\xd)^{-1} ( \bm\Xi_{\G}(\x,\xd) \xdd 
+ \bm\xi_{\G}(\x,\xd) ) $, known as the \emph{geometric acceleration} (cf. Section~\ref{sec:geometric properties}). 
In short, we can think of~\eqref{eq:GDS} as setting $\ab_{\G}$ along the negative natural gradient $-\G(\x,\xd)^{-1}\nabla_\x \Phi(\x)$ while imposing damping $-\G(\x,\xd)^{-1}\Bb(\x,\xd)\xd$.


\vspace{-4mm}
\subsection{Closure} \label{sec:consistency}
\vspace{-2mm}

	Earlier, we mentioned that by tracking the geometry in \pullback in~\eqref{eq:natural pullback}, the task properties can be preserved. Here, we 
formalize the consistency of \flow as a closure of differential equations, named structured GDSs. Structured GDSs augment GDSs with information on how the metric matrix factorizes. 
Suppose $\Gb$ has a structure $\SS$ that factorizes $\G(\x,\xd) = \J(\x)^\t \Hb(\y,\yd) \J(\x)$, where 
$\y: \x \mapsto \y(\x) \in \R^n$  and $\Hb: \R^n \times \R^n \to \R^{n\times n}_+$, and $\J(\x) = \partial_\x \y$. 
We say the tuple $(\MM, \G, \B, \Phi)_{\SS}$ is a \emph{structured GDS} if and only if
\begin{align} \label{eq:structured GDS}
\left(\Gb(\x,\xd) + \bm\Xi_{\G}(\x,\xd)\right) \xdd 
+ \bm\eta_{\G;\SS}(\x,\xd)  = - \nabla_\x \Phi(\x) - \Bb(\x,\xd)\xd 
\end{align}
where
$
\bm\eta_{\G;\SS}(\x,\xd) 
\coloneqq  \J(\x)^\t ( \bm\xi_{\Hb}(\y,\yd) + 
(\Hb(\y,\dot\y) + \bm\Xi_{\Hb}(\y,\yd) ) 
\dot\J(\x,\xd) \xd  )
$. 
Note the metric and factorization \emph{in combination} defines $\bm\eta_{\G;\SS}$. 
As a special case, GDSs are  structured GDSs with a \emph{trivial} structure (i.e. $\y =\x$). Also, structured GDSs reduce to GDSs (i.e. the structure offers no extra information) if $\G(\x,\xd)=\G(\x)$, or if $n,m=1$ (cf. Appendix~\ref{app:proof of consistency}).  
Given two structures, we say $\SS_a$ \emph{preserves} $\SS_b$ if $\SS_a$ has the factorization (of $\Hb$) made by $\SS_b$.
In Section~\ref{sec:geometric properties}, we will show that structured GDSs are related to a geometric object, pullback connection, which turns out to be the coordinate-free version of \pullback.

\ifLONG{Below we show the closure property: when the children of a parent node are structured GDSs, the parent node defined by \pullback is also a structured GDS with respect to the pullbacked structured metric matrix, damping matrix, and potentials.}\fi
\ifLONG{ Without loss of generality, }\else{To show the closure property, }\fi we consider a parent node on $\MM$ with $K$ child nodes on $\{\NN_i\}_{i=1}^K$. \ifLONG{ We omit the functions' input arguments for short, but we }\else{ We }\fi note that $\G_i$ and $\B_i$ can be functions of both $\y_i$ and $\yd_i$. \vspace{-4mm}
\begin{restatable}{theorem}{theoremConsistency} \label{th:consistency}
Let the $i$th child node follow $(\NN_i, \G_i, \B_i, \Phi_i)_{\SS_i}$ and have coordinate $\y_i$. 
Let $\fb_i = -\bm\eta_{\G_i;\SS_i} - \nabla_{\y_i} \Phi_i - \B_{i}\yd_i $ and $\M_i =\G_i + \bm\Xi_{\G_i}$.
If $[\fb,\Mb]^\MM$ of the parent node is given by \emph{\pullback} with $\{[\fb_i, \M_i]^{\NN_i} \}_{i=1}^K$ and $\Mb$ is non-singular, the parent node
follows 
$(\MM, \G, \B, \Phi)_\SS$, 
where $\Gb = \sum_{i=1}^{K}\J_i^\t\G_i\J_i$, $\B = \sum_{i=1}^{K}\J_i^\t \B_i \J_i$, $\Phi =  \sum_{i=1}^{K}\Phi_i \circ \y_i $, $\SS$ preserves $\SS_i$, and $\J_i = \partial_\x \y_i$.
Particularly, if $\G_i$ is velocity-free and the child nodes are GDSs, the parent node follows $(\MM, \G, \B, \Phi)$.
\end{restatable}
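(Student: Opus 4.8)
The plan is to verify directly that the pullback data $(\Gb,\B,\Phi,\SS)$ produced by \pullback satisfies the defining equation~\eqref{eq:structured GDS} of a structured GDS. I would start from the natural-form pullback formulas~\eqref{eq:natural pullback}: with $\fb_i = -\bm\eta_{\G_i;\SS_i} - \nabla_{\y_i}\Phi_i - \B_i\yd_i$ and $\M_i = \G_i + \bm\Xi_{\G_i}$, the parent receives $\fb = \sum_i \J_i^\t(\fb_i - \M_i\dot\J_i\xd)$ and $\Mb = \sum_i \J_i^\t\M_i\J_i$. The governing equation at the parent is $\Mb\xdd = \fb$, so I would substitute $\ydd_i = \J_i\xdd + \dot\J_i\xd$ and chase the algebra: $\Mb\xdd = \sum_i\J_i^\t\M_i\J_i\xdd = \sum_i\J_i^\t\M_i(\ydd_i - \dot\J_i\xd)$, which combined with the definition of $\fb$ gives $\sum_i \J_i^\t(\M_i\ydd_i + \bm\eta_{\G_i;\SS_i} + \nabla_{\y_i}\Phi_i + \B_i\yd_i) = 0$ after moving terms. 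I then need to recognize the pieces of this as exactly $(\Gb+\bm\Xi_{\G})\xdd + \bm\eta_{\G;\SS}(\x,\xd) + \nabla_\x\Phi + \B\xd$ with the claimed $\Gb,\B,\Phi,\SS$.

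The key identities to establish along the way are the chain-rule/compatibility facts: $\nabla_\x(\Phi_i\circ\y_i) = \J_i^\t\nabla_{\y_i}\Phi_i$ (so potentials add up correctly), $\sum_i\J_i^\t\B_i\J_i\xd = \sum_i\J_i^\t\B_i\yd_i$ (so damping matches), and crucially that the curvature terms transform so that $\bm\Xi_{\G} = \sum_i\J_i^\t\bm\Xi_{\G_i}\J_i$ modulo terms that get absorbed, and that $\bm\eta_{\G;\SS}$ — which is built from the factorization recorded in $\SS$ — equals $\sum_i\J_i^\t(\M_i\dot\J_i\xd + \bm\eta_{\G_i;\SS_i})$ after subtracting off the part of $\sum_i\J_i^\t\M_i\ydd_i$ that is not $\Mb\xdd$. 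Here I would lean on the fact that $\SS$ is defined to preserve each $\SS_i$: the structured metric of the parent factorizes through the same intermediate maps as the children's, and $\bm\eta_{\G;\SS}$ is by definition the pullback of those lower-level $\bm\xi_\Hb, \bm\Xi_\Hb$ terms, so the bookkeeping closes exactly rather than approximately. This is the step where the distinction between $\bm\xi_{\G}$ (generic GDS) and $\bm\eta_{\G;\SS}$ (structured GDS) earns its keep: if one used $\bm\xi_{\G}$ one would only get closure up to a discrepancy, whereas $\bm\eta_{\G;\SS}$ is tailored so the recursion is exact.

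For the final sentence — the GDS specialization when each $\G_i = \G_i(\y_i)$ is velocity-free and the children are genuine GDSs (trivial structure $\y_i = \x_i$, i.e. $\bm\eta_{\G_i;\SS_i} = \bm\xi_{\G_i}$) — I would invoke the remark already stated in the text that a structured GDS with $\G(\x,\xd) = \G(\x)$ reduces to a GDS, i.e.\ $\bm\eta_{\G;\SS} = \bm\xi_{\G}$ in that case regardless of the structure $\SS$ carried along. Since $\Gb = \sum_i\J_i^\t\G_i(\y_i)\J_i$ is a function of $\x$ alone (each $\G_i$ depends only on $\y_i$, and $\y_i,\J_i$ depend only on $\x$), the parent's structured GDS from the first part collapses to the plain GDS $(\MM,\G,\B,\Phi)$, with $\bm\Xi_{\G}$ computed from this $\x$-dependent $\Gb$. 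This part is essentially a corollary once the main closure computation and the ``$\G(\x)$ implies structure is vacuous'' lemma are in hand.

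The main obstacle I anticipate is the curvature-term bookkeeping in the middle paragraph: showing that $\sum_i\J_i^\t\M_i\ydd_i = \Mb\xdd + (\text{terms that assemble into }\bm\eta_{\G;\SS} - \sum_i\J_i^\t\bm\eta_{\G_i;\SS_i})$, i.e.\ that the ``leftover'' from commuting $\J_i$ past the second derivative, together with the $\dot\J_i\xd$ corrections and the children's own $\bm\eta_{\G_i;\SS_i}$, is precisely the structured curvature term the parent should have. Getting this to balance requires carefully expanding $\bm\Xi_{\G}$, $\bm\xi_{\G}$ (or rather $\bm\eta$) in terms of derivatives of $\gb_i$, applying the chain rule through $\y_i(\x)$ and $\yd_i = \J_i\xd$, and matching term by term — it is routine in principle but delicate, and it is where the precise definition of $\bm\eta_{\G;\SS}$ (with its nested $\bm\xi_\Hb$, $\bm\Xi_\Hb$, and $\dot\J\xd$ factor) has to be used exactly as written. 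I would organize it by first proving the purely metric-level identity relating $\bm\Xi$, $\bm\xi$ under a change of variables $\y = \y(\x)$ as a standalone lemma, then applying it summand-wise.
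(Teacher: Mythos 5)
Your proposal is correct and follows essentially the same route as the paper's proof: substitute the children's equations of motion into the \pullback formulas, match the left-hand sides via the exact identity $\bm\Xi_{\G} = \sum_i \J_i^\t \bm\Xi_{\G_i}\J_i$ (the paper's Lemma~\ref{lm:pullback of Xi} — note this holds exactly, not merely ``modulo absorbed terms''), and verify that $\bm\eta_{\G;\SS} = \sum_i \J_i^\t(\bm\eta_{\G_i;\SS_i} + \M_i\dot\J_i\xd)$ by unpacking the composite factorization $\bar\J = (\Lb_i\J_i)_i$. The velocity-free specialization via the standalone change-of-variables lemma for $\bm\xi$ is likewise exactly the paper's Lemma~\ref{lm:pullback of xi}.
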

\noindent Theorem~\ref{th:consistency} shows structured GDSs are closed under \pullback. 
It means that the differential equation of a structured GDS with a tree-structured task map can be computed by recursively applying \pullback from the leaves to the root. 
\begin{corollary}\label{cr:consistency}
If all leaf nodes follow GDSs and $\Mb_r$ at the root node is nonsingular, then the root node follows $(\CC, \G, \B, \Phi)_{\SS}$ as recursively defined by Theorem~\ref{th:consistency}.
\end{corollary}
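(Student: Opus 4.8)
\textbf{Proof proposal for Corollary~\ref{cr:consistency}.}

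The plan is to obtain the corollary as an immediate consequence of Theorem~\ref{th:consistency} by an induction on the structure of the \tree, working from the leaves toward the root. The base case is handled by the hypothesis: every leaf node is given as a GDS $(\TT_{l_i}, \G_{l_i}, \B_{l_i}, \Phi_{l_i})$, and since a GDS is a structured GDS with the trivial structure (as noted just after the definition of structured GDSs, taking $\y=\x$), each leaf trivially satisfies the hypotheses of Theorem~\ref{th:consistency}. For the inductive step, I would consider any internal node $u$ whose $K$ children $\{v_i\}_{i=1}^K$ have already been shown (by the inductive hypothesis) to follow structured GDSs $(\NN_i, \G_i, \B_i, \Phi_i)_{\SS_i}$. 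Since \flow computes the RMP at $u$ by invoking \pullback on the children's natural-form RMPs $\{[\fb_i,\M_i]^{\NN_i}\}$ — and, crucially, Theorem~\ref{th:consistency} identifies the force term $\fb_i = -\bm\eta_{\G_i;\SS_i}-\nabla_{\y_i}\Phi_i - \B_i\yd_i$ and inertia $\M_i = \G_i + \bm\Xi_{\G_i}$ of a structured GDS exactly with the quantities that \pullback consumes — Theorem~\ref{th:consistency} applies verbatim and yields that $u$ follows the structured GDS $(\MM_u,\G,\B,\Phi)_\SS$ with $\G=\sum_i \J_i^\t\G_i\J_i$, $\B=\sum_i \J_i^\t\B_i\J_i$, $\Phi=\sum_i \Phi_i\circ\y_i$, and $\SS$ preserving each $\SS_i$, provided the resulting inertia $\M_u$ is non-singular.

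Applying this inductive step along every edge of the tree, terminating at the root $\CC$, gives that the root node follows a structured GDS $(\CC,\G,\B,\Phi)_\SS$ whose metric, damping, and potential are the nested sums obtained by composing the leaf-level data through the chain of Jacobians — precisely the recursive definition in the statement of Theorem~\ref{th:consistency}. The only side condition that must be carried through the induction is the non-singularity of the inertia matrix at each node; here I would observe that the corollary's hypothesis only assumes $\Mb_r$ non-singular at the root, so strictly one either (i) includes non-degeneracy of every intermediate $\M_u$ as part of the running hypothesis (which is the standing non-degeneracy assumption of Section~\ref{sec:GDS}, made so that~\eqref{eq:GDS} and~\eqref{eq:structured GDS} define bona fide differential equations), or (ii) notes that the forward/backward passes of \flow only ever \resolve at the root, so intermediate singular inertias do not obstruct the computation and the structured-GDS bookkeeping of Theorem~\ref{th:consistency} still propagates the $(\G,\B,\Phi,\SS)$ data unchanged. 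I would state the argument under interpretation (i), consistent with the Setup paragraph.

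The step I expect to require the most care is not the induction itself — which is purely organizational once Theorem~\ref{th:consistency} is in hand — but rather checking that the recursion ``closes up'' correctly: namely that at an internal node which is itself a child of a higher node, the force term handed upward by \flow's \pullback is again of the form $-\bm\eta_{\G;\SS} - \nabla_\x\Phi - \B\xd$ with the \emph{same} $(\G,\B,\Phi,\SS)$ produced at that node. This is exactly the content of Theorem~\ref{th:consistency} (it asserts the parent \emph{follows} that structured GDS, i.e. its left-hand side is $(\G+\bm\Xi_\G)\xdd + \bm\eta_{\G;\SS}$ and its right-hand side is $-\nabla\Phi - \B\xd$), so the consistency is guaranteed — but it is worth making explicit in the write-up that it is precisely the closure statement, and not merely ``the parent is some GDS,'' that makes the multi-level recursion valid. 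The final clause of the corollary about the root following an unstructured GDS $(\CC,\G,\B,\Phi)$ when all $\G_i$ are velocity-free follows the same way from the second sentence of Theorem~\ref{th:consistency}, applied inductively.
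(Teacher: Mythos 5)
Your proposal takes essentially the same route as the paper: the corollary is obtained by recursively applying Theorem~\ref{th:consistency} from the leaves (which are structured GDSs with trivial structure) up to the root. The one point where you diverge is the handling of non-degeneracy at intermediate nodes. You correctly identify the tension --- Theorem~\ref{th:consistency} as stated assumes the parent inertia $\Mb$ is nonsingular, while the corollary only assumes $\Mb_r$ is nonsingular at the root --- but you then elect your option (i), carrying non-degeneracy of every intermediate $\M_u$ as a running hypothesis. That choice proves a strictly weaker statement than the corollary as written. The paper's resolution is precisely your option (ii): in the proof of Theorem~\ref{th:consistency} the left- and right-hand sides of~\eqref{eq:diff eq of pullback operator} and~\eqref{eq:diff eq of pullback structure GDS} are propagated \emph{separately} as algebraic identities in $(\fb,\Mb)$, never requiring one to solve for $\ydd_i$ at a child; hence the structured-GDS data $(\G,\B,\Phi,\SS)$ is well-defined and correctly transported at every node regardless of invertibility, and invertibility is only needed at the final \resolve step at the root, where the differential equation on $\CC$ is actually defined. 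Since you state option (ii) accurately before setting it aside, this is a matter of committing to the right branch rather than a missing idea --- but as written, interpretation (i) does not establish the corollary under its stated hypotheses.
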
\vspace{-2mm}




\vspace{-4mm}
\subsection{Stability} \label{sec:stability}
\vspace{-2mm}

By the closure property above, we analyze the stability of \flow  when the leaf nodes are (structured) GDSs. For compactness, we will abuse the notation to write $\Mb = \Mb_r$. Suppose $\Mb$ is nonsingular and let $(\CC, \G, \B, \Phi)_{\SS}$ be the resultant structured GDS at the root node. 
We consider a Lyapunov candidate
$V(\q, \qd) = \frac{1}{2} \qd^\t \G(\q,\qd) \qd + \Phi(\q)$
and derive its rate using properties of structured GDSs.
\begin{restatable}{proposition}{propositionLyapunovTimeDerivative}
 \label{pr:Lyapunov time derivative}
For $(\CC, \G, \B, \Phi)_{\SS}$,  $\dot\V(\q,\qd) = - \qd^\t \B(\q,\qd) \qd$. 
\end{restatable}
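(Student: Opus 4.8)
The plan is to compute $\dot V$ directly along trajectories of the structured GDS $(\CC,\G,\B,\Phi)_\SS$ and show that all terms cancel except $-\qd^\t\B\qd$. Writing $V(\q,\qd) = \frac12\qd^\t\G(\q,\qd)\qd + \Phi(\q)$, the chain rule gives
\begin{align*}
\dot V
= \qd^\t \G \qdd + \tfrac12 \dot{\overline{(\qd^\t \G \qd)}} \Big|_{\G\text{-arg}} + \nabla_\q\Phi(\q)^\t\qd,
\end{align*}
where the middle term must be handled carefully because $\G$ depends on \emph{both} $\q$ and $\qd$. First I would expand $\frac{d}{dt}\big(\tfrac12\qd^\t\G(\q,\qd)\qd\big)$ using the definitions of $\sdot{\G}{\q}$, $\bm\Xi_\G$, and $\bm\xi_\G$ from Section~\ref{sec:GDS}: the $\q$-derivative piece contributes $\tfrac12\qd^\t\sdot{\G}{\q}\qd$-type terms and the $\qd$-derivative piece contributes $\qd^\t\bm\Xi_\G\qdd$. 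The key algebraic identities to assemble are that $\tfrac{d}{dt}\big(\tfrac12\qd^\t\G\qd\big) = \qd^\t\bm\Xi_\G\qdd + \qd^\t\bm\xi_\G + \qd^\t\nabla_\q\Phi$-free remainder, i.e. that the time derivative of the kinetic term reproduces exactly the curvature quantities $\bm\Xi_\G$ and $\bm\xi_\G$ appearing on the left-hand side of the GDS equation~\eqref{eq:GDS}.

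Having expressed $\dot V$ as $\qd^\t(\G+\bm\Xi_\G)\qdd + \qd^\t\bm\xi_\G + \nabla_\q\Phi^\t\qd$, I would then substitute the structured GDS equation~\eqref{eq:structured GDS}. Here I must reconcile the fact that~\eqref{eq:structured GDS} carries $\bm\eta_{\G;\SS}$ rather than $\bm\xi_\G$. The step I expect to be the main obstacle is showing that $\qd^\t\bm\eta_{\G;\SS}(\q,\qd) = \qd^\t\bm\xi_\G(\q,\qd)$ for the structured metric $\G = \sum_i\J_i^\t\H_i\J_i$ — that is, that the structured curvature term, when contracted with $\qd$, agrees with the plain curvature term of the induced metric. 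This should follow from the chain-rule relation $\dot\J_i\qd = \sdot{(\J_i\qd)}{}$ together with the pushforward identity $\yd_i = \J_i\qd$ and a direct expansion of $\bm\xi_{\H_i}(\y_i,\yd_i)$ and $\bm\Xi_{\H_i}$ in terms of $\q$, using that $\J_i^\t$ and $\partial_\q$ interact through $\partial_\q(\J_i\qd) = \dot\J_i$ along the curve; essentially it is the statement that the "work done by curvature forces" is the same whether computed in the factored coordinates $\y_i$ or in $\q$. I would isolate this as a small lemma (it may already be implicit in the proof of Theorem~\ref{th:consistency}, since structured and unstructured GDSs share the same trajectories when contracted appropriately).

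Once that identification is in hand, substituting~\eqref{eq:structured GDS} yields
\begin{align*}
\dot V = \qd^\t\big(-\nabla_\q\Phi - \B\qd - \bm\eta_{\G;\SS}\big) + \qd^\t\bm\xi_\G + \nabla_\q\Phi^\t\qd = -\qd^\t\B(\q,\qd)\qd,
\end{align*}
where the $\nabla_\q\Phi$ terms cancel and the curvature terms cancel by the lemma. I would close by noting that non-degeneracy ($\M$ nonsingular) is what guarantees $\qdd$ is well-defined so the computation is legitimate, and that no sign or symmetry assumption on $\B$ beyond $\B\in\R^{m\times m}_+$ is needed for the identity itself (positive semidefiniteness only matters for the subsequent stability conclusion, not for this proposition). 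The whole argument is a finite chain-rule computation; the only genuinely non-routine point is the curvature-term identification between the structured and unstructured forms.
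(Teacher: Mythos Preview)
Your proposal is correct and follows essentially the same route as the paper: compute $\dot K$ to obtain $\qd^\t(\G+\bm\Xi_\G)\qdd + \qd^\t\bm\xi_\G$, substitute the structured GDS equation, and then cancel the curvature terms via the identity $\qd^\t\bm\eta_{\G;\SS} = \qd^\t\bm\xi_\G$. The paper isolates exactly the two sub-lemmas you anticipate (first $\tfrac12\qd^\t\sdot{\G}{\q}\qd = \qd^\t\bm\xi_\G$, then $\qd^\t\bm\xi_\G = \qd^\t\bm\eta_{\G;\SS}$), and your guess that the latter leans on the machinery from the proof of Theorem~\ref{th:consistency} is on the mark---it uses the explicit relation between $\bm\xi_\G$ and $\bm\eta_{\G;\SS}$ derived there (Lemma~\ref{lm:pullback of xi}) and observes that the antisymmetric discrepancy $\J^\t\bm\Xi_\Hb\dot\J - \dot\J^\t\bm\Xi_\Hb^\t\J$ vanishes when contracted with $\qd$ on both sides.
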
\vspace{-1mm}
\noindent Proposition~\ref{pr:Lyapunov time derivative} directly implies the stability of structured GDSs by invoking LaSalle's invariance principle~\cite{khalil1996noninear}. Here we summarize the result without proof.
\begin{corollary} \label{cr:stability}
For $(\CC, \G, \B, \Phi)_{\SS}$, if $\G(\q,\qd), \B(\q,\qd) \succ 0 $,  the system converges to a forward invariant set $\CC_\infty \coloneqq \{(\q,\qd) : \nabla_\q \Phi(\q) = 0, \qd = 0 \}$. 
\end{corollary}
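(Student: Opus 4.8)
The plan is to invoke LaSalle's invariance principle~\cite{khalil1996noninear} with the Lyapunov candidate $V(\q,\qd) \coloneqq \frac12\qd^\t\G(\q,\qd)\qd + \Phi(\q)$ and the rate $\dot V = -\qd^\t\B(\q,\qd)\qd$ supplied by Proposition~\ref{pr:Lyapunov time derivative}. First I would observe that $V$ is bounded below: $\G(\q,\qd)\succ 0$ makes the kinetic term nonnegative and $\Phi$ is lower-bounded by assumption. Since $\B(\q,\qd)\succ 0$, $\dot V\le 0$, so $V$ is non-increasing along any solution and the solution starting at $(\q_0,\qd_0)$ remains in the sublevel set $\Omega\coloneqq\{(\q,\qd):V(\q,\qd)\le V(\q_0,\qd_0)\}$, which is therefore positively invariant. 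Under the standing regularity that makes $\Omega$ compact (e.g. $\Phi$ coercive and $\G$ uniformly bounded below on $\Omega$, or the analysis localized to a bounded region of $\CC$), every solution is precompact and LaSalle applies.

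Next I would identify $\{(\q,\qd)\in\Omega:\dot V(\q,\qd)=0\}$. Because $\B(\q,\qd)\succ 0$, the identity $\qd^\t\B(\q,\qd)\qd=0$ forces $\qd=0$, so this set equals $E\coloneqq\{(\q,0)\in\Omega\}$, and LaSalle asserts convergence to the largest positively invariant set $M\subseteq E$.

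The crux is to show $M\subseteq\CC_\infty$. Along any solution contained in $E$ we have $\qd(t)\equiv 0$, hence $\qdd(t)\equiv 0$. Substituting $\qd=\qdd=0$ into the governing equation~\eqref{eq:structured GDS} of $(\CC,\G,\B,\Phi)_\SS$: the inertia term $(\G+\bm\Xi_\G)\qdd$ is zero; $\bm\eta_{\G;\SS}$ is zero, since $\bm\xi_{\Hb}(\y,\yd)$ vanishes at $\yd=\J\qd=0$ (each term of its definition in Section~\ref{sec:GDS} vanishes when its velocity argument is zero) and the factor $\dot\J\qd$ vanishes at $\qd=0$; and the damping term $\B\qd$ is zero. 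What remains is $0=-\nabla_\q\Phi(\q)$. Thus every point of $M$ lies in $\CC_\infty=\{(\q,\qd):\nabla_\q\Phi(\q)=0,\ \qd=0\}$. Conversely, at any point of $\CC_\infty$ the same substitution leaves $\G(\q,0)\qdd=0$, which with $\G(\q,0)\succ 0$ gives $\qdd=0$; hence $\CC_\infty$ consists of equilibria and is forward invariant. Combining, $\mathrm{dist}\big((\q(t),\qd(t)),\,\CC_\infty\big)\to 0$, which is the claim.

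The main obstacle is not the invariance argument itself but ensuring LaSalle's hypotheses actually hold: one needs a compact positively invariant set, i.e. bounded sublevel sets of $V$, which is not automatic and should be stated as a standing assumption (properness of $\Phi$ together with uniform definiteness of $\G$, or restriction to a compact domain). A secondary point requiring care is the term-by-term verification that $\bm\Xi_\G$, $\bm\xi_\G$ (equivalently $\bm\eta_{\G;\SS}$), and the Coriolis-type curvature contributions all vanish identically when $\qd=0$; this is immediate from their explicit definitions in Section~\ref{sec:GDS} but is the one place where the structure of GDSs genuinely enters.
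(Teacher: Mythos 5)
Your proof is correct and follows exactly the route the paper intends: the paper states this corollary \emph{without} proof, as a direct application of LaSalle's invariance principle to the rate $\dot V = -\qd^\t\B(\q,\qd)\qd$ supplied by Proposition~\ref{pr:Lyapunov time derivative}. The details you supply---that LaSalle requires a compact positively invariant sublevel set (which lower-boundedness of $\Phi$ alone does not guarantee and should be a standing assumption), and that $\bm\eta_{\G;\SS}$ and $\bm\Xi_{\G}$ vanish at $\qd=0$ so that the largest invariant subset of $\{\dot V=0\}$ is precisely $\CC_\infty$---are exactly the steps the paper elides, and they are carried out correctly.
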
\vspace{-1mm}
To show the stability of \flow, we need to further check when the assumptions in Corollary~\ref{cr:stability} hold. 
The condition  $\B(\q,\qd) \succ 0 $ is easy to satisfy: by Theorem~\ref{th:consistency},  
$\B(\q,\qd) \succeq 0$; to strictly ensure definiteness, we can copy $\CC$ into an additional child node with a (small) positive-definite damping matrix. The condition on $\Gb(\q,\qd) \succ 0 $ can be satisfied similarly.
In addition, we need to verify the assumption that $\M$ is nonsingular. Here we provide a sufficient condition. When satisfied, it implies the global stability of \flow. 
\vspace{-1mm}
\begin{restatable}{theorem}{theoremVelocityMetric}
\label{th:condition on velocity metric}
Suppose every leaf node is a GDS with a metric matrix in the form
$\Rb(\x) +  \Lb(\x)^\t \D(\x, \xd) \Lb(\x)$ for differentiable functions $\Rb$, $\Lb$, and $\D$ satisfying
\ifLONG
\begin{align*}
\Rb(\x) \succeq 0,\qquad \D(\x,\xd) = \diag
((d_{i}(\x,\dot{y}_i))_{i=1}^n) \succeq 0, \qquad \dot{y}_i \partial_{\dot{y}_i} d_{i}(\x,\dot{y}_i) \geq 0 
\end{align*}
\else
$\Rb(\x) \succeq 0$, $\D(\x,\xd) = \diag
((d_{i}(\x,\dot{y}_i))_{i=1}^n) \succeq 0$, and $ \dot{y}_i \partial_{\dot{y}_i} d_{i}(\x,\dot{y}_i) \geq 0$,
\fi
where $\x$ is the coordinate of the leaf-node manifold and $\yd = \Lb \xd \in \R^n$. 
It holds $\bm\Xi_\G(\q,\qd) \succeq 0$. If further $\G(\q,\qd), \B(\q,\qd) \succ 0$, then $\M\in\R^{d\times d}_{++}$, and the global RMP generated by \flow converges to the forward invariant set $\CC_\infty$ in Corollary~\ref{cr:stability}.
\end{restatable}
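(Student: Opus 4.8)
The plan is to prove the three claims in sequence, putting the main effort into establishing $\bm\Xi_\G(\q,\qd)\succeq 0$ at the root node; positive-definiteness of $\M$ and the convergence statement then follow quickly from Corollaries~\ref{cr:consistency} and~\ref{cr:stability}.

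First I would treat a single leaf node. Write $\Lb(\x)\in\R^{n\times m}$ with rows $\rho_1(\x)^\t,\dots,\rho_n(\x)^\t$, so that the diagonal structure of $\D$ gives $\Lb^\t\D\Lb=\sum_{k=1}^n d_k(\x,\dot y_k)\,\rho_k\rho_k^\t$ with $\dot y_k=\rho_k^\t\xd$, i.e. $\G(\x,\xd)=\Rb(\x)+\sum_k d_k(\x,\rho_k^\t\xd)\,\rho_k\rho_k^\t$. Since $\Rb$ and the $\rho_k$ are velocity-free, differentiating the $j$th column of $\G$ in $\xd$ by the chain rule through the scalar $\dot y_k$ gives $\partial_{\xd}\gb_j=\sum_k(\partial_{\dot y_k}d_k)(\rho_k)_j\,\rho_k\rho_k^\t$, hence
\begin{align*}
\bm\Xi_\G(\x,\xd)=\tfrac12\sum_{j=1}^m\dot x_j\,\partial_{\xd}\gb_j
=\tfrac12\sum_{k=1}^n\Big(\sum_{j=1}^m\dot x_j(\rho_k)_j\Big)(\partial_{\dot y_k}d_k)\,\rho_k\rho_k^\t
=\tfrac12\sum_{k=1}^n\dot y_k\,\partial_{\dot y_k}d_k(\x,\dot y_k)\,\rho_k\rho_k^\t .
\end{align*}
Each summand is the nonnegative scalar $\dot y_k\,\partial_{\dot y_k}d_k\geq0$ times the rank-one PSD matrix $\rho_k\rho_k^\t$, so $\bm\Xi_\G\succeq0$ at every leaf, and it is symmetric.

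Next I would lift this to the root via Theorem~\ref{th:consistency}. For any parent with children $\{v_i\}$ and Jacobians $\J_i=\partial_\x\y_i$, Theorem~\ref{th:consistency} identifies the parent as a structured GDS with $\G=\sum_i\J_i^\t\G_i\J_i$ and inertia $\M=\G+\bm\Xi_\G$, while \pullback (Eq.~\eqref{eq:natural pullback}) sets $\M=\sum_i\J_i^\t\M_i\J_i$. Subtracting, $\bm\Xi_\G=\sum_i\J_i^\t(\M_i-\G_i)\J_i=\sum_i\J_i^\t\,\bm\Xi_{\G_i}\,\J_i$, so symmetry and positive semi-definiteness of the children's curvature terms pass to the parent. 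By induction over the \tree (base case: the leaves, handled above), $\bm\Xi_\G(\q,\qd)=\sum_{\text{leaves }i}\J_{l_i}^\t\,\bm\Xi_{\G_{l_i}}\,\J_{l_i}\succeq0$ at the root and is symmetric, which is the first assertion.

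Finally, with $\G(\q,\qd)\succ0$ by hypothesis and $\bm\Xi_\G(\q,\qd)\succeq0$ symmetric, the inertia $\M=\G+\bm\Xi_\G$ is symmetric positive definite, i.e. $\M\in\R^{d\times d}_{++}$, hence nonsingular everywhere along trajectories. Then Corollary~\ref{cr:consistency} applies: the global RMP output by \resolve at the root, $\ab_r=\M^{-1}\f_r$, is exactly the structured GDS $(\CC,\G,\B,\Phi)_\SS$; since $\G,\B\succ0$, Corollary~\ref{cr:stability} gives convergence to $\CC_\infty$. The main obstacle is the leaf-node computation above: one must push the chain rule carefully through the scalar arguments $\dot y_k$ and exploit the diagonal form of $\D$ to obtain the clean rank-one decomposition — it is precisely this structure together with the sign condition $\dot y_k\,\partial_{\dot y_k}d_k\geq0$ that forces $\bm\Xi_\G$ to be PSD (and symmetric), which is what makes the rest of the argument go through.
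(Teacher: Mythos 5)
Your proposal is correct and follows essentially the same route as the paper: the leaf-node computation $\bm\Xi_\G=\tfrac12\sum_k\dot y_k\,\partial_{\dot y_k}d_k\,\rho_k\rho_k^\t$ is exactly the paper's $\Lb^\t\bm\Xi_\D\Lb$ written as a rank-one sum, the propagation to the root is the congruence $\bm\Xi_\G=\sum_i\J_i^\t\bm\Xi_{\G_i}\J_i$ established in the proof of Theorem~\ref{th:consistency} (Lemma~\ref{lm:pullback of Xi}), and the conclusion via $\M=\G+\bm\Xi_\G\succ0$ and Corollaries~\ref{cr:consistency} and~\ref{cr:stability} is identical. Your explicit remark that $\bm\Xi_\G$ is symmetric here (so $\M\in\R^{d\times d}_{++}$ in the symmetric sense) is a small but welcome addition the paper leaves implicit.
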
\vspace{-1mm}
\noindent A particular condition in Theorem~\ref{th:condition on velocity metric} is when all the leaf nodes with velocity dependent metric are 1D. Suppose $x\in\R$ is its coordinate and $g(x,\dot{x})$ is its metric matrix. The sufficient condition essentially boils down to $g(x,\dot{x})\geq0$ and $\dot{x} \partial_{\dot{x}} g(x,\dot{x})\geq 0 $. This means that, given any  $x \in \R$, $g(x,0) = 0$, $g(x,\dot{x})$ is non-decreasing when $\dot{x}>0$, and non-increasing when $\dot{x}<0$. 
This condition is satisfied by the collision avoidance policy in Section~\ref{sec:example RMPs}.

\vspace{-4mm}
\subsection{Invariance }  \label{sec:geometric properties}
\vspace{-2mm}

\newcommand{\lsup}[2]{^{\scriptstyle #2}{#1}}
\newcommand{\pr}{\mathrm{pr}}
\newcommand{\ppartial}[1]{\frac{\partial}{\partial #1}}
\def\conn{\lsup{\nabla}{G}}
\newcommand{\Conn}[1]{\lsup{\nabla}{#1}}
\def\d{\mathrm{d}}

We now discuss the coordinate-free geometric properties of $(\CC, \G, \B, \Phi)_\SS$ generated by  \flow. Due to space constraint, we only summarize the results 
(please see  Appendix~\ref{app:coordinate-free notation} and,
e.g.,~\cite{lee2009manifolds}). Here we assume that $\G$ is positive-definite. 

We first present the coordinate-free version of GDSs (i.e. the structure is trivial) by using a geometric object called \emph{affine connection}, which defines how tangent spaces on a manifold are related.
Let $T\CC$ denote the tangent bundle of $\CC$, which is a natural manifold to describe the state space. 
We first show that a GDS on $\CC$ can be written in terms of a unique, asymmetric affine connection $\conn$ that is compatible with a Riemannian metric $G$ (defined by $\G$) on $T\CC$. It is important to note that $G$ is defined on $T\CC$ \emph{not} the original manifold $\CC$. As the metric matrix in a GDS can be velocity dependent, we need a larger manifold.

\begin{restatable}{theorem}{theoremGeometricAcceleration} \label{th:geometric acceleration}
Let $G$ be a Riemannian metric on $T\CC$ such that, for $s = (q,v) \in T\CC$,  $G(s) = G^v_{ij}(s) dq^i \otimes  d q^j +  G^a_{ij} dv^i \otimes  dv^j$, where $G^v_{ij}(s)$ and $G^a_{ij}$ are symmetric and positive-definite, and $G^v_{ij}(\cdot)$ is differentiable.
Then there is a unique affine connection $\conn$ that is compatible with $G$ and satisfies, 
$\Gamma_{i,j}^k = \Gamma_{ji}^k$, 
$\Gamma_{i,j+d}^k = 0$, 
and $\Gamma_{i+d,j+d}^k = \Gamma_{j+d,i+d}^k$, for $i,j = 1,\dots,d$ and $k = 1,\dots, 2d$.
In coordinates, if $G_{ij}^v(\dot{q})$ is identified as $\G(\q,\qd)$, then 
 $\pr_{3} (\conn_{\ddot{q}} \ddot{q})$ can be written as $\ab_\G \coloneqq \qdd +  \Gb(\q,\qd)^{-1} (\bm\xi_{\G}(\q,\qd) + \bm\Xi_{\G}(\q,\qd) \qdd )$, where $\pr_{3}: (\q,\vb,\ub,\ab) \mapsto \ub$ is a projection.
\end{restatable}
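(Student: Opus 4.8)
The plan is to construct the affine connection explicitly from the metric $G$ on $T\CC$ by writing down its Christoffel symbols, verify that the prescribed symmetry/vanishing pattern forces a unique choice, and then read off the coordinate expression of the geometric acceleration. I would work throughout in a local chart, using coordinates $(q^1,\dots,q^d,v^1,\dots,v^d)$ on $T\CC$, and index block components by ranges $1,\dots,d$ (the ``$q$'' block) and $d+1,\dots,2d$ (the ``$v$'' block). The metric, by hypothesis, is block-diagonal with blocks $G^v_{ij}(q,v)$ and $G^a_{ij}$ (constant), the first depending on both $q$ and $v$, which is precisely what allows velocity dependence of $\G$ while keeping a genuine Riemannian metric on the larger manifold $T\CC$.

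First I would recall that \emph{compatibility} of $\nabla$ with $G$ means $\partial_a G_{bc} = \Gamma^e_{ab} G_{ec} + \Gamma^e_{ac} G_{eb}$ for all indices $a,b,c\in\{1,\dots,2d\}$, and that this is $d(2d)(2d+1)/2$-many linear equations in the $\Gamma^k_{ab}=\Gamma^k_{(ab)}$-unknowns when one additionally imposes a full symmetry $\Gamma^k_{ab}=\Gamma^k_{ba}$ (the Levi-Civita case). Here, however, we do \emph{not} impose full symmetry: the three stated conditions are $\Gamma^k_{ij}=\Gamma^k_{ji}$ on the $q$--$q$ block, $\Gamma^k_{i,j+d}=0$ on the mixed $q$--$v$ block, and $\Gamma^k_{i+d,j+d}=\Gamma^k_{j+d,i+d}$ on the $v$--$v$ block, for all $k=1,\dots,2d$. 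So the unknowns are: the symmetric $q$--$q$ symbols, the (unconstrained) mixed symbols $\Gamma^k_{i+d,j}$ (note $\Gamma^k_{j,i+d}=0$ but $\Gamma^k_{i+d,j}$ is free — the connection is asymmetric exactly here), and the symmetric $v$--$v$ symbols. I would then plug the block-diagonal form of $G$ into the compatibility equations and show they decouple into three groups according to which block of $G$ appears: equations with $a,b,c$ all in $1,\dots,d$ involve only $G^v$ and the $q$--$q$ symbols; equations touching the $v$-block of $G$ (which is constant) combined with the $v$-dependence of $G^v$ pin down the mixed symbols; and the remaining equations involving $G^a$ (constant) force the $v$--$v$ symbols to vanish along with part of the mixed ones. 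Counting equations against unknowns in each decoupled group, one gets a square nonsingular linear system (the usual Koszul-type argument: symmetrize the three cyclic permutations of the compatibility identity and the unwanted terms cancel because of the imposed symmetry pattern), yielding existence and uniqueness. Concretely I expect $\Gamma^k_{ij}$ (for $k\le d$) to come out as the usual Christoffel symbols $\frac12 G^{v,k\ell}(\partial_i G^v_{\ell j}+\partial_j G^v_{\ell i}-\partial_\ell G^v_{ij})$ with the partials being $\partial/\partial q^\cdot$, while $\Gamma^{k}_{i+d,j}$ (again $k\le d$) picks up the $\partial/\partial v$-derivatives of $G^v$, and all symbols with a top index in the $v$-block are zero.

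Having the $\Gamma$'s, the second half is a computation: the geodesic-type operator $\nabla_{\dot\gamma}\dot\gamma$ for the lift $\gamma(t)=(q(t),\dot q(t))$ in $T\CC$ has components $\ddot\gamma^k+\Gamma^k_{ab}\dot\gamma^a\dot\gamma^b$; taking $\gamma=(q,\dot q)$ so that $\dot\gamma=(\dot q,\ddot q)$ and $\ddot\gamma=(\ddot q,\dddot q)$, the ``third block'' projection $\pr_3$ extracts the $v$-velocity component $\dddot q$ — wait, more precisely one evaluates $\pr_3(\conn_{\ddot q}\ddot q)$ where $\ddot q$ here denotes the second-order-tangent element $(q,v,v,a)$ with $v=\dot q$, $a=\ddot q$; its components in the $v$-block are $a^k+\Gamma^k_{ij}v^iv^j+2\Gamma^k_{i+d,j}a^iv^j+\Gamma^k_{i+d,j+d}a^ia^j$. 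Substituting $\Gamma^k_{i+d,j+d}=0$, identifying $G^v_{ij}(\dot q)$ with $\G(\q,\qd)$, and collecting the $q$-partials of $\G$ into $\bm\xi_\G$ via the identity $\Gb^{-1}\bm\xi_\G = \Gb^{-1}(\sdot{\Gb}{\x}\qd - \tfrac12\nabla_\q(\qd^\t\Gb\qd))$ (which is exactly the Christoffel contraction $G^{v,k\ell}\Gamma^v$-term written invariantly) and the $v$-partials into $\Gb^{-1}\bm\Xi_\G\qdd$ via $\bm\Xi_\G=\tfrac12\sum_i \dot x_i\partial_{\xd}\gb_i$, one obtains exactly $\ab_\G=\qdd+\Gb(\q,\qd)^{-1}(\bm\xi_\G(\q,\qd)+\bm\Xi_\G(\q,\qd)\qdd)$.

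The main obstacle I anticipate is the bookkeeping in the decoupled-linear-system step: one has to be careful that the asymmetry is placed in precisely the right block (mixed with $v$-index \emph{first}), that the imposed symmetries are exactly enough — no more, no fewer — to make the Koszul cancellation go through and leave a uniquely solvable system, and that the ``constant $G^a$'' block really does force the $v$--$v$ and half the mixed symbols to zero rather than introducing extra freedom. A secondary (purely computational) obstacle is matching the resulting contractions of $\partial_q G^v$ and $\partial_v G^v$ to the paper's $\bm\xi_\G$ and $\bm\Xi_\G$ notation; this is routine index manipulation but needs the definitions $\sdot{\Gb}{\x}\coloneqq[\partial_\x\gb_i\,\xd]_i$ and $\bm\xi_\G\coloneqq\sdot{\Gb}{\x}\xd-\tfrac12\nabla_\x(\xd^\t\Gb\xd)$ to be unwound carefully. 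Everything else — smoothness, the fact that $\Phi$ and $\B$ play no role here (this theorem is only about the metric/connection side), and the invariance of $\pr_3\circ\conn$ under coordinate change — follows from standard manifold theory and can be cited.
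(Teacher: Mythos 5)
Your overall strategy is the same as the paper's: work in a chart, exploit the block-diagonal form of $G$ on $T\CC$, propose Christoffel symbols block by block (a Levi-Civita-type $q$--$q$ block built from $\partial_q G^v$, a mixed block with the $v$-index first built from $\partial_v G^v$, and everything with upper index in the $v$-block equal to zero), then compute $\pr_3(\conn_{\ddot{q}}\ddot{q})$ and match the two contractions to $\bm\xi_\G$ and $\bm\Xi_\G\qdd$. Your explicit symbols and the final identification with $\ab_\G$ agree with the paper's (one small slip: you write the mixed contribution as $2\Gamma^k_{i+d,j}a^iv^j$, but the two mixed positions are not interchangeable here since $\Gamma^k_{i,j+d}=0$ while $\Gamma^k_{i+d,j}\neq 0$; with $\Gamma^k_{i+d,j}=\tfrac12\sum_l G^{v\sharp}_{kl}\partial_{v_i}G^v_{lj}$ the single term already reproduces $\G^{-1}\bm\Xi_\G\qdd$, so no factor of $2$).

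The genuine gap sits exactly at the step you flagged as your main obstacle: the claim that the decoupled compatibility equations form a square nonsingular system. The $q$--$q$ and $v$--$v$ groups do close by the usual Koszul cancellation, but the cross equations of the form $\partial_{v_i}G_{j,l+d}=0$ yield only
$\sum_{m}\Gamma^{m+d}_{i+d,j}G^a_{ml}+\sum_{e\le d}\Gamma^{e}_{i+d,l+d}G^v_{ej}=0$,
which constrains a single linear combination of the $v$-upper mixed symbols and the $q$-upper $v$--$v$ symbols rather than pinning each to zero. Already for $d=1$ with $G=g(q,v)\,dq^2+a\,dv^2$, the choice $\Gamma^2_{21}=t$, $\Gamma^1_{22}=-ta/g$ is compatible and satisfies the stated symmetry pattern for \emph{any} function $t$, so this block is underdetermined and the linear-system argument does not by itself deliver the specific solution, nor uniqueness. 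The paper handles this differently: it establishes existence by positing the candidate symbols (with those residual blocks set to zero) and verifying $\conn_Z G(X,Y)=G(\conn_Z X,Y)+G(X,\conn_Z Y)$ directly, and it argues uniqueness separately by counting constraints against the Levi-Civita case (itself only a codimension/heuristic transversality argument, not a derivation from the equations). If you follow your plan literally you will hit this underdetermined block; you would need either to adopt the posit-and-verify route for existence, or to identify an additional normalization that eliminates the residual freedom before claiming uniqueness.
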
\vspace{-1mm}
\noindent We call $\pr_{3} (\lsup{\nabla}{G}_{\dot{q}} \dot{q})$ the \emph{geometric acceleration} of $q(t)$ with respect to $\conn$. It is a coordinate-free object, because $\pr_{3}$ is defined independent of the choice of chart of $\CC$. By Theorem~\ref{th:geometric acceleration}, it is clear that a GDS can be written abstractly as 
$ \pr_3(\lsup{\nabla}{G}_{\ddot{q}} \ddot{q}) = (\pr_3 \circ G^{\sharp} \circ F) (s)$,
where $F: s \mapsto -d\Phi(s) - B(s) $ defines the covectors due to the potential function and damping, and $G^{\sharp} : T^*T\CC \to TT\CC$  denotes the inverse of $G$. In coordinates, 
it reads as 
$
\qdd +  \Gb(\q,\qd)^{-1} (\bm\xi_{\G}(\q,\qd) + \bm\Xi_{\G}(\q,\qd) \qdd )  =  -\Gb(\q,\qd)^{-1} (\nabla_\q \Phi(\q) + \Bb(\q,\qd)\qd )
$, which is exactly~\eqref{eq:GDS}.

Next we present a coordinate-free representation of \flow.
\vspace{-1mm}
\begin{restatable}{theorem}{theoremAbstractConsistency}\label{th:consistency abstract}
Suppose $\CC$ is related to $K$ leaf-node task spaces by maps $\{\psi_i: \CC \to \TT_i\}_{i=1}^K$ and  the $i$th task space $\TT_i$ has an affine connection $\Conn{G_i}$ on $T \TT_i$, as defined in Theorem~\ref{th:geometric acceleration}, and a covector function $F_i$ defined by some potential and damping as described above. 
Let $\lsup{\bar{\nabla}}{G} = \sum_{i=1}^{K} T\psi_i^* {\Conn{G_i}}$ be the pullback connection, $G = \sum_{i=1}^K T\psi_i^* G_i$ be the pullback metric, and $F = \sum_{i=1}^{K} T\psi_i^* F_i$ be the pullback covector, where $T\psi_i^*: T^*T\TT_i \to T^*T\CC$. 
Then $\lsup{\bar{\nabla}}{G}$ is compatible with $G$, and 
$\pr_{3} (\lsup{\bar{\nabla}}{G} _{\ddot{q}} \ddot{q})=  (\pr_3 \circ G^{\sharp} \circ F) (s) $ can be written as $ \qdd +  \Gb(\q,\qd)^{-1} (\bm\eta_{\G;\SS}(\q,\qd) + \bm\Xi_{\G}(\q,\qd) \qdd ) =  -\Gb(\q,\qd)^{-1} (\nabla_\q \Phi(\q) + \Bb(\q,\qd)\qd )$.
In particular, if $G$ is velocity-independent, then $\lsup{\bar{\nabla}}{G} = \conn$.
\end{restatable}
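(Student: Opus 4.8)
The plan is to establish the three claims --- that $\lsup{\bar{\nabla}}{G}$ is an affine connection compatible with $G$, that its geometric acceleration reproduces the displayed structured-GDS equation, and that $\lsup{\bar{\nabla}}{G}=\conn$ when $G$ is velocity-independent --- by passing to local coordinates and applying Theorem~\ref{th:geometric acceleration} leafwise together with the algebraic identity that underlies Theorem~\ref{th:consistency}.

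First I would fix charts $(\CC,\q)$ and $(\TT_i,\y_i)$ so that $T\CC$ has coordinates $(\q,\qd)$, each $T\TT_i$ has $(\y_i,\yd_i)$, the map $T\psi_i:T\CC\to T\TT_i$ reads $(\q,\qd)\mapsto(\psi_i(\q),\J_i(\q)\qd)$ with $\J_i=\partial_\q\y_i$, and the induced map on curve accelerations sends $(\q,\qd,\qdd)$ to $(\y_i,\J_i\qd,\J_i\qdd+\dot\J_i\qd)$. With these formulas I would compute the three pullbacks. (a) From $F_i=-d\Phi_i-B_i$ one gets $F=\sum_iT\psi_i^*F_i=-d\Phi-B$ with $\Phi=\sum_i\Phi_i\circ\psi_i$ and damping matrix $\B=\sum_i\J_i^\t\B_i\J_i$. (b) The pullback metric $G=\sum_iT\psi_i^*G_i$ has, as the block identified with the metric matrix, $\G=\sum_i\J_i^\t\G_i\J_i$ --- the root metric of Theorem~\ref{th:consistency} --- while its remaining blocks (which, by Theorem~\ref{th:geometric acceleration}, never enter the geometric acceleration) are absorbed into a positive-definite completion, so that $\pr_3\circ G^\sharp$ acts by inverting exactly the block $\G$. (c) Theorem~\ref{th:geometric acceleration} applied to leaf $i$ gives $\pr_3(\Conn{G_i}_{\ddot q_i}\ddot q_i)=(\J_i\qdd+\dot\J_i\qd)+\G_i^{-1}(\bm\xi_{\G_i}+\bm\Xi_{\G_i}(\J_i\qdd+\dot\J_i\qd))$, so its $G_i$-lowered form has $\y_i$-components $\M_i(\J_i\qdd+\dot\J_i\qd)+\bm\xi_{\G_i}$ with $\M_i=\G_i+\bm\Xi_{\G_i}$; pulling back by $\J_i^\t$ and summing yields $\sum_i\J_i^\t(\M_i(\J_i\qdd+\dot\J_i\qd)+\bm\xi_{\G_i})=\M\qdd+\bm\eta_{\G;\SS}$, which is exactly the definition of $\bm\eta_{\G;\SS}$ together with $\M=\G+\bm\Xi_\G$ established in Theorem~\ref{th:consistency}. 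Applying $\pr_3\circ G^\sharp$ then gives $\pr_3(\lsup{\bar{\nabla}}{G}_{\ddot q}\ddot q)=\qdd+\G^{-1}(\bm\eta_{\G;\SS}+\bm\Xi_\G\qdd)$ and $(\pr_3\circ G^\sharp\circ F)(s)=-\G^{-1}(\nabla_\q\Phi+\B\qd)$, and equating them is precisely the displayed equation.

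For the first claim I would take $\lsup{\bar{\nabla}}{G}$ to be the connection fixed by the requirement $G^\flat(\lsup{\bar{\nabla}}{G}_XY)=\sum_iT\psi_i^*(G_i^\flat(\Conn{G_i}_{X_i}Y_i))$, where $X_i,Y_i$ denote the images of $X,Y$ under $T\psi_i$ (interpreted via the pullback-bundle formalism), and check the connection axioms: $G^\sharp,G^\flat,T\psi_i,T\psi_i^*$ are tensorial, while the single Leibniz term in $Y$ descends from the $\Conn{G_i}$'s, so $\lsup{\bar{\nabla}}{G}$ is a genuine affine connection. Compatibility $\lsup{\bar{\nabla}}{G}G=0$ then follows by differentiating $G(Y,Z)=\sum_iG_i(Y_i,Z_i)$ along $X$, applying $G_i$-compatibility of each $\Conn{G_i}$ to the $i$th summand, and recollecting through the same normalization. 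For the velocity-independent case, Theorem~\ref{th:geometric acceleration} makes each $\Conn{G_i}$ an ordinary Levi-Civita connection, the pullback metric $G$ reduces to the velocity-independent $\G=\sum_i\J_i^\t\G_i\J_i$ on $\CC$, the $\bm\Xi$ and $\dot\J$ contributions drop out, and the computation in (c) becomes the geodesic equation of $\conn$; uniqueness of the Levi-Civita connection then gives $\lsup{\bar{\nabla}}{G}=\conn$.

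The main obstacle is making the ``sum of pullback connections'' rigorous: naively adding connections is not an operation on connections, so the real content is that the pullback metric $G$ supplies exactly the normalization ($G^\sharp$ out front) under which $\lsup{\bar{\nabla}}{G}$ is well-defined and metric-compatible, and that the curvature pieces $\bm\Xi_\G,\bm\eta_{\G;\SS}$ and the off-diagonal blocks of the pulled-back tangent-bundle metric are precisely what $\pr_3$ discards. Keeping the second-tangent-bundle bookkeeping straight --- the interplay of $\pr_3$, the second tangent map $TT\CC\to TT\TT_i$, and the $\q\q$- versus $\qd\qd$-blocks of $G$ --- is where the work lies; the algebraic identity with $\bm\eta_{\G;\SS}$ itself is just the coordinate computation already behind Theorem~\ref{th:consistency}.
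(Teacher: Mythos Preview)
Your proposal is essentially correct and follows the same three-part skeleton as the paper (compatibility, coordinate expression, velocity-independent specialization), but there is one genuine methodological difference and one small logical slip worth flagging.

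\textbf{Different resolution of ``sum of connections.''} You correctly identify that $\sum_i T\psi_i^*\Conn{G_i}$ is not literally a sum of connections, and you resolve this by \emph{defining} $\lsup{\bar\nabla}{G}$ through the normalization $G^\flat(\lsup{\bar\nabla}{G}_XY)=\sum_iT\psi_i^*\big(G_i^\flat(\Conn{G_i}_{X_i}Y_i)\big)$ and then inverting with $G^\sharp$. The paper instead bundles the leaves into a single product manifold $\TT=\TT_1\times\cdots\times\TT_K$ with block-diagonal metric $\tilde G$ and a single map $\psi:\CC\to\TT$, so that $\lsup{\bar\nabla}{G}=T\psi^*\Conn{\tilde G}$ is the \emph{standard} pullback connection defined via the $\tilde G$-orthogonal projection onto the image of $T\psi_*$. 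Unwinding the projection condition $\tilde G(T\psi_*W,\,\cdot\,)$ shows the two definitions coincide, so your route is valid; the product-manifold device simply avoids having to check the connection axioms by hand and makes the compatibility computation a one-liner (the paper's derivation of $G(\lsup{\bar\nabla}{G}_ZX,Y)=\tilde G(\Conn{\tilde G}_{T\psi_*Z}T\psi_*X,T\psi_*Y)$ followed by compatibility of $\Conn{\tilde G}$).

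\textbf{Gap in the velocity-independent step.} Your final sentence invokes ``uniqueness of the Levi-Civita connection'' after observing that the coordinate computation in (c) reduces to the geodesic equation of $\conn$. But matching geodesic equations is not the hypothesis of Levi-Civita uniqueness; you need torsion-freeness. The paper closes this by noting that when $G$ is velocity-independent, $\conn$ is symmetric (hence Levi-Civita for $G$), and then checks separately that the pullback connection $\lsup{\bar\nabla}{G}$ is also symmetric (this follows from the symmetry of each $\Conn{G_i}$ and is implicit in the proof of Theorem~\ref{th:consistency}). With both connections symmetric and $G$-compatible, Levi-Civita uniqueness applies. Your argument is easily repaired by inserting this symmetry check rather than appealing to geodesic matching.
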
\vspace{-1mm}
\noindent Theorem~\ref{th:consistency abstract} says that the structured GDS $(\CC, \G, \B, \Phi)_\SS$ can be written abstractly, without coordinates, using the pullback of task-space covectors, metrics, and asymmetric affine connections (that are defined in Theorem~\ref{th:geometric acceleration}).
In other words, the recursive calls of \pullback in the backward pass of \flow is indeed performing ``pullback'' of geometric objects. 
Theorem~\ref{th:consistency abstract} also shows, when $G$ is velocity-independent, the pullback of connection and the pullback of metric commutes.
In this case, $\lsup{\bar{\nabla}}{G} = \conn$, which is equivalent to the Levi-Civita connection of $G$. The loss of commutativity in general is due to the asymmetric definition of the connection in Theorem~\ref{th:geometric acceleration}, which however is necessary to derive a control law of acceleration, without further referring to  higher-order time derivatives.

\vspace{-3mm}
\subsection{Related Approaches} \label{sec:related work}
\vspace{-1mm}

While here we focus on the special case of \flow with GDSs, this family already covers a wide range of reactive policies commonly used in practice. 
For example, when the task metric is Euclidean (i.e. constant), \flow recovers OSC (and its variants)~\cite{khatib1987unified,sentis2006whole,Peters_AR_2008,UdwadiaGaussPrincipleControl2003,lo2016virtual}. When the task metric is only configuration dependent, \flow can be viewed as performing energy shaping to combine multiple SMSs in geometric control~\cite{bullo2004geometric}. 
Further, \flow allows using velocity dependent metrics, generating behaviors all those previous rigid mechanics-based approaches fail to model.
We also note that \flow can be easily modified to incorporate exogenous time-varying inputs (e.g. forces to realize impedance control~\cite{albu2002cartesian} or learned perturbations as in DMPs~\cite{IjspeertDMPs2013}).
In computation, the structure of \flow in natural-formed RMPs resembles the classical Recursive Newton-Euler algorithm~\cite{walker1982efficient,Featherstone08} (see Appendix~\ref{app:relationship with dynamics}). 
Alternatively, the canonical form of \flow in~\eqref{eq:least-square problem of pullback} resembles Gauss' Principle~\cite{Peters_AR_2008,UdwadiaGaussPrincipleControl2003}, but with a curvature correction $\bm\Xi_\G$ on the inertia matrix (suggested by Theorem~\ref{th:consistency}) to account for velocity dependent metrics. 
Thus, we can view \flow as a natural generalization of these approaches to a broader class of non-Euclidean behaviors.

\vspace{-4mm}
\section{Experiments} \label{sec:experiments}
\vspace{-3mm}

We perform controlled experiments to study the curvature effects of nonlinear
metrics, which is important for stability and collision avoidance.  We then
perform several full-body experiments 
 (video: 
{\small \url{https://youtu.be/Fl4WvsXQDzo}})
to demonstrate the capabilities of \flow
on high-DOF manipulation problems in clutter, and implement an integrated
vision-and-motion system on two physical robots.

\vspace{-4mm}
\subsection{Controlled Experiments}
\label{sec:1DExample}
\begin{figure}[t]
	\centering
	\subfloat[\label{fig:1d_z}]{
		\includegraphics[trim={5 5 0 25},clip, width=0.24\columnwidth,keepaspectratio]{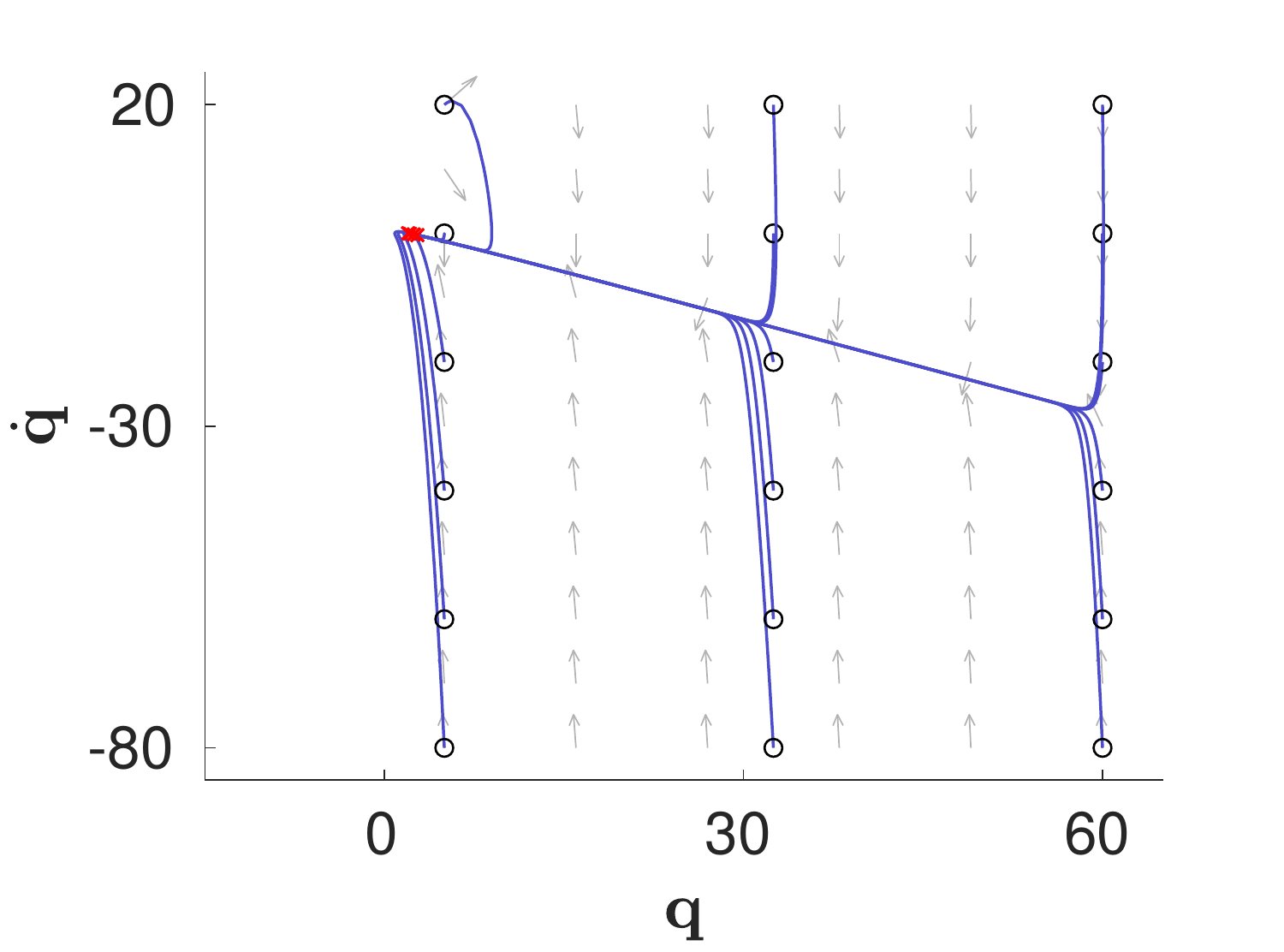}
	}\hspace{-4mm}
	\subfloat[\label{fig:1d_x}]{
		\includegraphics[trim={5 5 0 25},clip, width=0.24\columnwidth,keepaspectratio]{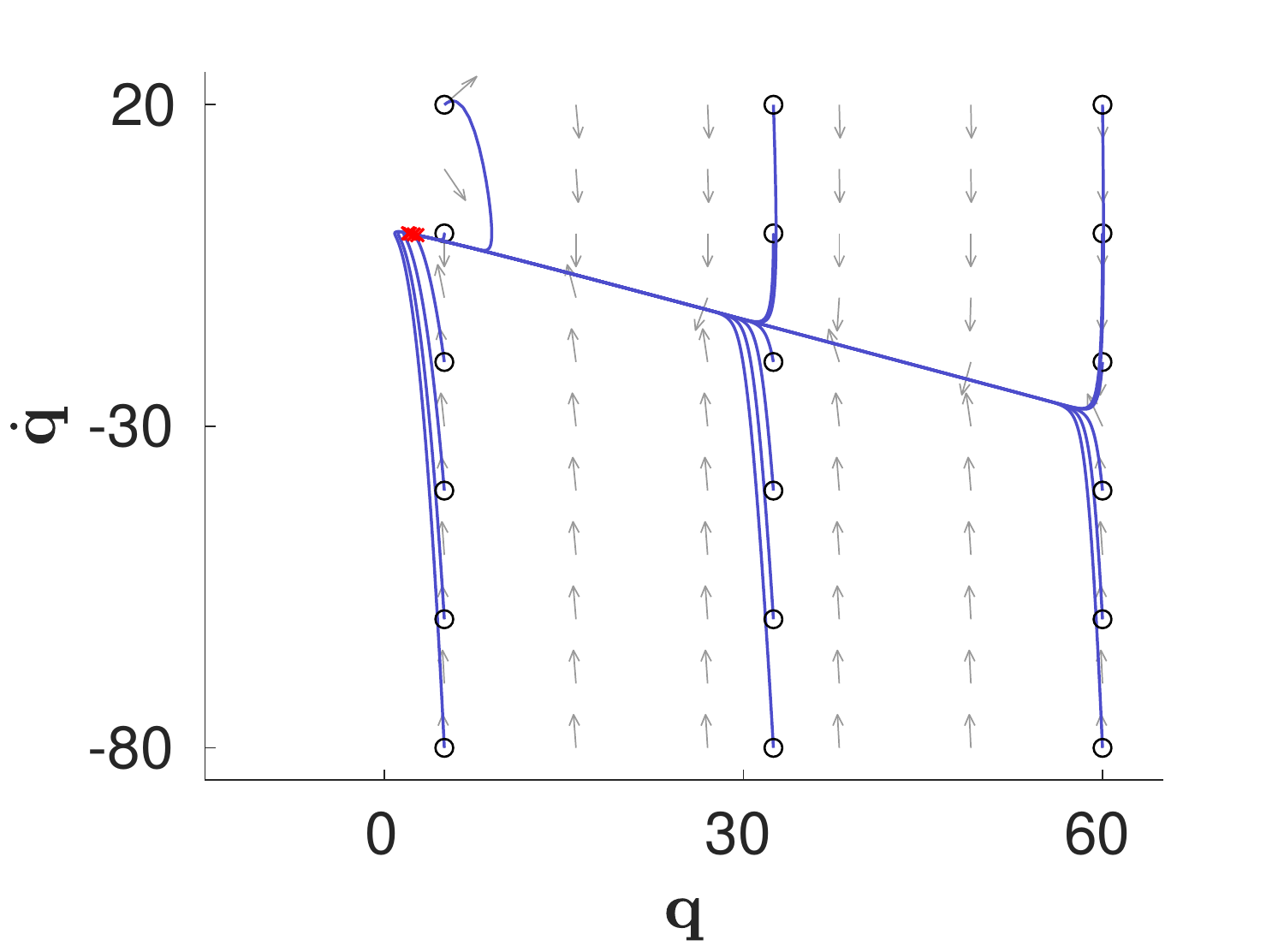}
	}\hspace{-4mm}
	\subfloat[\label{fig:1d_alpha1}]{
		\includegraphics[trim={5 5 0 25},clip, width=0.24\columnwidth,keepaspectratio]{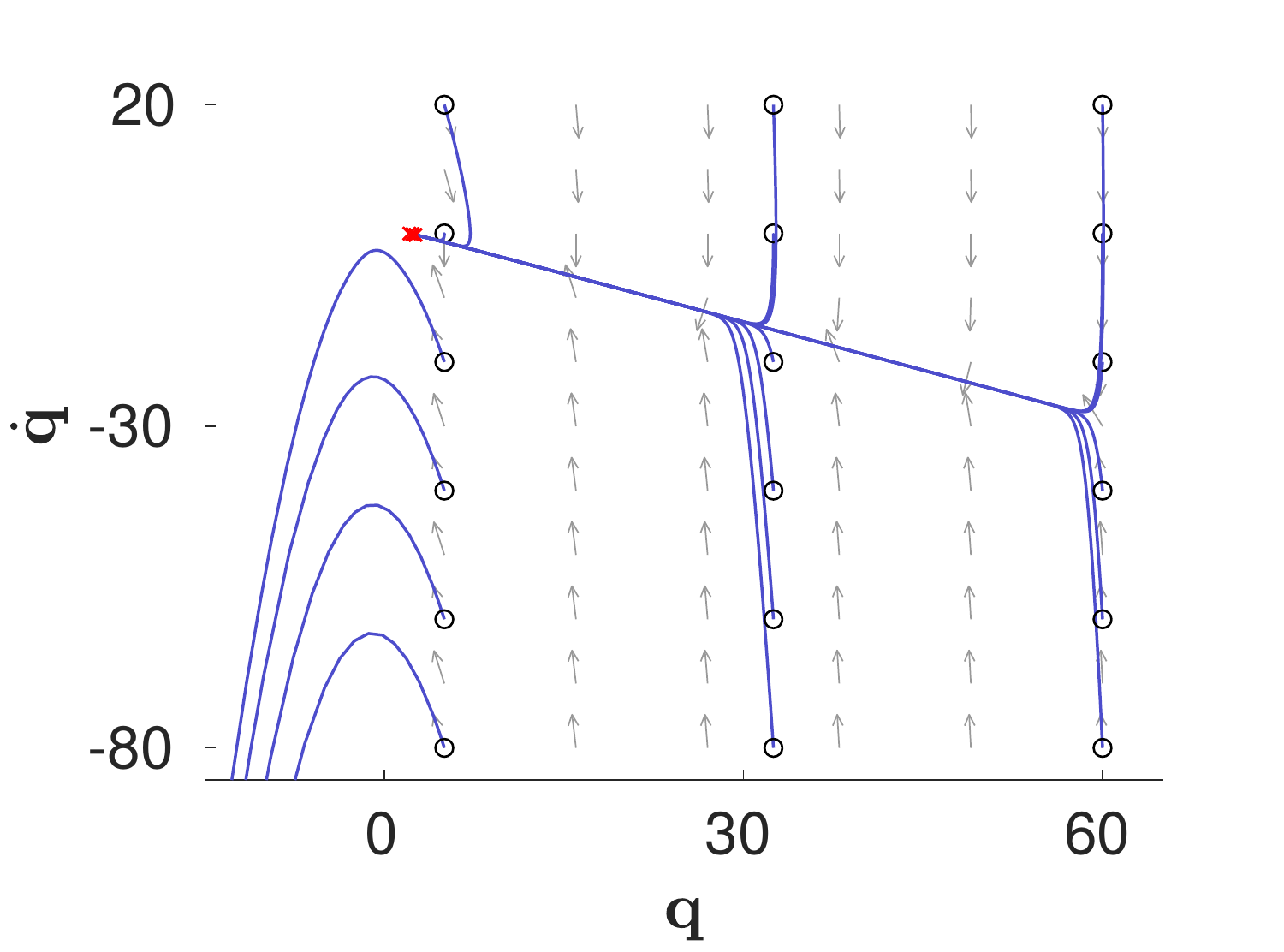}
	}\hspace{-4mm}
	\subfloat[\label{fig:1d_alpha1_damp}]{
		\includegraphics[trim={5 5 0 25},clip, width=0.24\columnwidth,keepaspectratio]{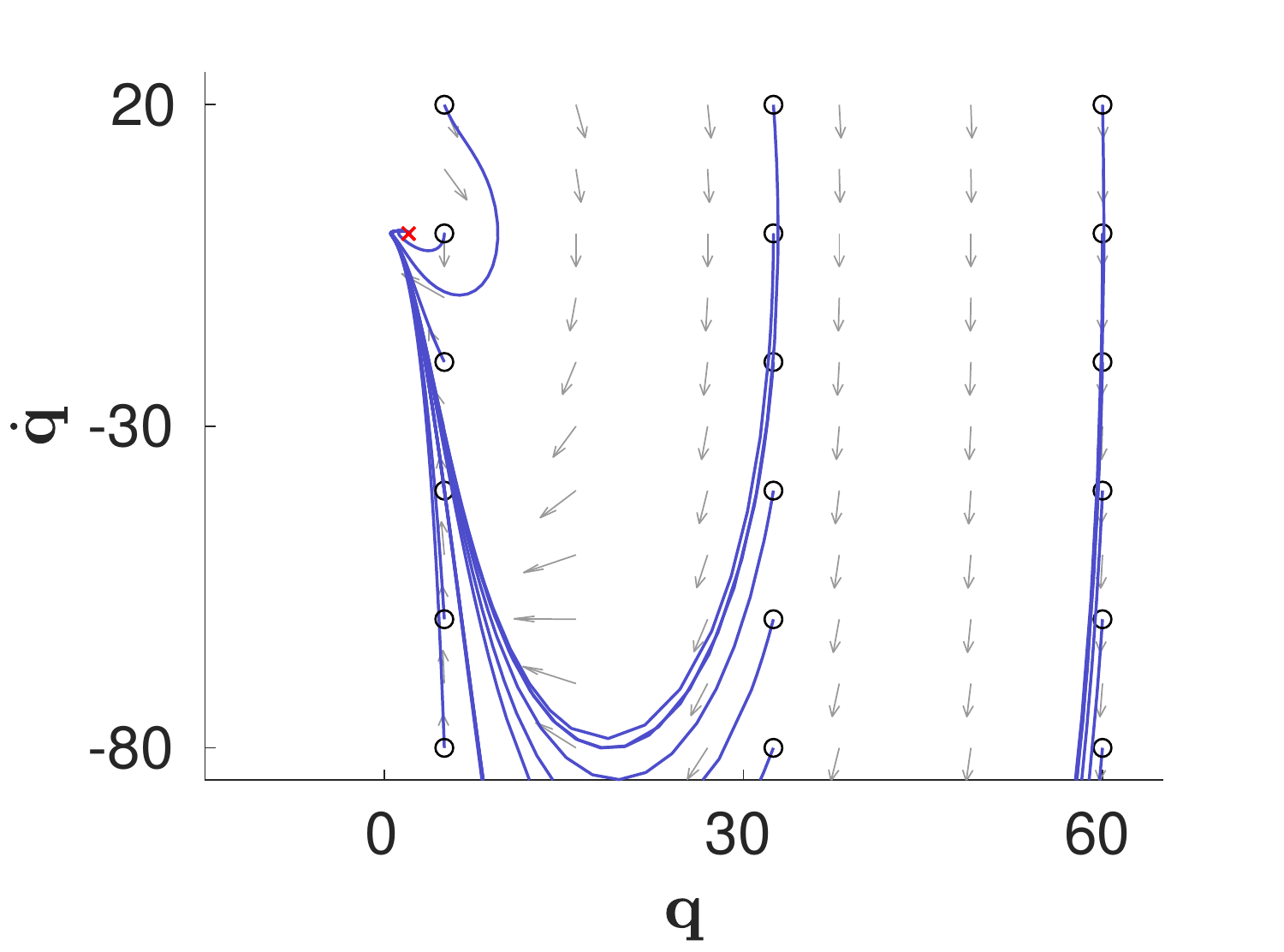}
	}
	\vspace{-3mm}
	\caption{\small Phase portraits (gray) and integral curves (blue; from black circles to red crosses) of 1D example. (a) Desired behavior. 
	(b) With curvature terms. (c) Without curvature terms. (d) Without curvature terms but with nonlinear damping.}
	\label{fig:1d}
	\vspace{-5mm}
\end{figure}

\vspace{-1mm}
\subsubsection{1D Example} 
Let $\q \in \R$. We consider a barrier-type task map $\x = 1/\q$ and define a GDS in~\eqref{eq:GDS} with $\G = 1$, $\Phi(\x) = \frac{1}{2}(\x - \x_0)^2$, and $\B = (1 + 1/\x)$, where $\x_0 > 0$. Using the GDS, we can define an RMP $[- \nabla_\x \Phi - \Bb\xd - \bm\xi_{\G}, \M]^\R$, where
$\M$ and $\bm\xi_{\G}$ are defined according to Section~\ref{sec:GDS}.
We use this example to study the effects of $\dot{\J}\qd$ in \pullback~\eqref{eq:natural pullback}, where we define $\J = \partial_\q \x$. Fig.~\ref{fig:1d} compares the desired behavior (Fig.~\ref{fig:1d_z}) and the behaviors of correct/incorrect \pullback. If \pullback is performed correctly with $\Jd \qd$, the behavior matches the designed one (Fig.~\ref{fig:1d_x}). By contrast, if $\Jd \qd$ is ignored, the observed behavior becomes inconsistent and unstable (Fig.~\ref{fig:1d_alpha1}).
While the instability of neglecting $\dot{\J}\qd$ can be recovered with a damping $\B = (1 + \frac{\xd^2}{\x})$ nonlinear in $\xd$ (suggested in~\cite{lo2016virtual}), the behavior remains inconsistent (Fig.~\ref{fig:1d_alpha1_damp}).

\begin{figure}[t] 
	\centering
	\hspace{-4mm}
	\subfloat[\label{fig:2d_obs_nocorr}]{
		\includegraphics[trim={40 5 130 15},clip, width=0.2\columnwidth,keepaspectratio]{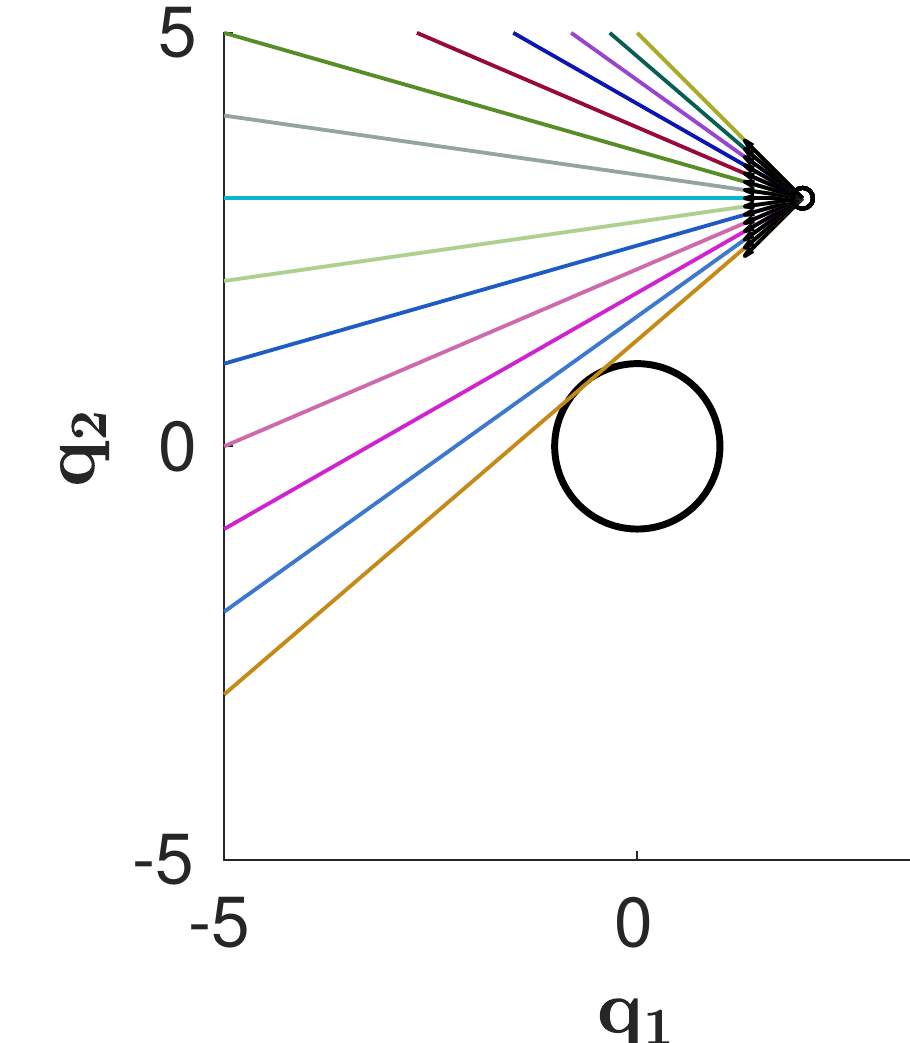}
	}\hspace{-4mm}
	\subfloat[\label{fig:2d_obs}]{
		\includegraphics[trim={40 5 130 15},clip, width=0.2\columnwidth,keepaspectratio]{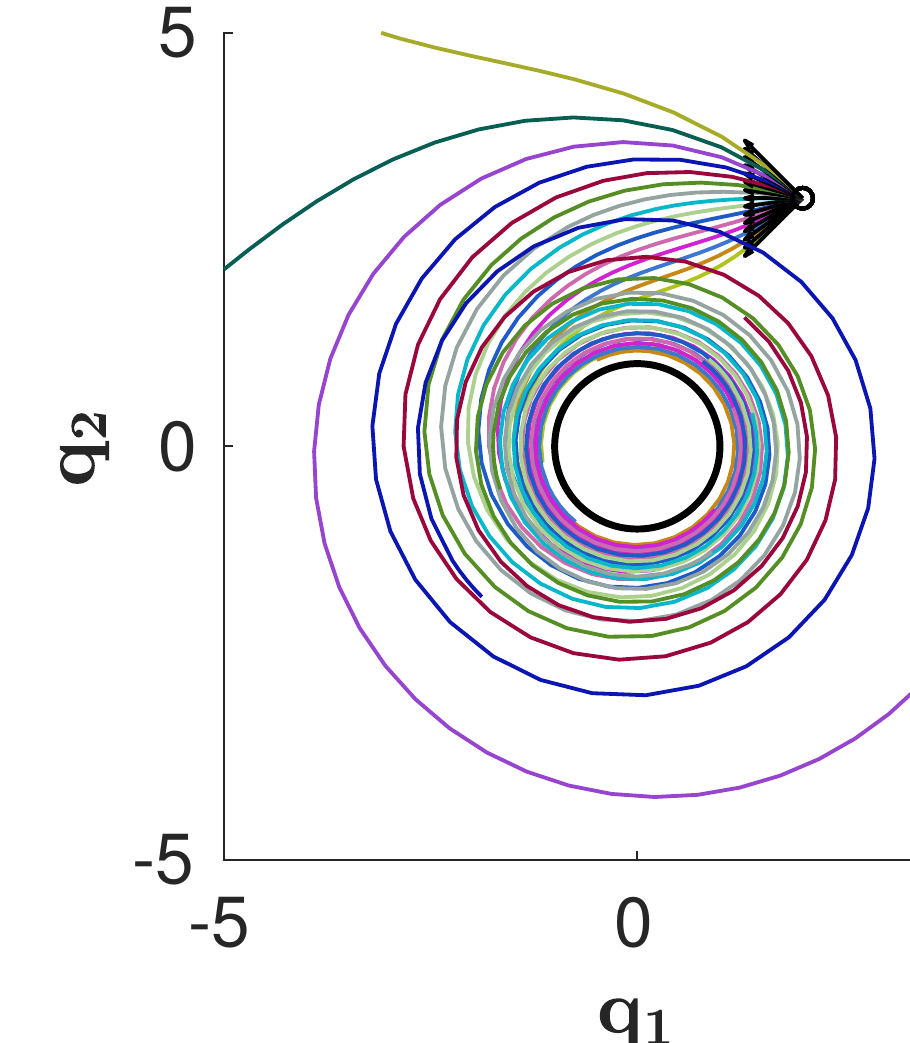}
	}\hspace{-3mm}
	\subfloat[\label{fig:2d_obs_pot_nocorr}]{
		\includegraphics[trim={40 5 130 15},clip, width=0.2\columnwidth,keepaspectratio]{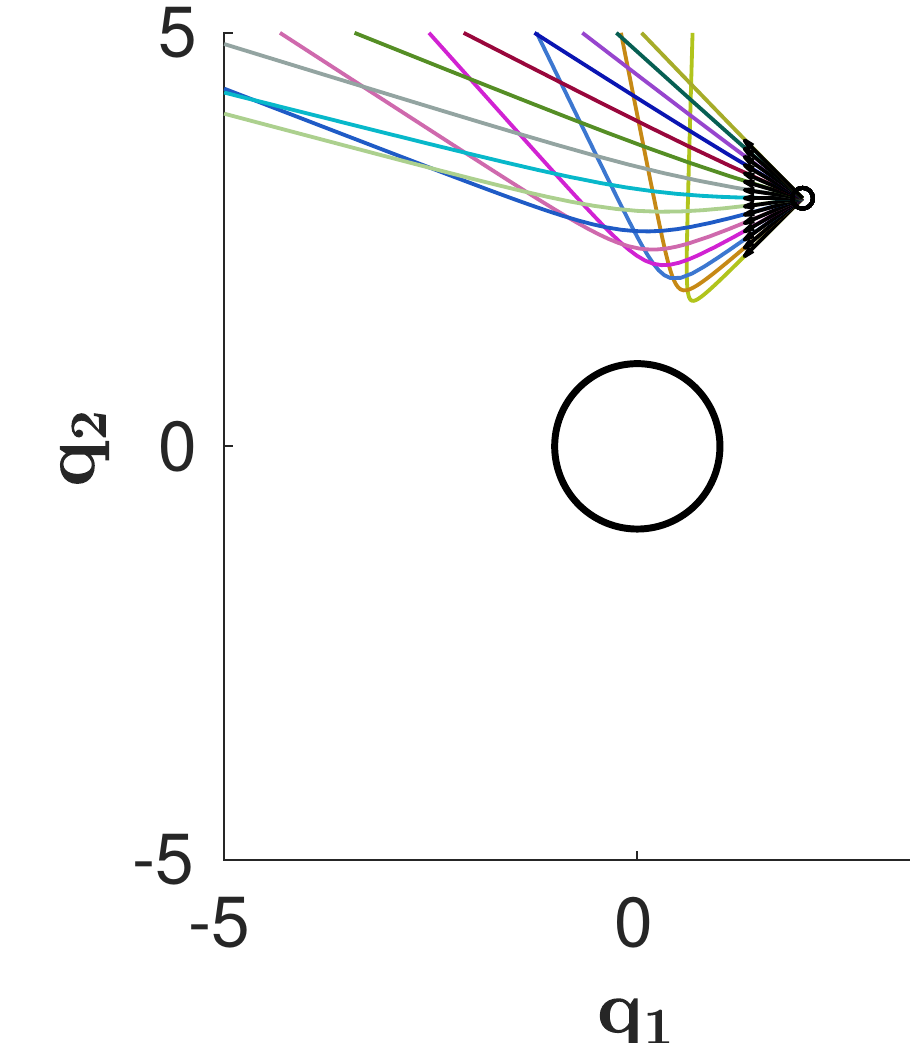}
	}\hspace{-4mm}
	\subfloat[\label{fig:2d_obs_pot}]{
		\includegraphics[trim={40 5 130 15},clip, width=0.2\columnwidth,keepaspectratio]{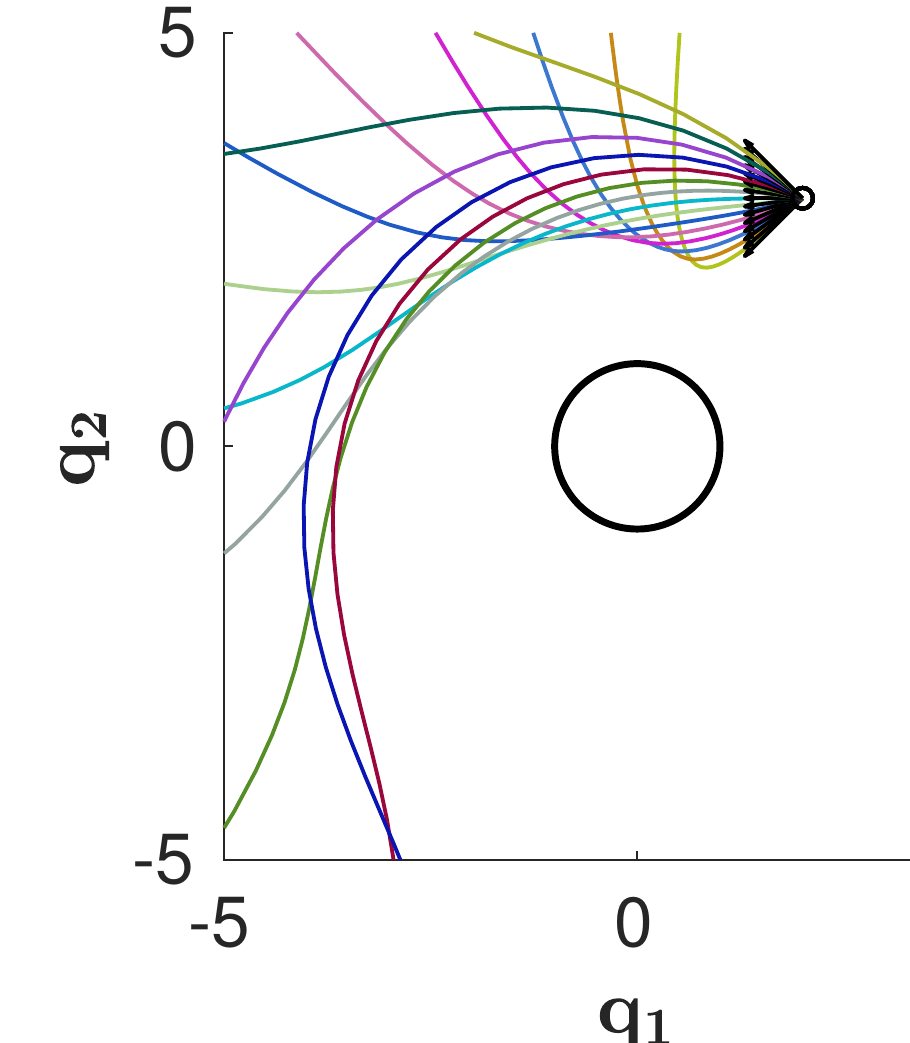}
	}\hspace{-4mm}
	\subfloat[\label{fig:2d_full}]{
		\includegraphics[trim={40 5 130 15},clip, width=0.2\columnwidth,keepaspectratio]{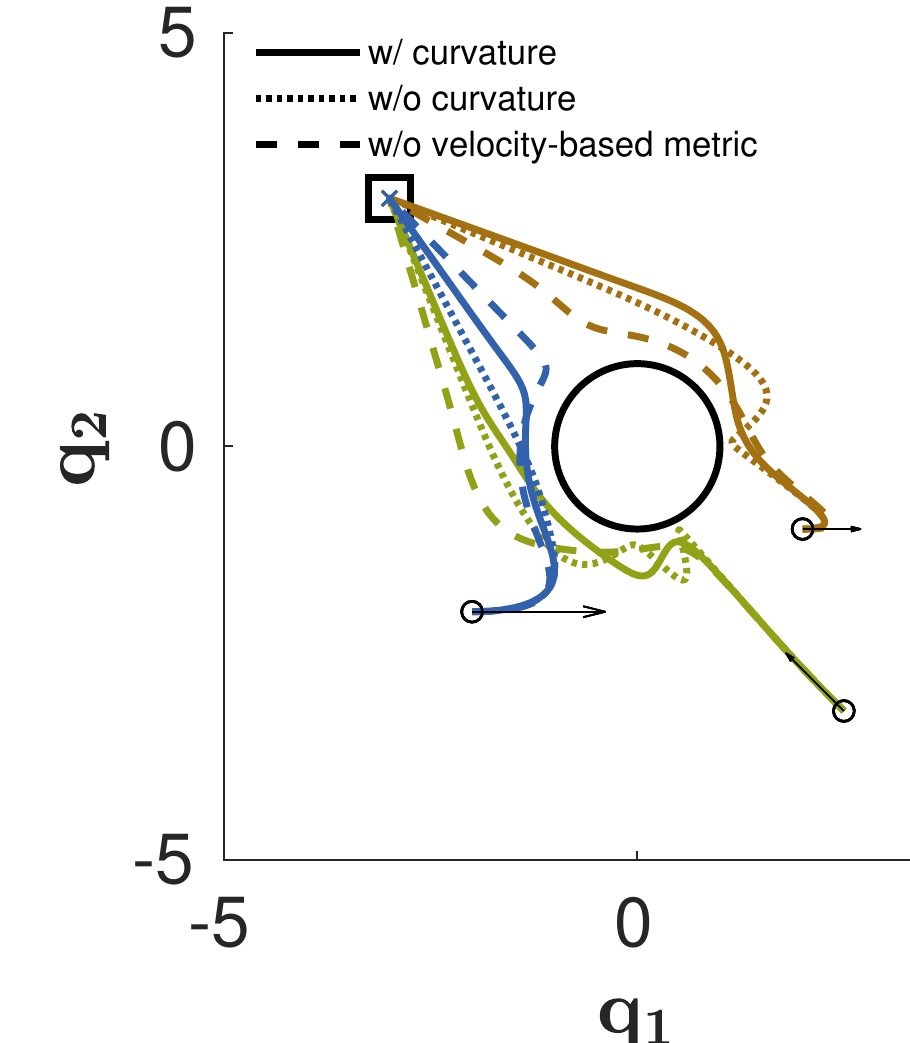}		
	}
	\vspace{-3mm}
	\caption{\small 
		2D example; initial positions (small circle) and velocities (arrows). (a-d) Obstacle (circle) avoidance: (a) w/o curvature terms and w/o potential. (b) w/ curvature terms and w/o potential. (c) w/o curvature terms and w/ potential. (d) w/ curvature terms and w/ potential. (e) Combined obstacle avoidance and goal (square) reaching.}
	\vspace{-3mm}
	\label{fig:2DOrbits}
\end{figure}
\begin{figure}[t]\vspace{-1mm}
	\centering
	\includegraphics[trim={95 0 100 10},clip,width=0.95\columnwidth,keepaspectratio]{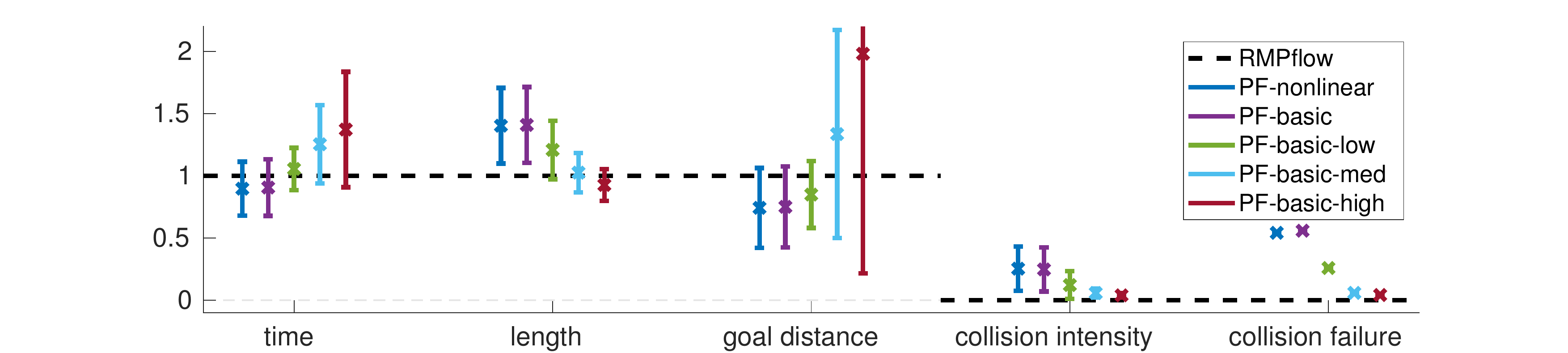}
	\vspace{-3mm}
	\caption{\small Results for reaching experiments. Though some methods achieve a shorter goal distance than \flow in successful trials, they end up in collision in most the trials. }
	\label{fig:reach}
	\vspace{-7mm}
\end{figure}

\vspace{-5mm} 
\subsubsection{2D Example}
We consider a 2D goal-reaching task with collision avoidance and 
study the effects of velocity dependent metrics. 
First, we define an RMP (a GDS as in Section~\ref{sec:example RMPs}) in $\x = d(\q)$ (the 1D task space of the distance to the obstacle). We pick a metric $\G(\x,\xd) = w(\x) u(\xd)$, where $w(\x) = 1/\x^4$ increases if the particle is \emph{close} to the obstacle 
and $u(\xd) = \epsilon + \min(0, \xd) \xd$ (where $\epsilon \geq 0$), increases if it moves \emph{towards} the obstacle. 
As this metric is non-constant, the GDS has curvature terms $\bm\Xi_{\G} = \frac{1}{2}\xd w(\x) \partial_\xd u(\xd)$ and $\bm\xi_{\G} = \frac{1}{2}\xd^2 u(\xd) \partial_\x w(\x)$.  
These curvature terms along with $\Jd \qd$ produce an acceleration that lead to 
natural obstacle avoidance behavior, coaxing the system toward isocontours of the obstacle (Fig.~\ref{fig:2d_obs}).
On the other hand, when the curvature terms are ignored, 
the particle travels in straight lines with constant velocity (Fig.~\ref{fig:2d_obs_nocorr}). 
To define the full collision avoidance RMP, we introduce a barrier-type potential $\Phi(\x) = \frac{1}{2}\alpha w(\x)^2$ to create extra repulsive forces, where $\alpha\geq 0$. A comparison of the curvature effects in this setting is shown in Fig.~\ref{fig:2d_obs_pot_nocorr} and~\ref{fig:2d_obs_pot} (with $\alpha = 1$).
Next, we use \flow to combine the collision avoidance RMP above (with $\alpha=0.001$) and an attractor RMP.
Let $\q_g$ be the goal. The attractor RMP is a GDS in the task space $\y = \q - \q_g$ with a metric $w(\y) \I$, a damping $\eta w(\y) \I$, and a potential that is zero at $\y=0$, where $\eta > 0$ (see Appendix~\ref{apx:Attractors}).  
Fig.~\ref{fig:2d_full} shows the trajectories of the combined RMP. The combined non-constant metrics generate a behavior that transitions smoothly towards the goal while heading away from the obstacle. When the curvature terms are ignored (for both RMPs), the trajectories oscillate 
near the obstacle. In practice, this can result in jittery behavior on manipulators. 
When the metric is not velocity-based ($\G(\x) = w(\x)$) the behavior is less efficient in breaking free from the obstacle to go toward the goal.

\begin{figure}[t]
	\centering	
	\begin{tabular}{cccc}
	\includegraphics[height=0.16\columnwidth]{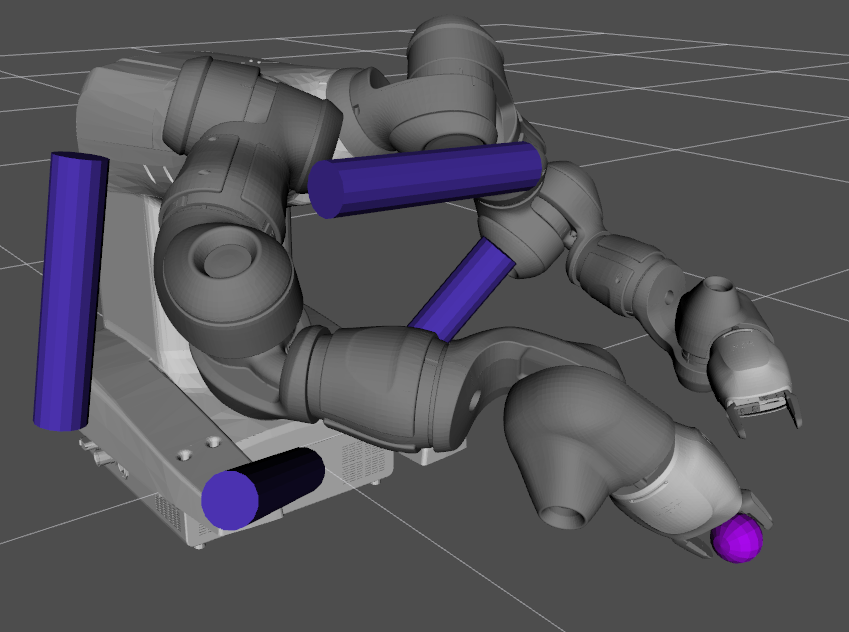} &
	\includegraphics[height=0.16\columnwidth]{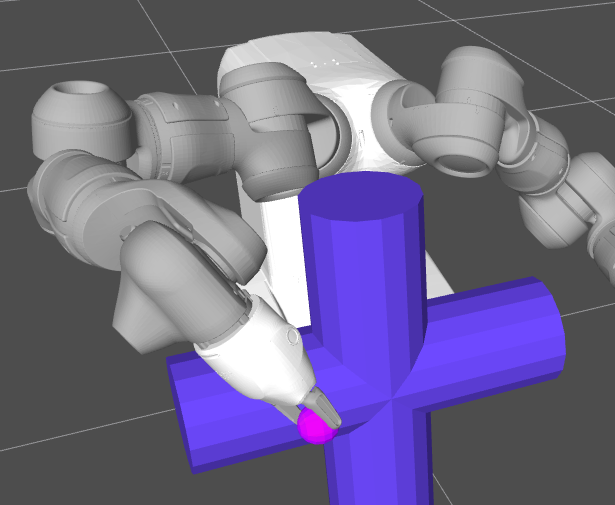} &
	\includegraphics[height=0.16\columnwidth]{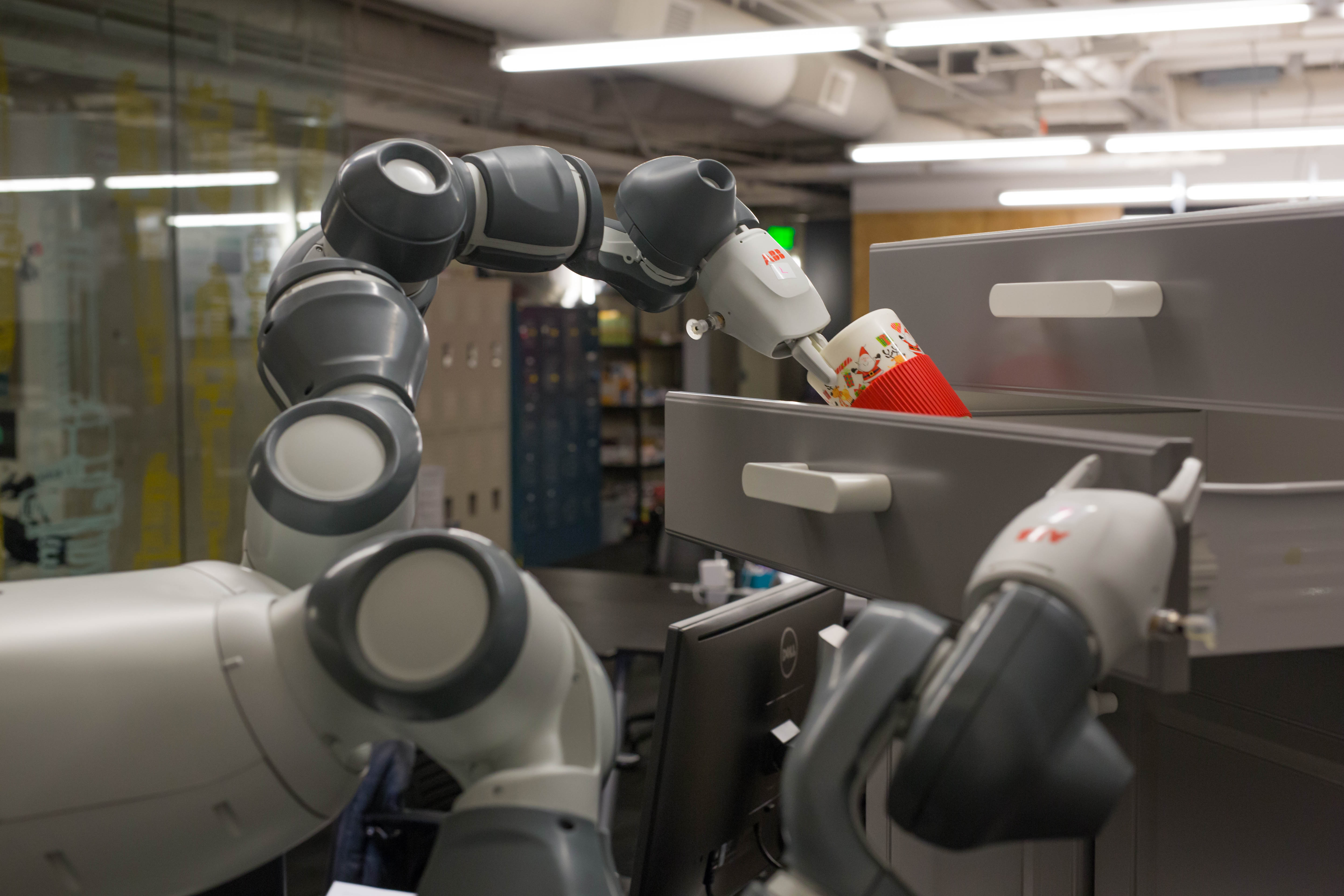} &
	\includegraphics[height=0.16\columnwidth]{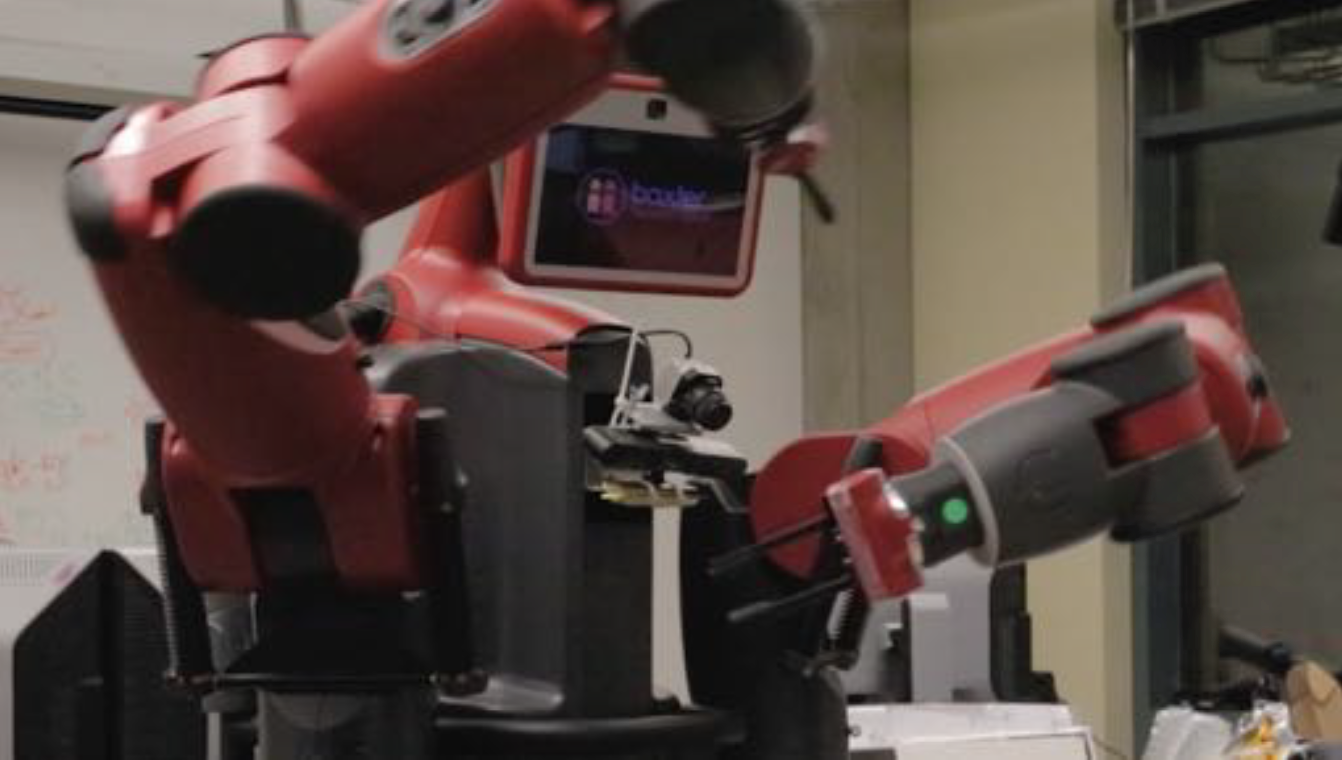} \\
	\multicolumn{2}{c}{simulated worlds} &
	\multicolumn{2}{c}{real-world experiments}
	\end{tabular}
	\caption{\small Two of the six simulated worlds in the reaching experiments (left), and 
    the two physical dual-arm
    platforms in the full system experiment (right). }
	\label{fig:robots}
	\vspace{-6mm}
\end{figure}

\vspace{-4mm}
\subsection{System Experiments}
\vspace{-1mm}
\subsubsection{Reaching-through-clutter Experiments}
We compare \flow with OSC, (i.e. potential fields (PF) with dynamics reshaping),
denoted as PF-basic, 
and a variant, denoted PF-nonlinear, which scales the 
collision-avoidance weights nonlinearly as a function of obstacle proximity.  
We highlight the results here; 
Appendix~\ref{apx:ReachingExperiment} provides additional details, and the supplementary
video shows footage of the trials.
In both baselines, the collision-avoidance task spaces are specified by control points along the robot's body (rather than the distance space used in
\flow) with an isotropic metric $\G = w(\x)\I$
(here $w(\x) = w_o\in\R_+$ for PF-basic and $w(\x)\in[0,w_o]$ for PF-nonlinear, where
$w_o$ is the max metric size used in \flow). 
The task-space policies of both variants follow GDSs, but without the curvature terms 
(see Appendix~\ref{apx:ReachingExperiment}).

Fig.~\ref{fig:reach} summarizes their performance. We measure time-to-goal,
C-space path length (assessing economy of motion), achievable distance-to-goal
(efficacy in solving the problem), collision intensity (percent time in
collision \emph{given} a collision), collision failures (percent trials with
collisions).  The isotropic metrics, across multiple settings, fail to match
the speed and precision achieved by \flow. Higher-weight settings tend to have
fewer collisions and better economy of motion, but at the expense of
efficiency. Additionally, adding nonlinear weights as in PF-nonlinear does not
seem to help.  The decisive factor of \flow's performance is rather its
non-isotropic metric, which 
encodes directional importance around obstacles in combing policies.

\vspace{-5mm}
\subsubsection{System Integration for Real-Time Reactive Motion Generation}
We present an integrated system for vision-driven dual arm manipulation on two robotic platforms, the ABB YuMi robot and the Rethink Baxter robot (Fig.~\ref{fig:robots}) (see the supplementary video).
Our system uses the real-time optimization-based tracking algorithm DART~\cite{Sch15DAR} to communicate with the RMP system,
receiving prior information on robot configuration and sending tracking updates of world state.  
The system is tested in multiple real-world manipulation problems, like picking up
trash in clutter, reactive manipulation of a cabinet with human perturbation,
active lead-through (compliant guiding of the arms with world-aware collision controllers) 
and pick-and-place of objects into a drawer which the robot opens and closes.
Please see Appendix~\ref{apx:IntegratedSystem} for the details of the experiments.
%
\vspace{-4mm}
\section{Conclusion}
\vspace{-3mm}
We propose an efficient policy synthesis framework, \flow, for generating policies with non-Euclidean behavior, including motion with velocity dependent metrics that are new to the literature.
In design, \flow is implemented as a computational graph, which can geometrically consistently combine subtask policies into a global policy for the robot. 
In theory, we provide conditions for stability and show that \flow is intrinsically coordinate-free. 
In the experiments, we demonstrate that \flow can generate smooth and natural motion for various tasks, when proper subtask RMPs are specified. Future work is to further relax the requirement on the quality of designing subtask RMPs by introducing learning components into \flow for additional flexibility.

\vspace{-4mm}

\bibliographystyle{splncs03_unsrt}
\bibliography{refs}

\begin{thebibliography}{10}
\providecommand{\url}[1]{\texttt{#1}}
\providecommand{\urlprefix}{URL }

\bibitem{rimon-ams-1991}
Rimon, E., Koditschek, D.: The construction of analytic diffeomorphisms for
  exact robot navigation on star worlds. Transactions of the American
  Mathematical Society  327(1),  71--116 (1991)

\bibitem{RIEMORatliff2015ICRA}
Ratliff, N., Toussaint, M., Schaal, S.: Understanding the geometry of workspace
  obstacles in motion optimization. In: IEEE International Conference on
  Robotics and Automation (ICRA) (2015)

\bibitem{VijayakumarTopologyMotionPlanning2013}
Ivan, V., Zarubin, D., Toussaint, M., Komura, T., Vijayakumar, S.:
  Topology-based representations for motion planning and generalization in
  dynamic environments with interactions. International Journal of Robotics
  Research (IJRR)  32(9-10),  1151--1163 (2013)

\bibitem{Watterson-TrajOptManifolds-RSS-18}
Watterson, M., Liu, S., Sun, K., Smith, T., Kumar, V.: Trajectory optimization
  on manifolds with applications to {SO(3)} and {R3XS2}. In: Robotics: Science
  and Systems (RSS) (2018)

\bibitem{ToussaintTrajOptICML2009}
Toussaint, M.: Robot trajectory optimization using approximate inference. In:
  ICML. pp. 1049--1056 (2009)

\bibitem{LavallePlanningAlgorithms06}
LaValle, S.M.: Planning Algorithms. Cambridge University Press, Cambridge, U.K.
  (2006), available at http://planning.cs.uiuc.edu/

\bibitem{KaramanRRTStar2011}
Karaman, S., Frazzoli, E.: Sampling-based algorithms for optimal motion
  planning. International Journal of Robotics Research (IJRR)  30(7),  846--894
  (2011), \url{http://arxiv.org/abs/1105.1186}

\bibitem{GammellBitStar2014}
Gammell, J.D., Srinivasa, S.S., Barfoot, T.D.: {Batch Informed Trees (BIT*)}:
  {S}ampling-based optimal planning via the heuristically guided search of
  implicit random geometric graphs. In: IEEE International Conference on
  Robotics and Automation (ICRA) (2015)

\bibitem{mukadam2017continuous}
Mukadam, M., Dong, J., Yan, X., Dellaert, F., Boots, B.: Continuous-time
  {G}aussian process motion planning via probabilistic inference. arXiv
  preprint arXiv:1707.07383  (2017)

\bibitem{khatib1987unified}
Khatib, O.: A unified approach for motion and force control of robot
  manipulators: The operational space formulation. IEEE Journal on Robotics and
  Automation  3(1),  43--53 (1987)

\bibitem{Peters_AR_2008}
Peters, J., Mistry, M., Udwadia, F.E., Nakanishi, J., Schaal, S.: A unifying
  framework for robot control with redundant {DOF}s. Autonomous Robots  1,
  1--12 (2008)

\bibitem{UdwadiaGaussPrincipleControl2003}
Udwadia, F.E.: A new perspective on the tracking control of nonlinear
  structural and mechanical systems. Proceedings of the Royal Society of London
  A: Mathematical, Physical and Engineering Sciences  459(2035),  1783--1800
  (2003), \url{http://rspa.royalsocietypublishing.org/content/459/2035/1783}

\bibitem{2017_rss_system}
Kappler, D., Meier, F., Issac, J., Mainprice, J., Garcia~Cifuentes, C.,
  W{\"u}thrich, M., Berenz, V., Schaal, S., Ratliff, N., Bohg, J.: Real-time
  perception meets reactive motion generation. IEEE Robotics and Automation
  Letters  3(3),  1864--1871 (2018), \url{https://arxiv.org/abs/1703.03512}

\bibitem{Mukadam-ICRA-17}
Mukadam, M., Cheng, C.A., Yan, X., Boots, B.: Approximately optimal
  continuous-time motion planning and control via probabilistic inference. In:
  IEEE International Conference on Robotics and Automation (ICRA) (2017)

\bibitem{bullo2004geometric}
Bullo, F., Lewis, A.D.: Geometric control of mechanical systems: modeling,
  analysis, and design for simple mechanical control systems, vol.~49. Springer
  Science \& Business Media (2004)

\bibitem{ratliff2018riemannian}
Ratliff, N.D., Issac, J., Kappler, D., Birchfield, S., Fox, D.: Riemannian
  motion policies. arXiv preprint arXiv:1801.02854  (2018)

\bibitem{walker1982efficient}
Walker, M.W., Orin, D.E.: Efficient dynamic computer simulation of robotic
  mechanisms. Journal of Dynamic Systems, Measurement, and Control  104(3),
  205--211 (1982)

\bibitem{albu2002cartesian}
Albu-Schaffer, A., Hirzinger, G.: Cartesian impedance control techniques for
  torque controlled light-weight robots. In: IEEE International Conference on
  Robotics and Automation (ICRA). vol.~1, pp. 657--663 (2002)

\bibitem{sentis2006whole}
Sentis, L., Khatib, O.: A whole-body control framework for humanoids operating
  in human environments. In: IEEE International Conference on Robotics and
  Automation (ICRA). pp. 2641--2648 (2006)

\bibitem{lo2016virtual}
Lo, S.Y., Cheng, C.A., Huang, H.P.: Virtual impedance control for safe
  human-robot interaction. Journal of Intelligent \& Robotic Systems  82(1),
  3--19 (2016)

\bibitem{DRCIntegratedSystemTodorov2013}
Erez, T., Lowrey, K., Tassa, Y., Kumar, V., Kolev, S., Todorov, E.: An
  integrated system for real-time model-predictive control of humanoid robots.
  In: IEEE/RAS International Conference on Humanoid Robots (2013)

\bibitem{OptimalControlTheoryTodorov06}
Todorov, E.: Optimal control theory. In Bayesian Brain: Probabilistic
  Approaches to Neural Coding pp. 269--298 (2006)

\bibitem{liegeois1977automatic}
Liegeois, A.: Automatic supervisory control of the configuration and behaviour
  of multibody mechanisms. IEEE Transactions on Systems, Man and Cybernetics
  7(12),  868--871 (1977)

\bibitem{RatliffCHOMP2009}
Ratliff, N., Zucker, M., Bagnell, J.A.D., Srinivasa, S.: {CHOMP}: Gradient
  optimization techniques for efficient motion planning. In: IEEE International
  Conference on Robotics and Automation (ICRA) (2009)

\bibitem{Mukadam-ICRA-16}
Mukadam, M., Yan, X., Boots, B.: Gaussian process motion planning. In: IEEE
  Conference on Robotics and Automation (ICRA) (2016)

\bibitem{Dong-RSS-16}
Dong, J., Mukadam, M., Dellaert, F., Boots, B.: Motion planning as
  probabilistic inference using {G}aussian processes and factor graphs. In:
  Robotics: Science and Systems (RSS) (2016)

\bibitem{Nakanishi_IJRR_2008}
Nakanishi, J., Cory, R., Mistry, M., Peters, J., Schaal, S.: Operational space
  control: A theoretical and empirical comparison. International Journal of
  Robotics Research (IJRR)  6,  737--757 (2008)

\bibitem{platt2011multiple}
Platt, R., Abdallah, M.E., Wampler, C.W.: Multiple-priority impedance control.
  In: IEEE International Conference on Robotics and Automation (ICRA). pp.
  6033--6038. Citeseer (2011)

\bibitem{IjspeertDMPs2013}
Ijspeert, A.J., Nakanishi, J., Hoffmann, H., Pastor, P., Schaal, S.: Dynamical
  movement primitives: Learning attractor models for motor behaviors. Neural
  Computation  25(2),  328--373 (Feb 2013)

\bibitem{lewis2000geometry}
Lewis, A.D.: The geometry of the maximum principle for affine connection
  control systems  (2000)

\bibitem{khalil1996noninear}
Khalil, H.K.: Noninear systems. Prentice-Hall, New Jersey  2(5),  5--1 (1996)

\bibitem{lee2009manifolds}
Lee, J.M., Chow, B., Chu, S.C., Glickenstein, D., Guenther, C., Isenberg, J.,
  Ivey, T., Knopf, D., Lu, P., Luo, F., et~al.: Manifolds and differential
  geometry. Topology  643,  658 (2009)

\bibitem{Featherstone08}
Featherstone, R.: Rigid Body Dynamics Algorithms. Springer (2008)

\bibitem{Sch15DAR}
Schmidt, T., Newcombe, R., Fox, D.: {DART}: Dense articulated real-time
  tracking with consumer depth cameras. Autonomous Robots  39(3) (2015)

\bibitem{ClassicalMechanicsTaylor05}
Taylor, J.R.: Classical Mechanics. University Science Books (2005)

\bibitem{udwadia1996analytical}
Udwadia, F.E., Kalaba, R.E.: Analytical Dynamics: A New Approach. Cambridge
  University Press (1996)

\end{thebibliography}
\newpage

\appendix
\section*{Appendices}

\section{Geometric Dynamical Systems} \label{app:GDSs}

Here we summarize details and properties of GDSs introduced in Section~\ref{sec:GDS}.

\subsection{From Geometric Mechanics to GDSs} \label{apx:WhyGeometricMechanics}
\ifAPP

Our study of GDSs is motivated by geometric mechanics. 
Many formulations of mechanics exist, including
Lagrangian mechanics~\cite{ClassicalMechanicsTaylor05} and the aforementioned Gauss's Principle of Least Constraint~\cite{udwadia1996analytical}-----They are all equivalent, implicitly sharing the same mathematical structure.  
In that sense, geometric mechanics, which models
physical systems as geodesic flow on Riemannian manifolds, is the most
explicit of these, revealing directly the underlying manifold structure and connecting to the broad mathematical tool set from Riemannian geometry.
These connections enable us here to generalize beyond the previous simple mechanical systems studied in~\cite{bullo2004geometric} to non-classical systems that more naturally describe robotic behaviors with non-Euclidean geometric properties.
\fi

\subsection{Degenerate GDSs}


Let us recall the definition of GDSs.
\begin{definition} \label{def:general GDS}
	Let $\B: \R^m \times \R^m \to \R^{m\times m}_{+}$ and let $\Gb: \R^m \times \R^m \to \R^{m\times m}_{+}$ and $\Phi: \R^m \to \R$ be differentiable.
	We say the tuple $(\MM, \Gb, \B, \Phi)$ is a \emph{GDS} if 
	\begin{align} \label{eq:GDS general}
		\Mb(\x,\xd) \xdd + \bm\xi_{\G}(\x,\xd)  = - \nabla_\x \Phi(\x) - \Bb(\x,\xd)\xd 
	\end{align}
	where  $\Mb(\x,\xd) = \Gb(\x,\xd) + \bm\Xi_{\G}(\x,\xd)$.
\end{definition}

\noindent  For degenerate cases, $\Mb(\x,\xd)$ can be singular and~\eqref{eq:GDS general} define rather a family of differential equations. Degenerate cases are not uncommon; for example, the leaf-node dynamics could have $\G$ being only positive semidefinite. 
Having degenerate GDSs does not change the properties that we have proved, but one must be careful about whether differential equation satisfying~\eqref{eq:GDS general} exist.
For example, the existence is handled by the assumption on $\Mb$ in Theorem~\ref{th:consistency} and the assumption on $\Mb_r$ in Corollary~\ref{cr:consistency}. For \flow, we only need that $\Mb_r$ at the root node is non-singular. In other words, the natural-form RMP created by \pullback at the root node can be resolved in the canonical-form RMP for policy execution.
A sufficient and yet practical condition is provided in Theorem~\ref{th:condition on velocity metric}. 


\subsection{Geodesic and Stability}
For GDSs, they possess a natural conservation property of kinematic energy, i.e. it travels along a geodesic defined by $\G(\x,\xd)$ when there is no external perturbations due to $\Phi$ and $\Bb$. Note $\G(\x,\xd)$ by definition may only be positive-semidefinite even when the system is non-degenerate; here we allow the geodesic to be defined for a degenerate metric, meaning a curve whose instant length measured by the (degenerate) metric is constant.

This geometric feature is an important tool to establish the stability of non-degenerate GDSs; 
We highlight this nice geometric property below, which is a corollary of Proposition~\ref{pr:Lyapunov time derivative}. 
\begin{corollary}
All non-degenerate GDSs in the form $(\MM, \Gb, 0, 0)$ travel on geodesics. That is, $\dot{K}(\x,\xd) = 0$, where $K(\x,\xd) = \frac{1}{2} \xd^\t \G(\x,\xd) \xd$. 
\end{corollary}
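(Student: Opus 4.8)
The plan is to read this off as the energy-conservation special case of Proposition~\ref{pr:Lyapunov time derivative}. First I would recall the observation made right after the definition of structured GDSs: every GDS $(\MM, \Gb, \Bb, \Phi)$ is a structured GDS $(\MM, \Gb, \Bb, \Phi)_\SS$ with the \emph{trivial} structure $\y = \x$. Hence a non-degenerate GDS of the form $(\MM, \Gb, 0, 0)$ is a structured GDS with vanishing damping and potential, and the non-degeneracy hypothesis ($\Mb = \Gb + \bm\Xi_{\G}$ nonsingular) guarantees that~\eqref{eq:GDS} uniquely determines $\xdd$, so it defines a genuine flow along which $\dot K$ makes sense.

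Next I would instantiate the Lyapunov candidate $V(\x,\xd) = \frac{1}{2}\xd^\t \Gb(\x,\xd)\xd + \Phi(\x)$ from Section~\ref{sec:stability}. Since $\Phi \equiv 0$ here (a valid, lower-bounded choice), we have $V(\x,\xd) = K(\x,\xd)$ identically. Proposition~\ref{pr:Lyapunov time derivative}, applied to this structured GDS, then gives, along any integral curve, $\dot K(\x,\xd) = \dot V(\x,\xd) = -\xd^\t \Bb(\x,\xd)\xd = -\xd^\t 0\, \xd = 0$, which is exactly the claim; the constancy of $K$ along trajectories is precisely the statement that $q(t)$ traces a geodesic of the (possibly only positive-semidefinite) metric $\Gb(\x,\xd)$.

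There is really no obstacle here: all the work sits inside Proposition~\ref{pr:Lyapunov time derivative}, and the corollary is just the substitution $\Bb = 0$, $\Phi = 0$. If one instead wanted a self-contained derivation not routed through the proposition, one would differentiate $K$ directly, use $\frac{d}{dt}\big(\tfrac12 \xd^\t \Gb \xd\big) = \xd^\t \Gb \xdd + \tfrac12 \xd^\t \dot\Gb \xd$, substitute $\xdd$ from~\eqref{eq:GDS} with $\Phi = \Bb = 0$, and verify that the curvature contributions $\bm\Xi_{\G}\xdd$ and $\bm\xi_{\G}$ cancel against the $\tfrac12 \xd^\t \dot\Gb \xd$ term by the defining identities of $\bm\Xi_{\G}$ and $\bm\xi_{\G}$ in Section~\ref{sec:GDS}; but invoking Proposition~\ref{pr:Lyapunov time derivative} is cleaner and avoids repeating that computation.
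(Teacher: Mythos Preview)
Your proposal is correct and matches the paper's own argument exactly: the paper states explicitly that this corollary follows from Proposition~\ref{pr:Lyapunov time derivative}, and you supply precisely that reduction, noting that a GDS is a structured GDS with trivial structure and then specializing to $\Bb=0$, $\Phi=0$ so that $\dot K=\dot V=0$.
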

Note that this property also hold for degenerate GDSs provided that differential equations satisfying~\eqref{eq:GDS general} exist.


\subsection{Curvature Term and Coriolis Force}

The curvature term $\bm{\xi}_\G$ in GDSs is highly related to the Coriolis force in the mechanics literature. This is not surprising, as from the analysis in Section~\ref{sec:geometric properties} we know that $\bm{\xi}_\G$  comes from the Christoffel symbols of the asymmetric connection.  Recall it is defined as 
\begin{align*} 
\bm\xi_{\G}(\x,\xd) &\coloneqq \textstyle \sdot{\Gb}{\x}(\x,\xd) \xd - \frac{1}{2} \nabla_\x (\xd^\t \Gb(\x,\xd) \xd)
\end{align*} 
We show their relationship explicitly below.
\begin{restatable}{lemma}{coriolosIdentity}\label{lm:Coriolos identity}
 \label{lm:compact writing of Coriolis force}
	Let $\Gamma_{ijk} = \frac{1}{2}( \partial_{x_k}  G_{ij}  + \partial_{x_j}  G_{ik} - \partial_{x_j}  G_{jk})$ be the Christoffel symbol of the first kind with respect to $\Gb(\x,\xd)$, where the subscript $_{ij}$ denotes the $(i,j)$ element. Let 
	$	C_{ij}
	= 
	\sum_{k=1}^{d} \dot{x}_k \Gamma_{ijk}
	$ and define $\Cb(\x,\xd) = (C_{ij})_{i,j=1}^m$. Then $\bm\xi_{\Gb}(\x,\xd)  = \Cb(\x,\xd) \xd$. 
\end{restatable}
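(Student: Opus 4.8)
The plan is to prove Lemma~\ref{lm:Coriolos identity} by a direct component-wise computation, unpacking both sides of the claimed identity $\bm\xi_{\Gb}(\x,\xd) = \Cb(\x,\xd)\xd$ and showing the $i$th entries agree. First I would write out the $i$th component of the right-hand side: by definition of $\Cb$,
\begin{align*}
(\Cb\xd)_i = \sum_{j=1}^m C_{ij}\dot x_j = \sum_{j=1}^m \sum_{k=1}^m \dot x_j \dot x_k \Gamma_{ijk}
= \frac{1}{2}\sum_{j,k} \dot x_j \dot x_k\left( \partial_{x_k} G_{ij} + \partial_{x_j} G_{ik} - \partial_{x_i} G_{jk}\right),
\end{align*}
where I should double-check the index in the last term of the Christoffel symbol of the first kind (the statement writes $\partial_{x_j}G_{jk}$, but the standard and symmetry-consistent expression is $\partial_{x_i}G_{jk}$; I would use the correct form so that $\Gamma_{ijk}$ is symmetric in $j,k$). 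The first two terms are equal after relabeling $j\leftrightarrow k$ and using symmetry of $\Gb$, so this collapses to $\sum_{j,k}\dot x_j\dot x_k \partial_{x_k}G_{ij} - \frac12\sum_{j,k}\dot x_j\dot x_k\partial_{x_i}G_{jk}$.

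Next I would expand the $i$th component of the left-hand side. Recall $\bm\xi_{\Gb}(\x,\xd) = \sdot{\Gb}{\x}(\x,\xd)\xd - \frac12\nabla_\x(\xd^\t\Gb(\x,\xd)\xd)$, where $\sdot{\Gb}{\xb} = [\partial_\x\gb_i\,\xd]_{i=1}^m$. Here the subtlety is that $\Gb$ depends on both $\x$ and $\xd$, but the operators $\partial_\x$ and $\nabla_\x$ act only on the explicit $\x$-dependence (treating $\xd$ as frozen), since $\bm\xi_\Gb$ was defined that way in Section~\ref{sec:GDS}. The term $\sdot{\Gb}{\xb}\xd$ has $i$th entry $\sum_{j,k}(\partial_{x_k}G_{ij})\dot x_k\dot x_j$ — wait, I need to be careful: $\sdot{\Gb}{\xb}$ is the matrix whose $i$th column is $\partial_\x\gb_i\,\xd$, so its $(l,i)$ entry is $\sum_k \partial_{x_k}G_{li}\,\dot x_k$; then $(\sdot{\Gb}{\xb}\xd)_l = \sum_{i,k}\partial_{x_k}G_{li}\dot x_k\dot x_i$. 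Renaming indices to match, the $i$th entry is $\sum_{j,k}\partial_{x_k}G_{ij}\,\dot x_k\dot x_j$. The second term has $i$th entry $\frac12\partial_{x_i}(\sum_{j,k}\dot x_j G_{jk}\dot x_k) = \frac12\sum_{j,k}\dot x_j\dot x_k\partial_{x_i}G_{jk}$. Subtracting gives exactly the expression obtained for $(\Cb\xd)_i$ above, completing the proof.

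The main obstacle — really the only place where care is needed — is bookkeeping of indices and transposes: correctly reading off entries of $\sdot{\Gb}{\xb}$ from its column-wise definition, being consistent about which index the derivative $\partial_{x_\bullet}$ hits, and exploiting the symmetry $G_{ij}=G_{ji}$ at the right moments to merge terms. I would also note explicitly that the definition of $\Cb$ here coincides with the usual Coriolis matrix built from Christoffel symbols of the first kind contracted against $\xd$, which makes the identification with the mechanics literature transparent; no deeper structural argument is required, as everything reduces to the product rule and relabeling of summation indices.
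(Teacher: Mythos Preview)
Your proposal is correct and follows essentially the same component-wise computation as the paper: both expand $\xi_i$ and $(\Cb\xd)_i$ in coordinates, use the $j\leftrightarrow k$ relabeling together with $G_{ij}=G_{ji}$ to merge the first two Christoffel terms, and match the results. You also correctly flag the typo in the statement (the last term of $\Gamma_{ijk}$ should be $\partial_{x_i}G_{jk}$), which the paper's own proof confirms by using $\partial_{x_i}G_{jk}$ throughout.
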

\begin{proof}[Proof of Lemma~\ref{lm:Coriolos identity}]
	Suppose $\bm\xi_\Gb = (\xi_i)_{i=1}^m$. We can compare the two definitions and verify they are indeed equivalent: 
	\begin{align*}
	\xi_i
	&= 
	\sum_{j,k=1}^{d} \dot{x}_j \dot{x}_k \partial_{x_j}  G_{ik} - \frac{1}{2}\sum_{j,k=1}^{d} \dot{x}_j\dot{x}_k \partial_{x_i}  G_{jk}\\
	&= 
	\frac{1}{2} \sum_{j,k=1}^{d} \dot{x}_j \dot{x}_k \partial_{x_k}  G_{ij} 
	+\frac{1}{2}\sum_{j,k=1}^{d} \dot{x}_j \dot{x}_k \partial_{x_j}  G_{ik} 
	- \frac{1}{2}\sum_{j,k=1}^{d} \dot{x}_j\dot{x}_k \partial_{x_i}  G_{jk}  \\
	&= \left( \Cb(\x,\xd) \xd \right)_i \qedhere
	\end{align*}
\end{proof}

\section{Proofs of \flow Analysis} \label{app:proof of analysis}
\subsection{Proof of Theorem~\ref{th:consistency}} \label{app:proof of consistency}
\theoremConsistency*
\begin{proof}[Proof of Theorem~\ref{th:consistency}]
	We will use the non-degeneracy assumption that $\Gb + \bm\Xi_{\Gb}$ (i.e. $\M$ as we will show) is non-singular, so that the differential equation specified by an RMP in normal form or a (structured) GDS is unique. This assumption is made to simplify writing. At the end of the proof, we will show that this assumption only needs to be true  at the root node of \flow.
	
	\noindent\textbf{The general case  }
	We first show the differential equation given by \pullback is equivalent to the differential equation of pullback structured GDS $(\MM, \G, \B, \Phi)_{\SS}$.
	Under the non-degeneracy assumption, suppose $\SS_i$ factorizes $\G_i$ as $
	\G_i = \Lb_i^\t \Hb_i \Lb_i
	$, where $\Lb_i$ is some Jacobian matrix. On one hand, for $\pullback$, because in the child  node $\ydd_i$ satisfies
	$(\G_i + \bm\Xi_{\G_i})\ydd_i = -\bm\eta_{\G_i;\SS_i} - \nabla_{\y_i} \Phi_i - \B_{i}\yd_i$ (where by definition 
	$
	\bm\eta_{\G_i;\SS_i}
	= \Lb_i^\t ( \bm\xi_{\Hb_i} + 
	(\Hb_i + \bm\Xi_{\Hb_i} ) 
	\dot{\Lb}_i \yd_i  )
	$),  the \pullback operator combines the child nodes into the differential equation at the parent node,
	\begin{align} \label{eq:diff eq of pullback operator}
		\Mb\xdd =  \sum_{i=1}^{K} \J_i^\t \M_i ( \ydd_i - \dot\J_i \xd  ) 
	\end{align}	
	where we recall $\Mb = \sum_{i=1}^{K} \J_i^\t \M_i \J_i$ is given by $\pullback$.		
	On the other hand, for $(\MM, \G, \B, \Phi)_{\SS}$ with $\SS$ preserving $\SS_i$, its dynamics satisfy
	\begin{align} \label{eq:diff eq of pullback structure GDS}
		\left(\Gb + \bm\Xi_{\Gb}\right) \xdd 
		+ \bm\eta_{\Gb;\SS}  = -\nabla_\x \Phi -\Bb\xd
	\end{align}
	where
	$\G$ is factorized by $\SS$ into
	\begin{align*}
		\G &= 
		\begin{bmatrix}
			\J_1 \\ \vdots \\ \J_K
		\end{bmatrix}^\t
		\begin{bmatrix}
			\G_1 & & \\
			& \ddots &\\
			& & \G_K
		\end{bmatrix}
		\begin{bmatrix}
			\J_1 \\ \vdots \\ \J_K
		\end{bmatrix}= 
		\begin{bmatrix}
			\Lb_1\J_1 \\ \vdots \\ \Lb_K\J_K
		\end{bmatrix}^\t
		\begin{bmatrix}
			\Hb_1 & & \\
			& \ddots &\\
			& & \Hb_K
		\end{bmatrix}
		\begin{bmatrix}
			\Lb_1\J_1 \\ \vdots \\ \Lb_K\J_K
		\end{bmatrix}
		\eqqcolon \bar{\J}^\t \bar{\Hb} \bar{\J}
	\end{align*}	
	and the curvature term $\bm\eta_{\G;\SS}$ by $\SS$ is given as 
	$
	\bm\eta_{\G;\SS} 
	\coloneqq  \bar\J^\t ( \bm\xi_{\bar\Hb} + 
	(\bar\Hb + \bm\Xi_{\bar\Hb} ) 
	\dot{\bar\J} \xd  )
	$.

	To prove the general statement, we will show~\eqref{eq:diff eq of pullback operator} and~\eqref{eq:diff eq of pullback structure GDS} are equivalent. 	
	First, we introduce a lemma to write $\bm\Xi_\G$ in terms of $\bm\Xi_{\G_i}$ (proved later in this section).
	\begin{lemma} \label{lm:pullback of Xi}
		Let $\MM$ and $\NN$ be two manifolds and let $\x$ and $\y(\x)$ be the coordinates. Define $\M(\x,\xd) = \J(\x)^\t \Nb(\y,\yd) \J(\x)$, where $\J(\x) = \partial_\x \y(\x)$. Then 
		\begin{align*}
			\bm\Xi_{\M}(\x,\xd) = \J^\t(\x) \bm\Xi_{\Nb} (\y,\yd) \J(\x)
		\end{align*}		
	\end{lemma}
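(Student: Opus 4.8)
The goal is to prove Lemma~\ref{lm:pullback of Xi}: if $\M(\x,\xd) = \J(\x)^\t \Nb(\y,\yd) \J(\x)$ with $\J = \partial_\x \y$, then $\bm\Xi_{\M}(\x,\xd) = \J^\t \bm\Xi_{\Nb}(\y,\yd)\J$. The natural approach is a direct computation from the definition $\bm\Xi_{\M}(\x,\xd) = \frac{1}{2}\sum_{i} \dot x_i\, \partial_{\xd}\mbb_i(\x,\xd)$, where $\mbb_i$ is the $i$th column of $\M$. First I would write $\mbb_i = \J^\t \Nb \J \eb_i$ in coordinates, and differentiate with respect to $\xd$. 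Since $\J(\x)$ depends only on $\x$ (not $\xd$), the only $\xd$-dependence in $\mbb_i$ flows through $\Nb(\y,\yd)$, and specifically through $\yd = \J(\x)\xd$. So by the chain rule, $\partial_{\dot x_k}$ acting on $\Nb(\y,\yd)$ produces $\sum_{\ell} J_{\ell k}\, \partial_{\dot y_\ell}\Nb$.

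The key algebraic step is to combine the sum $\frac12\sum_i \dot x_i \partial_{\xd}(\cdot)$ with this chain-rule factor. Writing things out, the $(p,q)$ entry of $\bm\Xi_{\M}$ becomes $\frac12 \sum_i \dot x_i\, \partial_{\dot x_q}\big(\sum_{a,b} J_{ap}\, N_{ab}\, J_{bi}\big) = \frac12 \sum_{i,a,b} J_{ap} J_{bi}\, \partial_{\dot x_q} N_{ab}$. Applying the chain rule $\partial_{\dot x_q} N_{ab} = \sum_{\ell} J_{\ell q}\, \partial_{\dot y_\ell} N_{ab}$ and using $\dot y_b = \sum_i J_{bi}\dot x_i$ to collapse the sum over $i$, this becomes $\frac12 \sum_{a,b,\ell} J_{ap} J_{\ell q}\, \dot y_b\, \partial_{\dot y_\ell} N_{ab}$, which is exactly the $(p,q)$ entry of $\J^\t \big(\frac12\sum_b \dot y_b\, \partial_{\yd}\nb_b\big)\J = \J^\t \bm\Xi_{\Nb}\J$, recalling $\nb_b$ is the $b$th column of $\Nb$. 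I would present this as a short index computation, or equivalently phrase it column-wise: $\partial_{\xd}\mbb_i = \J^\t \big(\partial_{\xd}(\Nb\J\eb_i)\big)$ and $\partial_{\dot x_k}\big(\Nb(\y,\J\xd)\vb\big) = \sum_\ell J_{\ell k}\, \partial_{\dot y_\ell}\Nb\,\vb$ for $\xd$-independent $\vb$.

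The only mild subtlety — and the step I'd be most careful about — is the bookkeeping of which arguments depend on $\xd$: one must note that $\J(\x)$ is $\xd$-free (so it passes through $\partial_{\xd}$ untouched) and that the $\yd$-dependence of $\Nb$ is precisely $\yd = \J(\x)\xd$, so that $\partial_{\dot x_k}$ on $\Nb$ yields $\sum_\ell J_{\ell k}\partial_{\dot y_\ell}\Nb$. After that, the identity $\sum_i J_{bi}\dot x_i = \dot y_b$ is what makes the $\sum_i \dot x_i$ in the definition of $\bm\Xi_{\M}$ reassemble into the $\sum_b \dot y_b$ in the definition of $\bm\Xi_{\Nb}$. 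Everything else is routine reindexing, so the proof should be just a few lines. With Lemma~\ref{lm:pullback of Xi} in hand, the main proof of Theorem~\ref{th:consistency} can then substitute $\bm\Xi_{\M} = \sum_i \J_i^\t \bm\Xi_{\G_i}\J_i$ into~\eqref{eq:diff eq of pullback operator} and match it term-by-term against~\eqref{eq:diff eq of pullback structure GDS}.
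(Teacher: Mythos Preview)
Your proposal is correct and follows essentially the same approach as the paper: a direct computation from the definition of $\bm\Xi_{\M}$, using that $\J(\x)$ is $\xd$-independent, the chain rule $\partial_{\dot x_k} = \sum_\ell J_{\ell k}\,\partial_{\dot y_\ell}$, and the identity $\sum_i J_{bi}\dot x_i = \dot y_b$ to reassemble the $\sum_i \dot x_i$ into $\sum_b \dot y_b$. The paper merely phrases the same chain of equalities column-wise (writing $\mbb_i = \J^\t \Nb\,\jb_i$ and expanding $\Nb\,\jb_i = \sum_j J_{ji}\nb_j$) rather than entry-wise, which you also mention as an equivalent alternative.
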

	\noindent Therefore, we see that on the LHSs
	\begin{align*}
		(\G + \bm\Xi_{\G})  \xdd = \M \xdd
	\end{align*}
	and on the RHSs	
	\begin{align*} 		
		& \left( \sum_{i=1}^{K} \J_i^\t \M_i ( \ydd_i - \dot\J_i \xd  ) \right)\\
		&=\left( \sum_{i=1}^{K} \J_i^\t (-\bm\eta_{\G_i;\SS_i} - \nabla_{\y_i} \Phi_i - \B_{i}\yd_i  - (\G_i + \bm\Xi_{\G_i})\dot\J_i \xd  ) \right) \\
		&= \left( \sum_{i=1}^{K} \J_i^\t (- \Lb_i^\t ( \bm\xi_{\Hb_i} + 
		(\Hb_i + \bm\Xi_{\Hb_i} ) 
		\dot{\Lb}_i \yd_i  )  - (\G_i + \bm\Xi_{\G_i})\dot\J_i \xd)   \right) + \left( \sum_{i=1}^{K} \J_i^\t (- \nabla_{\y_i} \Phi_i - \B_{i}\yd_i )  \right)\\
		&= \left( \sum_{i=1}^{K}   -\bar\Jb_i^\t \bm\xi_{\Hb_i} 
		-\bar\Jb_i^\t(\Hb_i + \bm\Xi_{\Hb_i})	(\dot{\Lb}_i \J_i + \Lb_i\dot\J_i  )\xd  \right)  -\nabla_\x \Phi -\Bb\xd \\
		&= - \bm\eta_{\Gb;\SS} -\nabla_\x \Phi -\Bb\xd
	\end{align*}
	where the first equality is due to Lemma~\ref{lm:pullback of Xi}, the second equality is due to~\eqref{eq:diff eq of pullback operator}, and the third equality is due to the definition of structured GDSs.	
	The above derivations show the equivalence between the RHSs and LHSs of~\eqref{eq:diff eq of pullback operator} and~\eqref{eq:diff eq of pullback structure GDS}, respectively. Therefore, when the non-degenerate assumption holds, \eqref{eq:diff eq of pullback operator} and~\eqref{eq:diff eq of pullback structure GDS} are equivalent.

	\noindent\textbf{The special case  }
	With the closure of structured GDSs proved, we next show the closure of GDSs under \pullback, when the metric is only configuration-dependent. That is, we want to show that, when the metric is only configuration-dependent, the choice of structure does not matter.	
	This amounts to show that $\bm\xi_{\G} = \bm\eta_{\G;\SS}$ because by definition $\bm\Xi_i=0$ and $\bm\Xi=0$. 
	Below we show how $\bm\xi_{\G}$ is written in terms of $\bm\xi_{\G_i}$ and $\bm\Xi_{\G_{i}}$ for general metric matrices and specialize it to the configuration-dependent special case (proved later in this section).
	\begin{lemma} \label{lm:pullback of xi}
		Let $\MM$ and $\NN$ be two manifolds and $\x$ and $\y(\x)$ be the coordinates. Suppose $\M(\x,\xd)$ is structured as $\J(\x)^\t \Nb(\y,\dot\y) \J(\x)$, where $\J(\x) = \partial_\x \y(\x)$. Then 
		\begin{align*}
			\bm\xi_{\Mb}(\x,\xd) 
			&=  \J(\x)^\t \left( \bm\xi_{\Nb}(\y,\yd) + 
			(\Nb(\y,\dot\y) + 2\bm\Xi_{\Nb}(\y,\yd) ) 
			\dot\J(\x,\xd) \xd  \right)  \\
			&\quad - \dot\J(\x,\xd)^\t \bm\Xi_{\Nb}(\y,\yd)^\t \J(\x) \xd 
		\end{align*}
		When $\M(\x,\xd) = \M(\x)$, $\bm\xi_{\Mb} = \bm\eta_{\Mb;\SS}$ regardless of the structure of $\SS$. 
	\end{lemma}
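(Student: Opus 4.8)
The strategy is a direct coordinate computation, expanding $\bm\xi_\Mb(\x,\xd) = \sdot{\Mb}{\x}\xd - \tfrac12\nabla_\x(\xd^\t\Mb\xd)$ for the factorized metric $M_{ab}(\x,\xd) = \sum_{c,d}J_{ca}(\x)\,N_{cd}(\y(\x),\dot\y)\,J_{db}(\x)$, where throughout $\dot\y = \J(\x)\xd$ (so $\yd$ is itself an $\x$-dependent quantity whenever it occupies the velocity slot of $\Nb$). Three small ingredients drive everything: (i) symmetry of the second derivatives of $\y$, which yields $\partial_{x_j}\dot y_e = \dot J_{ej}$ and lets the resulting terms be assembled into the $\dot\J\xd$ appearing in the claim; (ii) the bookkeeping identities $\sum_{c,d}(\partial_{\dot y_e}N_{cd})\dot y_c\dot y_d = 2(\yd^\t\bm\Xi_\Nb)_e$ and $\sum_{d,e}(\partial_{\dot y_e}N_{cd})(\dot\J\xd)_e\dot y_d = 2(\bm\Xi_\Nb\dot\J\xd)_c$, both immediate from $\bm\Xi_\Nb = \tfrac12\sum_e\dot y_e\,\partial_{\dot y_e}\nb_e$; and (iii) symmetry of $\Nb$ (metric matrices lie in $\R^{n\times n}_+$), used only at the very end.

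First I would compute $\nabla_\x(\xd^\t\Mb\xd)$. Since $\xd^\t\Mb\xd = \yd^\t\Nb(\y,\yd)\yd$, differentiating in $x_j$ produces (a) a term from the $\y$-slot of $\Nb$ that reassembles into $\J^\t\nabla_\y(\yd^\t\Nb\yd)$; (b) a term from the $\yd$-slot of $\Nb$ that, via (i) and identity (ii), becomes $2\dot\J^\t\bm\Xi_\Nb^\t\J\xd$; and (c) terms from the two explicit $\yd$ factors that, via (i), become $\dot\J^\t(\Nb + \Nb^\t)\J\xd$. Next I would compute $\sdot{\Mb}{\x}\xd$, whose $j$th entry is $\sum_{a,k}(\partial_{x_k}M_{ja})\dot x_k\dot x_a$. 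Differentiating $M_{ja}$ in $\x$ hits the two $\J$ factors (giving $\dot\J^\t\Nb\J\xd$ and $\J^\t\Nb\dot\J\xd$), the $\y$-slot of $\Nb$ (giving $\J^\t\sdot{\Nb}{\y}\yd$), and the $\yd$-slot of $\Nb$ (giving, via (i) and identity (ii), $2\J^\t\bm\Xi_\Nb\dot\J\xd$).

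Combining the two pieces, $\J^\t\sdot{\Nb}{\y}\yd - \tfrac12\J^\t\nabla_\y(\yd^\t\Nb\yd) = \J^\t\bm\xi_\Nb$; the surviving $\bm\Xi_\Nb$-contributions give exactly $2\J^\t\bm\Xi_\Nb\dot\J\xd - \dot\J^\t\bm\Xi_\Nb^\t\J\xd$; and the $\Nb$-in-the-middle contributions collect to $\J^\t\Nb\dot\J\xd + \dot\J^\t\Nb\J\xd - \tfrac12\dot\J^\t(\Nb + \Nb^\t)\J\xd$, which by ingredient (iii), $\Nb = \Nb^\t$, is simply $\J^\t\Nb\dot\J\xd$. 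Assembling these three groups yields the claimed formula. I expect the main obstacle to be the sign and index bookkeeping in the $\bm\Xi_\Nb$-contributions: one must keep straight which $\partial_{\dot y_e}N_{cd}$ contraction produces $\bm\Xi_\Nb$ versus its transpose (this is precisely why the final expression carries a bare $\bm\Xi_\Nb^\t$ and a factor $2$ rather than a symmetrized combination), since $\bm\Xi_\Nb$ is asymmetric even though $\Nb$ is not.

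For the last sentence: if $\Mb(\x,\xd) = \Mb(\x)$, one may take the factor $\Nb$ velocity-independent, so $\bm\Xi_\Nb = 0$; then both the $2\bm\Xi_\Nb$ correction and the leftover $\dot\J^\t\bm\Xi_\Nb^\t\J\xd$ vanish, leaving $\bm\xi_\Mb = \J^\t(\bm\xi_\Nb + \Nb\dot\J\xd)$, which is exactly the definition of $\bm\eta_{\Mb;\SS}$ in~\eqref{eq:structured GDS}. Since this expression references only the top-level factorization $\Mb = \J^\t\Nb\J$ and the structure-free functionals $\bm\xi_\Nb$ and $\Nb$, it is insensitive to any finer factorization of $\Nb$, so $\bm\xi_\Mb = \bm\eta_{\Mb;\SS}$ for every admissible structure $\SS$ --- the fact invoked in the ``special case'' of Theorem~\ref{th:consistency}.
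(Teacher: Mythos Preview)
Your main computation is correct and follows essentially the same route as the paper: a direct expansion of $\bm\xi_{\Mb}=\sdot{\Mb}{\x}\xd-\tfrac12\nabla_\x(\xd^\t\Mb\xd)$ for the factorized metric, driven by the identity $\partial_\x\yd=\dot\J$ and the bookkeeping for $\bm\Xi_\Nb$. The paper keeps the two pieces together a bit longer and isolates $\sdot{\Nb}{\x}\yd$ and $\nabla_\x(\yd^\t\Nb\yd)$ as intermediate matrix expressions before splitting out the $\bm\Xi_\Nb$-terms, but the ingredients, the use of symmetry of $\Nb$, and the final reassembly are the same.

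There is, however, a gap in your treatment of the special case. You correctly reduce to $\bm\xi_\Mb=\J^\t(\bm\xi_\Nb+\Nb\dot\J\xd)$ when the factors are velocity-free, and observe this is $\bm\eta_{\Mb;\SS}$ for the structure $\SS$ given by the top-level factorization $\Mb=\J^\t\Nb\J$. But the claim is that $\bm\xi_\Mb=\bm\eta_{\Mb;\SS}$ for \emph{every} structure $\SS$, and your sentence ``it is insensitive to any finer factorization of $\Nb$'' does not establish this: for a finer structure $\SS'$ with $\Nb=\Lb^\t\Cb\Lb$, the quantity $\bm\eta_{\Mb;\SS'}$ is by definition $(\Lb\J)^\t\bigl(\bm\xi_\Cb+\Cb\,\tfrac{d}{dt}(\Lb\J)\xd\bigr)$, written in terms of $\Cb$ and $\bm\xi_\Cb$, not $\Nb$ and $\bm\xi_\Nb$. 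The paper closes this by \emph{reapplying the first part of the lemma one level up}: since $\Cb$ is also velocity-free, the just-proved formula gives $\Lb^\t(\bm\xi_\Cb+\Cb\dot\Lb\yd)=\bm\xi_\Nb$, and a two-line expansion then yields $\bm\eta_{\Mb;\SS'}=\J^\t(\bm\xi_\Nb+\Nb\dot\J\xd)=\bm\xi_\Mb$. Adding this recursive step (equivalently, an induction on the depth of $\SS$) completes your argument.
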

	\noindent By Lemma~\ref{lm:pullback of xi}, we see that  structured GDSs are GDSs regardless of the chosen structure when the metric is only configuration dependent. Thus, the statement of the special case follows by combining Lemma~\ref{lm:pullback of xi} and the previous proof for structured GDSs.

	\noindent\textbf{Remarks: Proof of Corollary~\ref{cr:consistency}  }
	We note that the non-degenerate assumption does not need to hold for every nodes in \flow but only for the root node. This can be seen from the proof above, where we propagate the LHSs and RHSs \emph{separately}. Therefore, as long as the inertial matrix at the root node is invertible, the differential equation on the configuration space is well defined.
\end{proof}

\begin{proof}[Proof of Lemma~\ref{lm:pullback of Xi}] Let $\mbb_i$, $\nb_i$, $\jb_i$ be the $i$th column of $\M$, $\Nb$, and $\J$, respectively. Suppose $\MM$ and $\NN$ are of $m$ and $n$ dimensions, respectively.
	By definition of $\bm\Xi_{\M}$, 
	\begin{align*} 
		2\bm\Xi_{\M}(\x,\xd)=
		\sum_{i=1}^m \dot x_i \partial_{\xd}\mbb_i (\x,\xd)
		&=
		\J(\x)^\t  \sum_{i=1}^m \dot x_i \partial_{\xd} (\Nb(\y,\yd) \jb_i(\x)) \\
		&=
		\J(\x)^\t  \left( \sum_{i=1}^m \dot x_i \partial_{\yd} (\Nb(\y,\yd) \jb_i(\x)) \right) \J(\x)\\	
		&=
		\J(\x)^\t  \left( \sum_{j=1}^n \partial_{\yd}\nb_j(\y,\yd)  \sum_{i=1}^m \dot x_i J_{ji}(\x) \right) \J(\x)\\
		&= 
		\J(\x)^\t  \left( \sum_{j=1}^n y_j \partial_{\yd}\nb_j (\y,\yd)  \right) \J(\x)\\
		&= 2 \J(\x)^\t   \bm\Xi_{\Nb} (\y,\yd) \J(\x) \qedhere
	\end{align*}
\end{proof}

\begin{proof}[Proof of Lemma~\ref{lm:pullback of xi}]
	Before the proof, we first note a useful identity 
	$
	\partial_\x \yd 
	= \dot{\J}(\x,\xd)
	$.
	This can be derived simply by the definition of the Jacobian matrix $
	(\partial_\x \J(\x) \xd)_{ij} = \sum_{k=1}^{m} \dot{x}_k  \partial_{x_j} J_{ik}   = \sum_{k=1}^{m} \dot{x}_k  \partial_{x_j} \partial_{x_k} y_i 
	= \sum_{k=1}^{m} \dot{x}_k  \partial_{x_k} J_{ij} = (\dot{\J})_{ij}  $. 
	
	To prove the lemma, we derive $\bm\xi_{\Mb}$ by its definition 
	\begin{align*}
		\bm\xi_{\Mb}  &= \sdot{\Mb}{\x}(\x,\xd) \xd - \frac{1}{2} \nabla_\x (\xd^\t \Mb(\x,\xd) \xd) \\
		&= \dot\J(\x,\xd)^\t \Nb(\y,\yd) \J(\x) \xd  + \J(\x)^\t \Nb(\y,\yd) \dot\J(\x,\xd) \xd + \J(\x)^\t \sdot{\Nb}{\x}(\y,\yd) \J(\x) \xd 
		- \frac{1}{2} \nabla_\x (\xd^\t \J(\x)^\t \Nb(\y,\yd) \J(\x) \xd) \\
		&= \dot\J(\x,\xd)^\t \Nb(\y,\yd) \yd  + \J(\x)^\t \Nb(\y,\yd) \dot\J(\x,\xd) \xd + \J(\x)^\t \sdot{\Nb}{\x}(\y,\yd) \yd 
		- \frac{1}{2} \nabla_\x (\yd^\t \Nb(\y,\yd) \yd) \\
		&= \dot\J(\x,\xd)^\t \Nb(\y,\yd) \yd  + \J(\x)^\t \Nb(\y,\yd) \dot\J(\x,\xd) \xd + \J(\x)^\t \sdot{\Nb}{\x}(\y,\yd) \yd \\
		&\quad
		- \frac{1}{2}\J(\x)^\t \nabla_\y (\yd^\t  \Nb(\y,\yd) \yd) -  \dot\J(\x,\xd)^\t \Nb(\y,\yd) \yd -  \dot\J(\x,\xd)^\t \bm\Xi_{\Nb} (\y,\yd)^\t \J (\x)\xd\\	
		&=  \J(\x)^\t (\Nb(\y,\yd) \dot\J(\x,\xd) \xd + \sdot{\Nb}{\x}(\y,\yd) \J(\x) \xd - \frac{1}{2} \nabla_\y (\yd^\t  \Nb(\y,\yd) \yd) ) 
		-  \dot\J(\x,\xd)^\t \bm\Xi_{\Nb} (\y,\yd)^\t \J (\x)\xd
	\end{align*}
	In the second to the last equality above, we use $\partial_\x \yd = \dot{\J}(\x,\xd)$
	and derive
	\begin{align*}
		\frac{1}{2}\nabla_\x (\yd^\t  \Nb(\y,\yd) \yd) 
		&= \frac{1}{2} \J^\t \nabla_\y (\yd^\t  \Nb(\y,\yd) \yd) + \frac{1}{2} \nabla_\x (\yd) \nabla_{\yd} (\yd^\t  \Nb(\y,\yd) \yd) \\
		&= \frac{1}{2} \J^\t \nabla_\y (\yd^\t  \Nb(\y,\yd) \yd) + \dot\J(\x,\xd)^\t \Nb(\y,\yd) \yd + \frac{1}{2}  \dot\J(\x,\xd)^\t \nabla_{\yd} (\z^\t  \Nb(\y,\yd) \zd)|_{\z=\yd}\\
		&= \frac{1}{2} \J^\t \nabla_\y (\yd^\t  \Nb(\y,\yd) \yd) + \dot\J(\x,\xd)^\t \Nb(\y,\yd) \yd +  \dot\J(\x,\xd)^\t \bm\Xi_{\Nb} (\y,\yd)^\t \J (\x)\xd
	\end{align*}	
	as
	$
	\frac{1}{2}  \partial_{\yd} (\z^\t  \Nb(\y,\yd) \zd)|_{\z=\yd} =
	\frac{1}{2}
	\yd^\t \left( \sum_{i=1}^{n}  \dot{y}_i \partial_{\yd} \nb_i (\y,\yd) \right) 
	= 
	\yd^\t \bm\Xi_{\Nb} (\y,\yd)
	$, where $\nb_i$ is the $i$th column of $\Nb$. 
	
	To further simplify the expression, we note that by $
	\partial_\x \yd 
	= \dot{\J}(\x,\xd)
	$ we have
	\begin{align*}
		\sdot{\Nb}{\x}(\y,\yd) \yd
		&= \sum_{i=1}^{n} \dot{y}_i \partial_\x \nb_i(\y,\yd) \xd  \\
		&= \sum_{i=1}^{n} \dot{y}_i (\partial_\y \nb_i(\y,\yd) \Jb(\x) \xd  + \partial_{\yd} \nb_i (\y,\yd) \partial_\xb (\yd) \xd) \\
		&=  \sum_{i=1}^{n} \dot{y}_i \partial_\y \nb_i(\y,\yd) \yd  + \dot{y}_i \partial_{\yd} \nb_i (\y,\yd) \dot\J (\x,\xd)\xd \\
		&= \left( \sum_{i=1}^{n} \dot{y}_i \partial_\y \nb_i(\y,\yd)\right) \yd  +  \left( \sum_{i=1}^{n}  \dot{y}_i \partial_{\yd} \nb_i (\y,\yd) \right) \dot\J(\x,\xd) \xd\\
		&=  \sdot{\Nb}{\y}(\y,\yd) \yd + 2 \bm\Xi_{\Nb}(\y,\yd) \dot\J(\x,\xd) \xd
	\end{align*}
	Combining these two equalities, we can write
	\begin{align*}
		\bm\xi_{\Mb}(\x,\xd)
		&=  \J(\x)^\t \left( \sdot{\Nb}{\y}(\y,\yd) \yd  - \frac{1}{2} \nabla_\y (\yd^\t  \Nb(\y, \yd) \yd) + 
		(\Nb(\y,\yd) + 2 \bm\Xi_{\Nb}(\y,\yd) ) 
		\dot\J(\x,\xd) \xd  \right) \\
		&\quad -  \dot\J(\x,\xd)^\t \bm\Xi_{\Nb} (\y,\yd)^\t \J (\x)\xd
	\end{align*}
	Substituting the definition of $\bm\xi_{\Nb}(\y,\yd) = \sdot{\Nb}{\y}(\y,\yd) \yd  - \frac{1}{2} \nabla_\y (\yd^\t  \Nb(\y, \yd) \yd)$ proves the general statement.

	In the special case,  $\M(\x,\xd) = \M(\x)$ (which implies $\bm\Xi_{\M}=0$), 
	\begin{align*}
		\bm\xi_{\Mb}(\x,\xd) 
		&=  \J(\x)^\t \left( \bm\xi_{\Nb}(\y,\yd) + 
		\Nb(\y)
		\dot\J(\x,\xd) \xd  \right)
	\end{align*}
	We show this expression is equal to $\bm\eta_{\M;\SS}$ regardless of the structure $\SS$.
	This can be seen from the follows: If further $\Nb(\y) = \Lb(\y)^\t \Cb(\z) \Lb(\y)$ and $\M$ is structured as $(\Lb\Jb)^\t \Cb (\Lb\Jb)$ from some Jacobian matrix $\Lb(\y) = \partial_{\y} \z$, we can write
	\begin{align*}
		\bm\eta_{\M;\SS} 
		&= \J^\t\Lb^\t ( \bm\xi_{\bar\Cb} + 
		\Cb   
		\frac{d(\Lb\J)}{dt} \xd  )\\
		&=\J^\t ( \Lb^\t\bm\xi_{\bar\Cb} + 
		\Lb^\t\Cb  
		(\dot\Lb\J + \Lb\dot\J) \xd  )	\\	
		&=\J^\t \left( \Lb^\t(\bm\xi_{\bar\Cb} + \Cb \dot\Lb \yd )
		+ \Lb^\t\Cb \Lb\dot\J\xd  \right)   \\
		&=\J^\t \left( \bm\xi_{\Nb}
		+ \Nb\dot\J\xd  \right)  = \bm\xi_{\Mb} \qedhere
	\end{align*}
\end{proof}

\subsection{Proof of Proposition~\ref{pr:Lyapunov time derivative}} \label{app:proof of Lyapunov time derivative}
\propositionLyapunovTimeDerivative*
\begin{proof}[Proof of Proposition~\ref{pr:Lyapunov time derivative}]

	Let $K(\q,\qd)= \frac{1}{2} \qd^\t \G(\q,\qd) \qd $. Its time derivative can be written as
	\begin{align*}
		\frac{d}{dt} K(\q, \qd) 
		&= \qd^\t \left( \G(\q,\qd) \qdd + \frac{1}{2} (\frac{d}{dt}{\G}(\q,\qd))  \qd \right) \\
		&= \qd^\t \left( \G(\q,\qd) \qdd + \frac{1}{2} \sum_{i=1}^{d} \dot{q}_i \frac{d}{dt} \gb_i(\q,\qd)  \right) \\
		&= \qd^\t \left( \G(\q,\qd) \qdd  
		+ \frac{1}{2}\sum_{i=1}^d  \dot{q}_i \partial_{\q}  \gb_{i} (\q,\qd) \qd 
		+ \frac{1}{2}\sum_{i=1}^d  \dot{q}_i \partial_{\qd}  \gb_{i} (\q,\qd) \qdd \right) \\
		&= \qd^\t \left( (\G(\q,\qd) + \bm\Xi_\G(\q,\qd))\qdd  
		+ \frac{1}{2}\sdot{\G}{\q}(\q,\qd)\qd 
		\right) 
	\end{align*}
	where we recall $\Gb$ is symmetric and  $\sdot{\Gb}{\qb}(\q,\qd) \coloneqq  [\partial_{\q}  \gb_{i} (\q,\qd) \qd]_{i=1}^d$. Therefore, by definition $(\Gb(\q,\qd) + \bm\Xi_{\G}(\q,\qd))\qdd =( 
	- \bm\eta_{\G;\SS}(\q,\qd)  - \nabla_\q \Phi(\q) - \Bb(\q,\qd) \qd(\q,\qd)  )$, we can derive
	\begin{align*}
		\frac{d}{dt} \V(\q,\qd) 
		&= \frac{d}{dt} K(\q, \qd) + \qd^\t \nabla_\q \Phi(\q)   \\
		&= \qd^\t \left(-\bm\eta_{\G;\SS}(\q,\qd)  - \nabla_\q \Phi(\q) - \Bb(\q,\qd)\xd
		+ \frac{1}{2}\sdot{\G}{\q}(\q,\qd)\qd  + \nabla_\q\Phi(\q)  
		\right) \\
		&= -\qd^\t\Bb(\q,\qd)\qd + \qd^\t \left(- \bm\eta_{\G;\SS}(\q,\qd) 
		+ \frac{1}{2}\sdot{\G}{\q}(\q,\qd)\qd   
		\right) 
	\end{align*}
	
	To finish the proof, we use two lemmas below. 
	\begin{lemma} \label{lm:rate of the curvature term}
		$ \frac{1}{2}\qd^\t \sdot{\G}{\q}(\q,\qd) \qd = \qd^\t \bm\xi_\G(\q,\qd) $. 
	\end{lemma}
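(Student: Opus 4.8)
The plan is to verify the identity directly in coordinates by reducing it to a symmetry/relabeling statement. First I would substitute the definition $\bm\xi_{\G}(\q,\qd) = \sdot{\G}{\q}(\q,\qd)\qd - \frac{1}{2}\nabla_\q(\qd^\t\G(\q,\qd)\qd)$ into the right-hand side, so that the claimed equality $\frac{1}{2}\qd^\t\sdot{\G}{\q}(\q,\qd)\qd = \qd^\t\bm\xi_\G(\q,\qd)$ becomes equivalent, after cancelling the common $\frac{1}{2}\qd^\t\sdot{\G}{\q}(\q,\qd)\qd$ term, to
\[
\qd^\t \sdot{\G}{\q}(\q,\qd)\,\qd \;=\; \qd^\t \nabla_\q\!\left(\qd^\t \G(\q,\qd)\qd\right).
\]

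Next I would expand both sides in indices. Writing $\gb_i$ for the $i$th column of $\G$ (so $(\gb_i)_j = G_{ji}$), the matrix $\sdot{\G}{\q}$ has $(j,i)$ entry $\sum_{k}(\partial_{q_k}G_{ji})\,\dot q_k$, so the left-hand side equals $\sum_{i,j,k}\dot q_i\dot q_j\dot q_k\,\partial_{q_k}G_{ji}$. For the right-hand side, since $\nabla_\q$ is the partial derivative taken with $\qd$ held fixed, the $j$th component of $\nabla_\q(\qd^\t\G\qd)$ is $\sum_{k,l}\dot q_k\dot q_l\,\partial_{q_j}G_{kl}$, so the right-hand side equals $\sum_{j,k,l}\dot q_j\dot q_k\dot q_l\,\partial_{q_j}G_{kl}$. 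These two triple sums are identical: after matching the differentiation index (the summed $k$ on the left with the summed $j$ on the right), what remains in each case is the full contraction of two copies of $\qd$ against the two indices of a component of $\G$, so a relabeling of dummy indices finishes the argument. Notably, this step does not even use symmetry of $\G$; symmetry is only needed for the subsequent manipulations in the proof of Proposition~\ref{pr:Lyapunov time derivative}.

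This is purely a bookkeeping computation, so I do not expect a substantive obstacle. The only two points requiring care are: (i) respecting the convention that $\nabla_\q$ acts as a partial derivative and hence does \emph{not} differentiate the $\qd$-dependence of $\G$ (otherwise spurious terms would appear on the right-hand side), and (ii) keeping the row/column indexing consistent in the definition $\sdot{\G}{\q}(\q,\qd) \coloneqq [\partial_{\q}\gb_i(\q,\qd)\qd]_{i=1}^d$, since a transpose slip there would turn the claim into a false statement in the genuinely velocity-dependent case.
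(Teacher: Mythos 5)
Your proof is correct and follows essentially the same route as the paper's: both reduce the identity to matching two full contractions of $\partial_{q_k}G_{ij}$ against $\dot q_i\dot q_j\dot q_k$ and finish by relabeling dummy indices. Your side remark is also accurate: the paper's version invokes the symmetry $G_{ij}=G_{ji}$ at the penultimate step, but, as you observe, the contraction against three copies of $\qd$ already makes the two sums equal by a dummy-index swap alone, so symmetry of $\G$ is not actually needed here.
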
 
	\begin{proof}[Proof of Lemma~\ref{lm:rate of the curvature term}]
		This can be shown by definition: 
		\begin{align*}
			\qd^\t \bm\xi_\G(\q,\qd) 		
			&=\qd^\t \left( \sdot{\G}{\q}(\q,\qd) \qd - \frac{1}{2} \nabla_{\q} (\qd^\t \Gb(\q,\qd) \qd) \right)  \\
			&= 
			\sum_{k=1}^{d} \dot{q}_k \left( \sum_{i,j=1}^{d} \dot{q}_i\dot{q}_j \partial_{q_j}  G_{k,i} - \frac{1}{2}\sum_{i,j=1}^{d} \dot{q}_i\dot{q}_j \partial_{q_k}  G_{i,j} \right)\\
			&= 
			\sum_{i,j,k=1}^{d} \dot{q}_i\dot{q}_j\dot{q}_k \partial_{q_j}  G_{k,i} - \frac{1}{2} \sum_{i,j,k=1}^{d} \dot{q}_i\dot{q}_j \dot{q}_k \partial_{q_k}  G_{i,j} \\
			&= 
			\sum_{i,j,k=1}^{d} \dot{q}_i\dot{q}_j\dot{q}_k \partial_{q_k}  G_{j,i} - \frac{1}{2} \sum_{i,j,k=1}^{d} \dot{q}_i\dot{q}_j \dot{q}_k \partial_{q_k}  G_{i,j} \\
			&= 
			\frac{1}{2}\sum_{i,j,k=1}^{d} \dot{q}_i\dot{q}_j\dot{q}_k \partial_{q_j}  G_{k,i} = \frac{1}{2}\qd^\t \sdot{\G}{\q}(\q,\qd) \qd
		\end{align*}
		where for the second to the last equality we use the symmetry  $G_{i,j}= G_{j,i}$.
	\end{proof}

	Using Lemma~\ref{lm:rate of the curvature term}, we can show another  equality. 
	\begin{lemma} \label{lm:rate of the structured curvature term}
		For all structure $\SS$, $\qd^\t \left(- \bm\eta_{\G;\SS}(\q,\qd) 
		+ \frac{1}{2}\sdot{\G}{\q}(\q,\qd)\qd   \right) = 0$
	\end{lemma}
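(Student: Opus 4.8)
The plan is to reduce the identity to the scalar--transpose trick by combining the two pullback lemmas already proved. By Lemma~\ref{lm:rate of the curvature term} we have $\frac{1}{2}\qd^\t \sdot{\G}{\q}(\q,\qd)\qd = \qd^\t \bm\xi_\G(\q,\qd)$, so it suffices to show $\qd^\t \bm\eta_{\G;\SS}(\q,\qd) = \qd^\t \bm\xi_\G(\q,\qd)$ for an arbitrary structure $\SS$. Fix such an $\SS$, factorizing $\G(\q,\qd) = \J(\q)^\t \Hb(\y,\yd)\J(\q)$ with $\J = \partial_\q \y$; by definition $\bm\eta_{\G;\SS} = \J^\t\big(\bm\xi_\Hb + (\Hb + \bm\Xi_\Hb)\dot\J\qd\big)$.

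Next I would invoke Lemma~\ref{lm:pullback of xi} (with $\Nb = \Hb$) to rewrite $\bm\xi_\G$ in the same pieces:
\[
\bm\xi_\G = \J^\t\big(\bm\xi_\Hb + (\Hb + 2\bm\Xi_\Hb)\dot\J\qd\big) - \dot\J^\t \bm\Xi_\Hb^\t \J\qd .
\]
Subtracting, the $\bm\xi_\Hb$ and $\Hb\dot\J\qd$ contributions cancel, and exactly one power of $\bm\Xi_\Hb$ survives on each side:
\[
\bm\xi_\G - \bm\eta_{\G;\SS} = \J^\t \bm\Xi_\Hb \dot\J\qd - \dot\J^\t \bm\Xi_\Hb^\t \J\qd .
\]

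Finally I would contract this with $\qd^\t$ on the left. The first term is the scalar $\qd^\t \J^\t \bm\Xi_\Hb \dot\J\qd$, whose transpose is exactly the second term $\qd^\t \dot\J^\t \bm\Xi_\Hb^\t \J\qd$; since a $1\times1$ matrix equals its own transpose, the difference vanishes, giving $\qd^\t \bm\eta_{\G;\SS} = \qd^\t \bm\xi_\G = \frac{1}{2}\qd^\t \sdot{\G}{\q}\qd$. As $\SS$ was arbitrary, this proves the lemma, and plugging it back into the computation of $\dot\V$ leaves only the $-\qd^\t\B\qd$ term, completing Proposition~\ref{pr:Lyapunov time derivative}.

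Once Lemmas~\ref{lm:rate of the curvature term} and~\ref{lm:pullback of xi} are available the argument is purely mechanical, so I do not expect a genuine obstacle; the one place to be careful is matching the coefficient $2$ in front of $\bm\Xi_\Hb$ in Lemma~\ref{lm:pullback of xi} against the coefficient $1$ in the definition of $\bm\eta_{\G;\SS}$, since it is precisely that mismatch that produces the antisymmetric remainder which dies against $\qd$.
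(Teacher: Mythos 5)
Your proposal is correct and follows essentially the same route as the paper: both invoke Lemma~\ref{lm:pullback of xi} to express $\bm\xi_\G$ in terms of $\bm\xi_\Hb$, $\Hb$, and $\bm\Xi_\Hb$, identify the antisymmetric remainder $\J^\t\bm\Xi_\Hb\dot\J\qd - \dot\J^\t\bm\Xi_\Hb^\t\J\qd$ against the definition of $\bm\eta_{\G;\SS}$, and kill it by contracting with $\qd^\t$ via the scalar-equals-its-transpose observation, finishing with Lemma~\ref{lm:rate of the curvature term}. No gaps; your remark about the coefficient mismatch ($2\bm\Xi_\Hb$ versus $\bm\Xi_\Hb$) being exactly what produces the vanishing remainder is the right point of care.
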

	\begin{proof}[Proof of Lemma~\ref{lm:rate of the structured curvature term}]
		This can be seen from Lemma~\ref{lm:pullback of xi}. Suppose $\SS$ factorizes $\G(\q,\qd) = \J(\q)^\t\Hb(\x,\xd)\J(\q)$ where $\J(\q) = \partial_{\q} \x$.
		By Lemma~\ref{lm:pullback of xi}, we know 
		\begin{align*}
			\bm\xi_{\Gb}
			&=  \J^\t \left( \bm\xi_{\Hb} + (\Hb + 2\bm\Xi_{\Hb})\dot\J \xd  \right)  
			- \dot\J^\t \bm\Xi_{\Hb}^\t \J \xd 
		\end{align*}
		On the other hand, by definition, we have  $
		\bm\eta_{\G;\SS} 
		\coloneqq  \J^\t ( \bm\xi_{\Hb} + 
		(\Hb + \bm\Xi_{\Hb} ) 
		\dot{\J} \xd  )
		$. 
		Therefore, by comparing the two, we can derive, 
		\begin{align*}
			\qd^\t \bm\xi_{\G}  = \qd^\t \left( 
			\bm\eta_{\G;\SS} 
			+ \J^\t \bm\Xi_{\Hb}\dot\J \qd - \dot\J^\t \bm\Xi_{\Hb}^\t \J\qd \right) =  \qd^\t 	\bm\eta_{\G;\SS}
		\end{align*}
		Combing the above equality and Lemma~\ref{lm:rate of the curvature term} proves the equality.
	\end{proof}

	Finally,  we use Lemma~\ref{lm:rate of the structured curvature term} and the previous result and conclude 
	\begin{align*}
		\frac{d}{dt} \V(\q,\qd) 
		&= -\qd^\t\Bb(\q,\qd)\qd + \qd^\t \left(- \bm\eta_{\G;\SS}(\q,\qd) 
		+ \frac{1}{2}\sdot{\G}{\q}(\q,\qd)\qd   
		\right)  = -\qd^\t\Bb(\q,\qd)\qd \qedhere
	\end{align*}
	
\end{proof}

\subsection{Proof of Theorem~\ref{th:condition on velocity metric}} \label{app:proof of condition on velocity metric}

\theoremVelocityMetric*
\begin{proof}
	Let $\Ab(\x,\xd) = \Rb(\x) + \Lb(\x)^\t \Db(\x,\xd) \Lb(\x)$. 
	The proof of the theorem is straightforward, if we show that $\bm\Xi_{\Ab}(\x,\xd) \succeq 0$.
	To see this, suppose $\Lb = \R^{n \times m}$. Let $\bm\omega_j^\t$ be the $j$th row $\Lb$, respectively. By definition of $\bm\Xi_{\Ab}(\x,\xd)$ we can write
	\begin{align*}
		\bm\Xi_{\Ab}(\x,\xd) 
		&= \frac{1}{2}\sum_{i=1}^{m} \dot{x}_i \partial_{\xd} \ab_i(\x, \xd) \\
		&= \frac{1}{2}  \Lb(\x)^\t \sum_{i=1}^{m} \dot{x}_i \partial_{\xd} (\D(\x,\xd) \lb_i(\x))\\
		&= \frac{1}{2}  \Lb(\x)^\t \sum_{i=1}^{m} \sum_{j=1}^{n} \dot{x}_i \partial_{\xd} ( d_j(\x, \dot{y}_j) L_{ji}(\x) \eb_j )\\
		&= \frac{1}{2}  \Lb(\x)^\t  \sum_{j=1}^{n} \left( \sum_{i=1}^{m} L_{ji}(\x) \dot{x}_i \right) \partial_{\dot{y}_j} d_j(\x, \dot{y}_j)  \eb_j  \bm\omega_j^\t  \\
		&= \frac{1}{2}  \Lb(\x)^\t  \sum_{j=1}^{n} \dot{y}_j \partial_{\dot{y}_j} d_j(\x, \dot{y}_j)  \eb_j  \bm\omega_j^\t  \\
		&=   \Lb(\x)^\t \bm\Xi_\D (\x, \xd) \Lb(\x)  
	\end{align*}
	where $\eb_j$ the $j$th canonical basis and $\bm\Xi_\D (\x, \xd) = \frac{1}{2}\diag( (\partial_{\dot{y}_i} d_i(\x, \dot{y}_i) )_{i=1}^n)$. Therefore, under the assumption that $\partial_{\dot{y}_i} d_i(\x, \dot{y}_i) \geq 0 $, $\bm\Xi_{\Ab} (\x,\xd) \succeq   0$. 
	This further implies $\bm\Xi_{\G}(\q,\qd) \succeq 0$ by Theorem~\ref{th:consistency}.

	The stability of the entire system follows naturally from the rule of \pullback, which ensures that $\Mb_r(\q,\qd) = \Mb(\q,\qd) = \G(\q,\qd) + \bm\Xi_{\G}(\q,\qd) \succ 0$ given that the leaf-node condition is satisfied. Consequently, the condition in Corollary~\ref{cr:stability} holds and the convergence to $\CC_\infty$ is guaranteed. 
\end{proof}

\subsection{Notation for Coordinate-Free Analysis} \label{app:coordinate-free notation}
We introduce some extra notations for the coordinate-free analysis. Let $p_{T\CC}: T\CC \to \CC$ be the bundle projection.
Suppose $(U, (\q, \vb))$ is a (local) chart on $T\CC$. Let $\{\ppartial{q_i}, \ppartial{v_i}\}_{i=1}^d$ and $\{\d{q^i}, \d{v^i}\}_{i=1}^d$ denote the induced frame field and coframe field on $T\CC$.
For $ s \in U$, we write $s$ in  coordinate  as $(\q(q), \vb(s))$, if $\sum_{i=1}^{d} v_i(s) \frac{\partial}{\partial q_i} |_q \in T_{q} \CC$, where  $q = p_{T\CC}(s)\in \CC$.
With abuse of notation, we also write $s = (\q, \vb)$ for short unless clarity is lost.
Similarly, a chart $(\tilde{U}, (\q, \vb, \ub, \ab))$ can naturally be constructed on the double tangent bundle $TT\CC$, where $\tilde{U} = p_{TT\CC}^{-1}(U)$ and $p_{TT\CC}: TT\CC \to T\CC$ is the bundle projection: we write $h = (\q, \vb, \ub, \ab) \in TT\CC$ if  $\sum_{i=1}^{d} u_i(h) \frac{\partial}{\partial q_i} |_s +  a_i(h) \frac{\partial}{\partial v_i} |_s \in T_{s} T \CC$, where $s = p_{TT\CC}(h) $.
Under these notations, for a curve $q(t)$ on $\CC$, we can write $\ddot{q}(t) \in TT\CC$ in coordinate as $(\q(t), \qd(t), \qd(t),  \qdd(t))$. 
Finally, given Christoffel symbols $\Gamma_{i,j}^k$, an affine connection $\nabla$ on $TT\CC$ is defined via 
$
\textstyle\nabla_{ \ppartial{s_i} } \ppartial{s_j}
= \sum_{k=1}^{2d} \Gamma_{i,j}^k  \ppartial{s_k}
$, where $\ppartial{s_i} \coloneqq \ppartial{q_i}$ and  $ \ppartial{s_{i+d}} \coloneqq \ppartial{v_i}$ for $i=1,\dots,d$.

\subsection{Proof of Theorem~\ref{th:geometric acceleration}}

\theoremGeometricAcceleration*

\begin{proof}[Proof of Theorem~\ref{th:geometric acceleration}]
	

	We first show $\conn$ is unique, if it exists. That is, there is at most one affine connection that is compatible with the given the Riemannian metric $G$ and satisfies for $i,j = 1,\dots, d$ and $k=1,\dots, 2d$
	\begin{align*}
		\Gamma_{i,j}^k = \Gamma_{ji}^k, \qquad 
		\Gamma_{i,j+d}^k = 0,  \qquad 
		\Gamma_{i+d,j+d}^k = \Gamma_{j+d,i+d}^k, 
	\end{align*}
	Importantly, we note that this definition is coordinate-free, independent of the choice of chart on $\CC$.

	The uniqueness is easy to see. As $G$ is non-degenerate by definition, we recall there is an unique Levi-Civita connection, which is compatible with $G$ and satisfies the symmetric condition
	\begin{align*}
		\Gamma_{i,j}^k &= \Gamma_{j,i}^k, &&  \text{for } i,j = 1, \dots, 2d 
	\end{align*}
	Comparing our asymmetric condition and the symmetric condition of the Levi-Civita connection, we see that number of the linearly independent constraints are the same; therefore if there is a solution to the required asymmetric affine connection, then it is unique.  
	
	Next we show such a solution exists. We consider the candidate Christoffel symbols below and show that they satisfy the requirements: Consider an \emph{arbitrary} choice of chart on $\CC$. For $i,j,k=1,\dots,d$,
	\begin{align*}
		\Gamma_{i,j}^k &= \frac{1}{2} \sum_{l=1}^{d} G^{v\sharp}_{k,l}( \partial_{q_j} G^v_{l,i} +  \partial_{q_i} G^v_{l,j} - \partial_{q_l} G^v_{i,j} )  \\
		\Gamma_{i,j+d}^k &= 0, 
		\quad 
		\Gamma_{i+d,j}^k = \frac{1}{2} \sum_{l=1}^{d} G^{v\sharp}_{k,l} ( \partial_{v_i} G^v_{l,j}   ), \quad 
		\Gamma_{i+d,j+d}^k = 0\\
		\Gamma_{i,j}^{k+d} &= 0 
		, \quad
		\Gamma_{i,j+d}^{k+d} = 0
		, \quad
		\Gamma_{i+d,j}^{k+d} = 0
		, \quad
		\Gamma_{i+d,j+d}^{k+d} = 0
	\end{align*}
	where $G^{v\sharp}$ denotes the inverse of $G^v$, i.e. $\sum_{k=1}^{d} G^{v\sharp}_{i,k} G^v_{k,j} = \delta_{i,j}$. Note we choose not to adopt the Einstein summation notation, so the sparse pattern of the proposed Christoffel symbols are clear.
	
	It is clear that the above candidate Christoffel symbols satisfies the asymmetric condition. Therefore, to show it is a solution, we only need to show such choice is compatible with $G$. Equivalently, it means for arbitrary smooth sections of $TT\CC$, $X = \sum_{i=1}^{2d} X_i \ppartial{s_i}$, $Y = \sum_{i=1}^{2d} Y_i \ppartial{s_i}$, $Z = \sum_{i=1}^{2d} Z_i \ppartial{s_i}$, it holds\footnote{The section requirement on $Z$ can be dropped.}
	\begin{align} \label{eq:compatible condition}
		\conn_Z G(X,Y) =  G(\conn_Z X,Y) +  G(X, \conn_Z Y)
	\end{align}
	To verify~\eqref{eq:compatible condition}, we first write out $\conn_Z X$ using the chosen Christoffel symbols: 
	\begin{align} \label{eg:conn in coordinate}
		\conn_Z X &= \sum_{k=1}^{2d} \left( \conn_Z X_k  + \sum_{i,j=1}^{2d} \Gamma_{ij}^k Z_i X_j \right) \ppartial{s_k} \nonumber \\
		&= \sum_{k=1}^{d}  D_Z(X_k) \ppartial{q_k} + \sum_{k=1}^{d}  D_Z (X_{k+d}) \ppartial{v_k}   
		\\
		&\quad + \frac{1}{2} \sum_{k,l=1}^{d}  G^{v,kl} \left( \sum_{i,j,=1}^{d} ( \partial_{q_j} G^v_{li}  +  \partial_{q_i} G^v_{lj} - \partial_{q_l} G^v_{ij} ) Z_i X_j
		+ ( \partial_{v_i} G^v_{lj}   )  Z_{i+d} X_{j} \right) \ppartial{q_k}  \nonumber
	\end{align}
	where $D_Z(\cdot)$ denotes the derivation with respect to $Z$. 
	The above  implies
	\begin{align*}
		G(\conn_Z X,Y)  &= \sum_{j,k=1}^{d} G^v_{ki} Y_k D_Z(X_i)  + \sum_{j,k=1}^{d}  G^a_{kj} Y_{k+d} D_Z (X_{j+d}) 
		\\
		&\quad + \frac{1}{2}  \left( \sum_{i,j,k=1}^{d} ( \partial_{q_j} G^v_{ki} +  \partial_{q_i} G^v_{kj} - \partial_{q_k} G^v_{ij} ) Z_i X_j Y_k
		+ ( \partial_{v_i} G^v_{kj}   )  Z_{i+d} X_{j} Y_k\right)
	\end{align*}
	Similarly, we can derive $G(X, \conn_Z Y)$. Using the symmetry  $G_{ij}^v = G_{ji}^v$, we can combine the previous results together and write
	{\allowdisplaybreaks
		\begin{align*}
			& G(\conn_Z X,Y) +  G(X, \conn_Z Y)\\
			&= \sum_{j,k=1}^{d} G^v_{ki} Y_k D_Z(X_i)  + \sum_{j,k=1}^{d}  G^a_{kj} Y_{k+d} D_Z (X_{j+d})  + 
			\sum_{j,k=1}^{d} G^v_{ki} X_k D_Z(Y_i)  + \sum_{j,k=1}^{d}  G^a_{kj} X_{k+d} D_Z (Y_{j+d}) \\
			&\quad + \frac{1}{2}  \left( \sum_{i,j,k=1}^{d} ( \partial_{q_j} G^v_{ki} +  \partial_{q_i} G^v_{kj} - \partial_{q_k} G^v_{ij} ) Z_i X_j Y_k
			+ ( \partial_{v_i} G^v_{kj}   )  Z_{i+d} X_{j} Y_k\right)  
			\\
			&\quad +\frac{1}{2}  \left( \sum_{i,j,k=1}^{d} ( \partial_{q_j} G^v_{ki} +  \partial_{q_i} G^v_{kj} - \partial_{q_k} G^v_{ij} ) Z_i Y_j X_k
			+ ( \partial_{v_i} G^v_{kj}   )  Z_{i+d} Y_{j} X_k\right)   \\
			&= \sum_{i,j=1}^{d} G^v_{ij} D_Z(X_i) Y_j + G^v_{ij} X_i D_Z(Y_j) + \sum_{i,j=1}^{d} G^a_{ij} D_Z(X_{d+i}) Y_{d+j} + G^a_{ij} X_{d+i} D_Z( Y_{d+j}  ) \\
			&\quad  +  \sum_{i,j, k=1}^{d}  X_i Y_j Z_k \partial_{q_k} G^v_{ij}  +  X_i Y_j Z_{k+d} \partial_{v_k} G^v_{ij}\\
			&=  \sum_{i,j=1}^{d} D_Z(G^v_{ij}) X_i Y_j + G^v_{ij} D_Z(X_i) Y_j + G^v_{ij} X_i D_Z(Y_j) + \sum_{i,j=1}^{d} G^a_{ij} D_Z(X_{d+i}) Y_{d+j} + G^a_{ij} X_{d+i} D_Z( Y_{d+j}  )\\
			&= \conn_Z \left(\sum_{i,j=1}^{d} G^v_{ij} X_i Y_j + \sum_{i,j=1}^{d} G^a_{ij} X_{d+i} Y_{d+j} \right) = \conn_Z G(X,Y)  
		\end{align*}
	}
	Therefore $\conn$ is compatible with $G$. 
	
	So far we have proved the first statement of Theorem~\ref{th:geometric acceleration} that $\conn$ is the unique solution that is compatible with $G$ and satisfies the asymmetric condition. Below we show the expression of $\pr_3(\conn_{\ddot{q}} \ddot{q})$, where we recall $\ddot{q}(t)$ is a curve in $TT\CC$. We use~\eqref{eg:conn in coordinate}. By definition of $\pr_{3}$ it extracts the parts on $\{\ppartial{q_i}\}_{i=1}^d$. Therefore, 
	suppose we choose some chart on $\CC$ of interest and we can write $\pr_3(\conn_{\ddot{q}} \ddot{q})$ as 
	\begin{align*}
		\pr_3(\conn_{\ddot{q}} \ddot{q}) 
		&=  \sum_{k}^{d} \left( D_Z(X_k) + \sum_{l=1}^{d}\frac{1}{2} G^{v,kl} \sum_{i,j,=1}^{d} ( \partial_{q_j} G^v_{li}  +  \partial_{q_i} G^v_{lj} - \partial_{q_l} G^v_{ij} ) Z_i X_j
		+ ( \partial_{v_i} G^v_{lj}   )  Z_{i+d} X_{j} \right) \ppartial{q_k} \\
		&=  \sum_{k}^{d} \left( \ddot{q}_k + \sum_{l=1}^{d}\frac{1}{2} G^{v,kl} \sum_{i,j,=1}^{d} ( \partial_{q_j} G^v_{li}  +  \partial_{q_i} G^v_{lj} - \partial_{q_l} G^v_{ij} ) \dot{q}_i \dot{q}_j
		+ ( \partial_{v_i} G^v_{lj}   )  \ddot{q}_{i} \dot{q}_{j} \right) \ppartial{q_k} \\
		&=  \sum_{k}^{d} a_{\G;k} \ppartial{q_k} 
	\end{align*}
	where $a_{\G;k}$ is the $k$th element of $\ab_\G \coloneqq \qdd +  \Gb(\q,\qd)^{-1} (\bm\xi_{\G}(\q,\qd) + \bm\Xi_{\G}(\q,\qd) \qdd )$.
\end{proof}

\subsection{Proof of Theorem~\ref{th:consistency abstract}}
\theoremAbstractConsistency*

\begin{proof}[Proof of Theorem~\ref{th:consistency abstract}]
	Let $\TT = \TT_1 \times \cdots \times \TT_K$ and $\tilde{G}$ be the induced metric on $T\TT$ by $\{G_i\}_{i=1}^K$. In addition, let $\psi : \CC \to \TT$ be the equivalent expression of $\{\psi_i\}$. 
	Again we focus on the tangent bundle not the base manifold.
	Recall the definition of a pullback connection\footnote{We note the distinction between $\psi^*: T^*\TT \to T^*\CC$ and $T\psi^*: T^*T\TT \to T^*T\CC$.  } $T\psi^* {\Conn{\tilde{G}}}$ is
	\begin{align} \label{eq:pullback connection}
		T\psi_* (T\psi^* {\Conn{\tilde{G}}}_X Y) 
		= \pr_{T\psi_*}^{\tilde{G}} \left(  {\Conn{\tilde{G}}}_{T\psi_* X} {T\psi_* Y} \right) 
	\end{align}
	for all sections $X$ and $Y$ on $TT\CC$, where $\pr_{T\psi_*}^{\tilde{G}}$ is the projection onto the distribution spanned by $T\psi_*$ with respect to $\tilde{G}$, i.e. 
	$\tilde{G}(T\psi_* X, \pr_{T\psi_*}^{\tilde{G}}( Z)) = \tilde{G}(T\psi_* X,  Z)$ for all $X \in TT\CC$ and $Z \in TT\TT$. Note by the construction of the product manifold $\TT$, $T\psi^* {\Conn{\tilde{G}}} = \sum_{i=1}^{K} T\psi_i^* {\Conn{G_i}}$.

	\noindent\textbf{Compatibility  }
	We show  that $T\psi^* {\Conn{\tilde{G}}}$ is compatible with the pullback metric $G$. Let $X,Y,Z$ be arbitrary sections on $TT\CC$ and recall the definition of the pullback metric 
	\begin{align*}
		G(X,Y) = T\psi^*\tilde{G}(X,Y) = \tilde{G}(T\psi_*X, T\psi_*Y)
	\end{align*}
	To show that $T\psi^* {\Conn{\tilde{G}}}$ is compatible, we derive an expression of $G( T\psi^* {\Conn{\tilde{G}}}_Z X, Y)$: 	
	\begin{align*}
		G( T\psi^* {\Conn{\tilde{G}}}_Z X, Y) &=T\psi^*\tilde{G}( T\psi^* {\Conn{\tilde{G}}}_Z X, Y)\\
		&=\tilde{G}( T\psi_* (T\psi^* {\Conn{\tilde{G}}}_Z X), T\psi_*Y) \\
		&=\tilde{G}\left( \pr_{T\psi_*}^{\tilde{G}} \left(  {\Conn{\tilde{G}}}_{T\psi_* Z} {T\psi_* X} \right) , T\psi_*Y \right)\\
		&=\tilde{G}\left(  {\Conn{\tilde{G}}}_{T\psi_* Z} {T\psi_* X}  , T\psi_*Y \right)
	\end{align*}
	where we use~\eqref{eq:pullback connection} and the definition of projection. Using the above equation, we can see the compatibility easily: 
	\begin{align*}
		&G( T\psi^* {\Conn{\tilde{G}}}_Z X, Y) + G(X,  T\psi^* {\Conn{\tilde{G}}}_Z Y) \\
		&= 
		\tilde{G}\left(  {\Conn{\tilde{G}}}_{T\psi_* Z} {T\psi_* X}  , T\psi_*Y \right)
		+ \tilde{G}\left(T\psi_*X,   {\Conn{\tilde{G}}}_{T\psi_* Z} {T\psi_* Y} \right) \\
		&= {\Conn{\tilde{G}}}_{T\psi_* Z}  \tilde{G}(T\psi_* X, T\psi_* Y) \\
		&= \psi^*{\Conn{G}}_{Z}  \tilde{G}(T\psi_* X, T\psi_* Y) \\
		&= \psi^*{\Conn{G}}_{Z}  G(X,Y)
	\end{align*}

	\noindent\textbf{Coordinate expression  }
	The coordinate expression of the pullback metric can be derived by its definition in~\eqref{eq:pullback connection}, and the expression for the pullback covector is standard. 
	For the pullback connection, similar to the proof of Theorem~\ref{th:geometric acceleration}, we can show that
	$\pr_{3} (\lsup{\bar{\nabla}}{G} _{\ddot{q}} \ddot{q})$ can be written as $ \qdd +  \Gb(\q,\qd)^{-1} (\bm\eta_{\G;\SS}(\q,\qd) + \bm\Xi_{\G}(\q,\qd) \qdd )$. In other words, the structured GDS equations are the coordinate expression of the pullback connection $T\psi^* {\Conn{\tilde{G}}}$, where the structure structure $\SS$ is induced through the recursive application of \pullback in \flow. Note that this is in general different from the connection of the pullback metric $\conn$, which by Theorem~\ref{th:geometric acceleration} instead defines the unstructured GDS equation $ \qdd +  \Gb(\q,\qd)^{-1} (\bm\xi_{\G}(\q,\qd) + \bm\Xi_{\G}(\q,\qd) \qdd )$.

	\noindent\textbf{Commutability  }	
  	However, in the special case when $G$ is velocity-independent, we show that they are equivalent. That is, the pullback connection $T\psi^* {\Conn{\tilde{G}}}$ is equal to the connection of the pullback matrix $\conn$. This property is early shown in Theorem~\ref{th:consistency}, which shows that in the velocity-independent case there is no need to distinguish structures.   
	To prove this, we first note that $\conn$ becomes symmetric as $G$ is velocity-independent. As it is also compatible with $G$, we know that $\conn$ is the Levi-Civita connection with respect to $G$. (Recall $G$ is the Riemannian metric on the tangent bundle.)
	On the other hand, knowing that $T\psi^* {\Conn{\tilde{G}}}$ is compatible, to show that $\conn = T\psi^* {\Conn{\tilde{G}}}$ we only need to check if $T\psi^* {\Conn{\tilde{G}}}$ is symmetric.
	Without further details, we note this is implied by the proof of Theorem~\ref{th:consistency}. Therefore, we have $T\psi^* {\Conn{\tilde{G}}} = \conn$.
\end{proof}

\section{Relationship between \flow and Recursive Newton-Euler Algorithms}\label{app:relationship with dynamics}

The policy generation procedure of \flow is closely related to the algorithms~\cite{walker1982efficient} for computing forward dynamics (i.e. computing accelerations given forces) based on recursive Newton-Euler algorithm. In a summary, these algorithms computes the forward dynamics in following steps: 
\begin{enumerate}
\item It propagates positions and velocities from the base to the end-effector.
\item It computes the Coriollis force by backward propagating the inverse dynamics of each link under the condition that the acceleration is zero.
\item It computes the (full/upper-triangular/lower-triangular) joint inertia matrix. 
\item It solves a linear system of equations to obtain the joint acceleration.
\end{enumerate}
In~\cite{walker1982efficient}, they assume a recursive Newton-Euler algorithm (RNE) for inverse dynamics is given, and realize Step 1 and Step 2 above by calling the RNE subroutine. The computation of Step 3 depends on which part of the inertia matrix is computed. In particular, their Method 3 (also called the Composite-Rigid-Body Algorithm in~\cite[Chapter 6]{Featherstone08}) computes the upper triangle part of the inertia matrix by a backward propagation from the end-effector to the base.

\flow can also be used to compute forward dynamics, when we set the leaf-node GDS as the constant inertia system on the body frame of each link and we set the transformation in the \tree as the change of coordinates across of robot links. This works because we show GDSs cover SMSs as a special case, and at root node the effective dynamics is the pullback GDS, which in this case is the effective robot dynamics defined by the inertia matrix of each link. 

We can use this special case to compare \flow with the above procedure. 
We see that the forward pass of \flow is equivalent to Step 1, and the backward pass of \flow is equivalent of Step 2 and Step 3, and the final \resolve operation is equivalent to Step 4. 

Despite similarity, the main difference is that \flow computes the force and the inertia matrix in a \emph{single} backward pass to exploit shared computations. This change is important, especially, the number of subtasks are large, e.g., in avoiding multiples obstacles.
In addition, the design of \flow generalizes these classical computational procedures (e.g. designed only for rigid bodies, rotational/prismatic joints) to handle abstract and even non-Euclidean task spaces that have velocity-dependent metrics/inertias.
This extension provides a unified framework of different algorithms and results in an expressive class of motion policies.

\section{Designing Reactive Motion Policies for Manipulation} \label{apx:Practice}

In this section, we give some details on the RMPs examples discussed in Section~\ref{sec:example RMPs}, which are also used in our manipulation system in the full system experiments. 
We show that these commonly used motion policies are essentially GDSs with respect to some metric and potential function. 
To convert a differential equation back to its GDS formulation, we need to address the question of integrability of a vector field. This is done by showing that a hand-designed vector field is the negative gradient of some potential function. It is useful in these cases to remember that the necessary and sufficient condition on the integrability of a smooth vector field is that its Jacobian is symmetric.

\begin{figure}[t]\vspace{-4mm}
	\centering
	\includegraphics[width=0.9\linewidth]{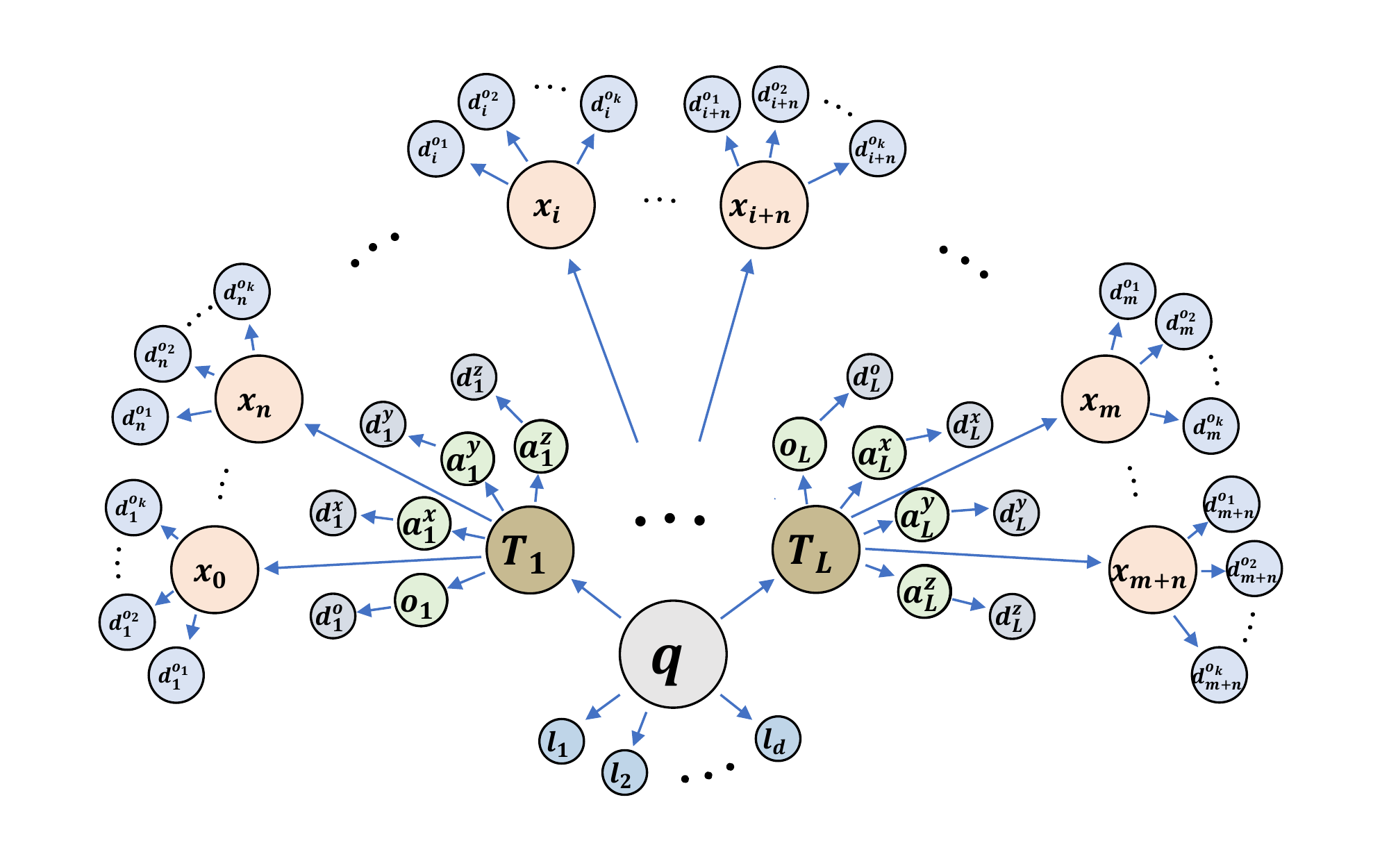}
	\vspace{-2mm}

    \caption{\small This figure depicts 
    the tree of task maps used in the experiments. 
    See Section~\ref{sec:TaskMapTree} for details.
    }
	\label{fig:rmpflow_taskmap_tree}
	\vspace{-6mm}
\end{figure}

\subsection{Task map and its Tree Structure} \label{sec:TaskMapTree}

Figure \ref{fig:rmpflow_taskmap_tree} 
depicts the tree of task maps used in the full-robot experiments.
The chosen structure emphasizes potential for parallelization over fully
exploiting the recursive nature of the kinematic chain, treating each link
frame as just one forward kinematic map step from the configuration
space.\footnote{We could possibly have saved some computation by defining the
forward kinematic maps recursively as $(\mT_{i+1},\q_{i+1},\ldots,\q_d) =
\psi_i(\mT_i,\q_i,\ldots,\q_d)$.} The configuration space $\q$ is linked to $L$
link frames $\mT_1,\ldots,\mT_L$ through the robot's forward kinematics. Each
frame has 4 frame element spaces: the origin $o_i$ and each of the axes
$\ma_i^x,\ma_i^y,\ma_i^z$, with corresponding distance spaces to targets
$d_i^o,d_i^x,d_i^y,d_i^z$ (if they are active). Additionally, there are a
number of obstacle control points $\x_j$ distributed across each of the links,
each with $k$ associated distance spaces $d_j^{o_1},\ldots,d_j^{o_k}$, one
for each obstacle $o_1,\ldots,o_k$. Finally, for each dimension of the
configuration space there's an associated joint limit space $l_1,\ldots,l_d$.

\subsection{Example: 1D Velocity-Dependent Metrics}

We start with an analysis of a simple 1-dimensional GDS with  a velocity-dependent metric to provide some intuition about the curvature terms $\xi$ and $\Xi$. 
This example will be used for constructing collision controllers later.

Let $z = d(\x) \in \R$ be a 1D task space; for instance, 
$d$ might be a distance function
with $\x\in\R^3$.
Let $g(z,\dot{z})$ denote a velocity dependent metric and let $\Phi(z)$ be a potential function. This choice defines a total energy (i.e. the Lyapunov function) 
$V(z,\dot{z}) = \frac{1}{2} g(z,\dot{z})\dot{z}^2 + \Phi(z)$.
It defines a GDS with an equation of motion under external force $f_{\mathrm{ext}} = -\partial_z \Phi - b(z, \dot{z})\dot{z}$ as
\begin{align}
\ddot{z} = \frac{1}{g + \Xi}
\left(-\partial_z\Phi - b(z,\dot{z})\dot{z} - \xi\right),
\end{align}
where $
\Xi = \frac{1}{2}  \dot{z}\frac{\partial g}{\partial\dot{z}}
$, 
$\xi$ is the curvature term (see below), and $b(z,\dot{z})>0$ is the damping coefficient.

Theorem~\ref{th:condition on velocity metric} provides a sufficient condition for stability. In this example, it requires $\dot{z}\frac{\partial g}{\partial \dot{z}}\geq 0$.  
Suppose the metric decomposes as
$g(z,\dot{z}) = w(z)u(\dot{z})$. The sufficient condition of Theorem~\ref{th:condition on velocity metric} becomes
\begin{align} \label{eqn:1DVelMetricCondition}
 2\Xi  = \dot{z}\frac{\partial g}{\partial \dot{z}} = w(z)\dot{z}\frac{d u}{d\dot{z}}\geq 0,
\end{align}
In other words, $u$ needs to change (as a function of $\dot{z}$) in the same direction as the velocity: $u$ either increases in the positive direction when velocity is positive and increases in the negative direction when velocity is negative; or it can be zero. 

Denoting $g_{\dot{z}}(z)$ as the corresponding
function of $\dot{z}$ that arises by fixing the value of $\dot{z}$, we can write 
the curvature terms as
\begin{align}
    \xi 
    &= \left(\frac{d}{dt} g_{\dot{z}}\right)\dot{z} 
        - \frac{d}{dz}\left(\frac{1}{2}g(z,\dot{z})\dot{z}^2\right) \\
    &=  u(\dot{z})\left(\frac{d}{dt} w(z)\right) \dot{z}
        - u(\dot{z})\frac{d}{dz}\left(\frac{1}{2}w(z)\dot{z}^2\right) \\
    &= u(\dot{z})\left(\frac{dw}{dz}\dot{z}^2 
        - \frac{1}{2} \frac{dw}{dz} \dot{z}^2\right) \\\label{eqn:SeparableCurvature}
    &= \frac{1}{2} u\frac{dw}{dz} \dot{z}^2.
\end{align}
Therefore $-\xi = - \frac{1}{2} u\frac{dw}{dz} \dot{z}^2$ is a force that always 
points along decreasing $w$.

\subsection{Collision Avoidance Controllers} \label{apx:Collision}

Here we derive a class of 1D collision controllers defined on the distance
space, and show that the curvature terms in the pullback to $\x\in\R^3$ define
nontrivial curving terms that induce the types of orbits that we see in Figure~\ref{fig:2DOrbits}.

Let $s = d(\x)$ for $\x\in\R^3$ denote a distance function 
$d:\R^3\rightarrow\R_+$. Let $g(s, \dot{s}) = w(s)u(\dot{s})$
denote a 1D separable velocity-dependent metric.
$w(s)$ is
defined
as a non-increasing
function in $s$, i.e. 
and $s_1 \leq s_2 \Rightarrow w(s_1) \geq w(s_2)$ and hence $\frac{dw}{ds}\leq 0$
for all $s\in\R_+$. Typically, $w(s) = 0$ for $s > r_w$ for some nominal radius 
of action $r_w > 0$.
For instance, we might choose 
\begin{align}
    w(s) = \frac{(r_w - s)_+^2}{s},
\end{align}
where $(v)_+ = \max\{0,v\}$.
For this function,
$w(r_w) = 0$ and differentiating it shows that 
$\frac{dw}{ds} = 1 - \frac{r_w^2}{s^2} < 0$ for $s\in(0,r_w)$ and $0$ for $s\geq r_w$.
The equality only holds at $s=r_w$ (i.e. when $w$ is tangent to the $s$ axis at $s = r_w$). 
Its Hessian can be shown as
$\frac{d^2w}{ds^2} = \frac{2r_w^2}{s^3} > 0$ for $s>0$ (i.e. positive definite).
Likewise we choose
\begin{align}
    u(\dot{s}) = 
    \begin{cases}
1 - \exp\left(-\frac{\dot{s}^2}{2\sigma^2}\right), & \mbox{for $\dot{s} < 0$}\\        
0, & \mbox{otherwise}
    \end{cases}
\end{align}
It is straightforward to show that this choice satisfies the condition in~\eqref{eqn:1DVelMetricCondition}, and
$u(\dot{s})\in[0,1)$ with a smooth transition to $0$ for $\dot{s} \geq 0$.

Suppose $\Phi: \R_+ \to \R$ is a potential function that is continuously differentiable. 
As discussed, the GDS would have an equation of motion in the form 
\begin{align*}
\ddot{s}^d 
&= -\frac{1}{m}\frac{d{\Phi}_0}{ds} - \frac{1}{m}\xi
\end{align*}
where we recall $m(s,\dot{s}) = g(s,\dot{s}) + \Xi(s,\dot{s})$ and $\Xi$ and $\xi$ are the curvature terms. 
Particularly, for our chosen product metric $g(s,\dot{s}) = w(s)u(\dot{s})$, we can write $\xi(s,\dot{s}) = \frac{1}{2} u(\dot{s})\frac{dw}{ds}\dot{s}^2$, $\Xi(s, \dot{s}) = \frac{1}{2}w(s)\dot{s}\frac{d u}{d\dot{s}}$ (cf. the previous section), and 
\begin{align*}
m(s, \dot{s}) &= w(s)u(s) + \frac{1}{2}w(s)\dot{s}\frac{d u}{d\dot{s}}\\
&= w(s)\left( u(s) + \frac{1}{2}\dot{s}\frac{d u}{d\dot{s}} \right) \eqqcolon w(s) \delta(s,\dot{s})
\end{align*}
which also has a product structure. 

This factorization shows that the equation of motion can also be written as 
\begin{align*}
\ddot{s}^d 
&= -\frac{1}{\delta} \left(\frac{1}{w}\frac{d{\Phi}}{ds}\right) - \frac{1}{m}\xi\\
&= -\frac{1}{\delta} \left(\frac{d{\wt\Phi}}{ds}\right) - \frac{1}{w}\tilde{\xi}
\end{align*}
for some function $\wt\Phi$ and $\tilde{\xi} = \frac{1}{2} \frac{dw}{ds}\dot{s}^2$ is the curvature term if $w$ is considered as a (velocity-independent) metric. This identification is possible because $\frac{1}{w}\frac{d{\Phi}}{ds}$ is continuous and $s$ is one-dimensional. Conversely, we can start with designing a continuous vector field $\frac{d{\wt\Phi}}{ds}$, i.e. choosing a vector field $\frac{d{\wt\Phi}}{ds}$ such that $\ddot{s}^d = -\frac{1}{\delta} \left(\frac{d{\wt\Phi}}{ds}\right) - \frac{1}{m}\xi$ has the desired behavior. And the above identification shows that this equation of motion is a GDS. 


To represent the above equation as an RMP, 
which is useful if we choose to design behavior by directly defining 
$\frac{d\wt{\Phi}}{ds}$,
we can write it in the natural form as 
\begin{align}
    \Big[-w\frac{d\wt{\Phi}}{ds} - \xi, m \Big]^{\R_+}.
\end{align}
Note that the curvature term $\xi$ behaves as a nonlinear damping term, slowing
the system (from the perspective of the configuration space) as it approaches
obstacles and vanishing when moving away from obstacles. 
Consequently, it biases the system toward curving along isocontours of the distance field. See Fig.~\ref{fig:2DOrbits} for a demonstration of these terms in isolation and in coordination with an obstacle repulsion potential.

\subsection{Attractors} \label{apx:Attractors}

We detail a couple of attractor options here, including two metrics that we have used in practice. We sometimes find the more complex of these metrics
works better on collision avoidance systems, since it expresses a desire for precision near the target while allowing orthogonal compliance further away
giving some freedom for obstacle avoidance.

\paragraph{Notation}
Let $\x$ be the coordinate of the task space (e.g. the coordinate of the task space). We denote the inertia matrix as $\M(\x)$, the forcing potential
function as $\Phi(\x)$, and the damping matrix as $\B(\x,\xd)$. In this section, for designing attractors, we will focus on the special case where $\M(\x) = \G(\x)$.

\subsubsection{Acceleration-based attractors and GDSs}

In many cases, it is straightforward to design a task space behavior 
in isolation in terms of desired accelerations (either by hand 
or through planning) \cite{ratliff2018riemannian}. 
But for stability guarantees we want these systems to be GDSs. 
Specifically, suppose we have a motion policy given as  $\f(\x, \xd)$. Define $\xdd^d = \f(\x, \xd)$ as a shorthand (i.e. the desired acceleration). We want to show that it can be written as 
\begin{align}\label{eq:motion policy of GDS}
\xdd^d = - \M(\x)^{-1}\Big(\nabla_\x\Phi(\x) + \bm{\xi}_\M(\x,\xd) + \B(\x, \xd) \xd\Big)
\end{align}
for some $\M$, $\B$, and $\Phi$, where $\bm{\xi}_\M$ is the associated curvature term of $\M$.
We can view the above decomposition as three parts: 
\begin{enumerate}
	\item The desired acceleration generated by the potential: $- \M(\x)^{-1}\nabla\Phi(\x)$ 
	\item The damping acceleration for stability:  $- \M(\x)^{-1} \B(\x, \xd) \xd$
	\item The curvature acceleration for consistent behaviors: $ - \M(\x)^{-1} \bm{\xi}_\M(\x,\xd) $
\end{enumerate}


To bridge the connection between the GDS formulation in~\eqref{eq:motion policy of GDS} and common motion policies given directly by $\f$, we consider particularly motion policy candidates  that can be written in terms of
\begin{align} \label{eq:candidate motion policy}
\xdd^d = -\nabla_\x \wt\Phi(\x) - \wt\B(\x,\xd)\xd  -\M(\x)^{-1}\bm{\xi}_\M(\x,\xd)
\end{align}
where  $\wt\Phi$ is another potential function and $\wt\B(\x,\xd)$ is another damping matrix. 
We show that it is possible to design $\wt\Phi$ and $\wt\B$ directly, and then choose some proper inertia matrix $\M(\x)$ such that~\eqref{eq:candidate motion policy} can be written as the GDS~\eqref{eq:motion policy of GDS} for some $\Phi$ and $\B$. That is, we show it possible to choose $\M$ such that
\begin{align*}
\nabla_x\Phi(\x) = \M(\x)\nabla_\x \wt\Phi(\x) \qquad \text{and} \qquad \B(\x,\xd) = \M(\x)\wt{\B}(\x,\xd)
\end{align*}
and $\Phi$ is a potential function and $\B$ is positive definite (without the need to 
derive them 
in closed form).
Moreover, we show that this strategy allows us to model some common acceleration-based attractors.

\subsubsection{Motion policy candidates}
As a motivating example of~\eqref{eq:candidate motion policy}, we consider the attractor proposed in~\cite{ratliff2018riemannian}.
Let $h_V^\alpha$ define a {\it soft-normalization} function
\begin{align} \label{eqn:SoftNormalization}
\theta_\alpha(\vv) = \vv / h_V^\alpha(\|\vv\|).
\end{align}
with
$
h_V^\alpha(\gamma) 
= \frac{1}{\alpha}\log(e^{\alpha\gamma} + e^{-\alpha\gamma}) 
= \gamma + \frac{1}{\alpha}\log(1 + e^{-2\alpha \gamma})
$ for some $\alpha > 0$,
so that $\theta_\alpha(\vv)$ approaches $\hat{\vv} = \frac{\vv}{\|\vv\|}$ for larger
$\vv$, but approaches zero smoothly as $\vv\rightarrow 0$.
Without loss of generality, let us consider the center of attraction is at $\x = 0$. The attractor considered in~\cite{ratliff2018riemannian} is given as  
\begin{align} \label{eqn:OrigAttractor}
\xdd^d = \f_{\mathrm{a}}(\x,\xd) \coloneqq -  \gamma_p \theta_\alpha(\x) - \gamma_d \xd,
\end{align}
for some $\gamma_p, \gamma_d >0$.

Inspecting~\eqref{eqn:OrigAttractor}, we can see that it resembles the form~\eqref{eq:candidate motion policy} modulus the last curvature term $-\M(\x)^{-1}\bm{\xi}(\x,\xd)$. 
Indeed we can identify $\tilde{\B} = \gamma_d \I$ and we show below the first term $\gamma_p \theta_\alpha(\x_0 - \x)$ is a derivative of some potential function. 
We do so by showing its Jacobian is symmetric. Using the notation above, we have
\begin{align*}
\frac{d}{d \x}\theta_\alpha(\x)
&= \frac{d}{d \x} \left( h_V^\alpha(\|\x\|)^{-1} \x \right) \\
&= h_V^\alpha(\|\x\|)^{-1} \I 
+ \x \left(-h_V^\alpha(\|\x\|)^{-2} \frac{d h_V^{\alpha}(s)}{ds}|_{s= \norm{\x}}  \frac{\partial}{\partial \x}\|\x\|\right) \\
&= h_V^\alpha(\|\x\|)^{-1}\I - \left( \|\x\| h_V^\alpha(\|\x\|)^{-2} \frac{d h_V^{\alpha}(s)}{ds}|_{s= \norm{\x}} \right) \hat{\x}\hat{\x}^T. 
\end{align*}
Both terms are symmetric, so the Jacobian is symmeric and this vector field
is the gradient of some potential function (say   $\wt{\Phi}_\mathrm{a}^1$), 
although we do not attempt to derive the potential function in closed form here.

In some cases, it is potentially more convenient to start designing~\eqref{eq:candidate motion policy}
 with a known potential function such as
\begin{align}
\wt{\Phi}_\mathrm{a}^2(\x) 
&= \frac{1}{\eta}\log\left(e^{\eta\|\x\|} + e^{-\eta\|\x\|}\right) \nonumber \\
&= \|\x\| + \frac{1}{\eta}\log\big(1 + e^{-2\eta\|\x\|}\big)
\end{align}
so the potential energy can be measured, where $\eta > 0$. 
This is a $\eta$-scaled softmax ($\eta$ defines the effective smoothing radius at the origin) over $\|\x\|$ and $-\|\x\|$, and the second expression is a numerically
robust version since $\|\x\|\geq 0$. Its negative gradient is
\begin{align} \label{eqn:sNumerical}
\nabla_\x \wt{\Phi}^2(\x) 
&= \frac{1}{\alpha}\frac{1}{e^{\alpha\|\x\|} + e^{-\alpha\|\x\|}}
\left(\alpha e^{\alpha\|\x\|}\hat{\x} - \alpha e^{-\alpha\|\x\|}\hat{\x}\right) \nonumber  \\
&= 
\left(
\frac{e^{\alpha\|\x\|} - e^{-\alpha\|\x\|}}{e^{\alpha\|\x\|} + e^{-\alpha\|\x\|}}
\right) \hat{\x} \nonumber \\
&= \left(
\frac{1 - e^{-2\alpha\|\x\|}}{1 + e^{-2\alpha\|\x\|}}
\right) \hat{\x}
= s_\alpha\big(\|\x\|\big) \hat{\x},
\end{align}
where $s_\alpha(0) = 0$ and $s_\alpha(r)\rightarrow 1$ as $r\rightarrow\infty$.
\eqref{eqn:sNumerical} again gives a numerically robust form since $\|\x\|\geq 0$.
Below, we denote abstractly the potential as just $\wt{\Phi}$ so we're agnostic
to the choice of $\wt{\Phi}_\mathrm{a}^i$, $i\in\{1,2\}$.

\subsubsection{Metric options}
Suppose we have chosen some potential $\wt\Phi$ and some damping $\wt{\B}$. We next consider admissible metric/inertia matrices  such that~\eqref{eq:candidate motion policy} can be written as~\eqref{eq:motion policy of GDS}. 
We first note that $\M = \I$ is an admissible choice (i.e. we recover $\Phi(\x) =\wt\Phi$ and $\B= \wt{\B}$, provided $\wt\B$ is positive definite). But this choice is not ideal when we wish to combine multiple motion policies, because we recall that the design of $\M$ designates the importance of each motion policy. Therefore, we would not want to restrict ourselves to the trivial choice $\M = \I$.

Here we present a family of metric matrices that are non-trivial and meaningful in practice, and yet is \emph{compatible} with the motion policy~\eqref{eq:candidate motion policy}.
Let us first define some useful functions to simplify the writing later on.
Let $\alpha(\x) = \exp(-\frac{\|\x\|^2}{2\sigma_\alpha^2})$ and $\gamma(\x) = \exp(-\frac{\|\x\|^2}{2\sigma_\gamma^2})$ for some $\sigma_\alpha, \sigma_\gamma \in \R$. 
We define a weight function $w(\x) = \gamma(\x)w_u + (1-\gamma(\x))w_l$, for $0\leq w_l\leq w_u<\infty$. Equivalently, it can be written as $w(\x) = \trihat{w}\gamma(\x) + w_l$ 
with $\trihat{w} \coloneqq w_u - w_l$.
Below we will need $\nabla_\x \log w(\x)$, so we derive it here. Noting $w = \trihat{w}\gamma + w_l$, we get
\begin{align*}
\nabla_\x  \log w (\x)
= \frac{\nabla_\x (\trihat{w}\gamma + w_l)}{w(\x)}
= \frac{\trihat{w}}{w}
\exp
\left(
-\frac{\|\x\|^2}{2\sigma_\gamma^2}\right)\left(-\frac{1}{\sigma_\gamma^2}\x
\right) 
= -\frac{\gamma\trihat{w}}{\sigma_\gamma^2 w} \x.
\end{align*}

We define two alternative metrics. The first metric trades off stretching 
the space in the direction toward the target when the robot is away from the goal, and
becoming increasingly Euclidean when the robot is close to the goal:
\begin{align}
    \M_{\mathrm{stretch}} = 
        w(\x)\Big(\big(1-\alpha(\x)\big)\nabla_\x\wt{\Phi}\nabla_\x\wt{\Phi}^\t + (\alpha(\x) + \epsilon)\I\Big),
\end{align}
where $\epsilon > 0$ induces a baseline Euclidean metric used far from the target to  fill out the metric's eigen-spectrum, and $\wt\Phi$ is the potential in~\eqref{eq:candidate motion policy}.
The second metric matrix is simply
\begin{align}
    \M_{\mathrm{uni}} = w(\x)\I.
\end{align}
We refer to these both generically as $\M$ below. Note again that 
we use $\M$ here rather than $\G$ since these metrics are velocity 
independent so that the inertia matrix $\M$ and the metric (typically 
denoted as $\G$) are the same.

\subsubsection{Compatibility between metrics and potentials}

We show the two metrics $\M_{\mathrm{stretch}}$ and $\M_{\mathrm{uni}}$ above are compatible with $\wt{\Phi}_\mathrm{a}^i$, $i\in\{1,2\}$. 
For simplicity, let us denote them just as $\M$ and $\wt\Phi$. We will show that there exists a potential $\Phi$ such that $\nabla_\x\Phi = \M\nabla_\x\wt{\Phi}$. In fact, our result applies to potentials more general than $\wt{\Phi}_\mathrm{a}^i$ for $i\in\{1,2\}$. It applies to all potentials $\wt{\Phi}(\x)$ such that $\nabla_\x \wt{\Phi}(\x) = \kappa(\|\x\|)\hat{\x}$ for some function
$\kappa:\R\rightarrow\R$, which includes $\wt{\Phi}_\mathrm{a}^i$ for $i\in\{1,2\}$ as special cases.

We prove the  existence by analyzing the Jacobian of $\M\nabla_\x\wt{\Phi}$ and showing that it is symmetric. 
We first note that a result of radial symmetry.
\begin{lemma} \label{lm:symmetry condition}
Let $\nabla_\x \wt{\Phi}(\x) = \kappa(\|\x\|)\hat{\x}$ for some
$\kappa:\R\rightarrow\R$ operating on the distance to the origin, and let $f$ be a differentiable function. Then the Jacobian matrix 
\begin{align} \label{eqn:ScaledAttractorPotentialJacobian}
\frac{\partial}{\partial\x} \big(f\nabla_\x\wt{\Phi}\big) 
&=  f(\x) \nabla_\x^2\wt{\Phi} + \kappa(\x)f'(\|\x\|)\hat{\x}{\hat{\x}}^\t
\end{align}
is symmetric. 
\end{lemma}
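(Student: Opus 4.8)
The plan is to differentiate the vector field $\x \mapsto f(\x)\,\nabla_\x\wt{\Phi}(\x)$ directly with the product rule and check that each of the two resulting matrices is manifestly symmetric. Writing $g(\x) \coloneqq \nabla_\x\wt{\Phi}(\x) = \kappa(\|\x\|)\,\hat{\x}$, the product rule for a scalar times a vector field gives
\[
\frac{\partial}{\partial\x}\big(f\,g\big)
= f(\x)\,\partial_\x g
+ g(\x)\,\big(\nabla_\x f(\x)\big)^\t ,
\]
using the paper's convention $\nabla_\x f = \partial_\x f^\t$ for the gradient as a column vector.

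First I would treat the first term. Since $g = \nabla_\x\wt{\Phi}$, we have $\partial_\x g = \partial_\x \nabla_\x\wt{\Phi} = \nabla_\x^2\wt{\Phi}$, the Hessian of $\wt{\Phi}$, which is symmetric by equality of mixed partials; hence $f(\x)\,\nabla_\x^2\wt{\Phi}$ is symmetric. For the second term I would use that $f$ is a function of the radius $\|\x\|$ alone — this is implicit in the statement, which writes $f'(\|\x\|)$. By the chain rule and $\nabla_\x\|\x\| = \hat{\x}$ we get $\nabla_\x f(\x) = f'(\|\x\|)\,\hat{\x}$, so
\[
g(\x)\,\big(\nabla_\x f(\x)\big)^\t
= \big(\kappa(\|\x\|)\hat{\x}\big)\big(f'(\|\x\|)\hat{\x}\big)^\t
= \kappa(\|\x\|)\,f'(\|\x\|)\,\hat{\x}\hat{\x}^\t ,
\]
a scalar multiple of the outer product $\hat{\x}\hat{\x}^\t$, hence symmetric. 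Adding the two symmetric matrices reproduces~\eqref{eqn:ScaledAttractorPotentialJacobian} and shows it is symmetric.

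There is no genuinely hard step here; the only thing worth care is the notational overload of $f$ (as a map on $\R^m$ versus a function of the radius $\|\x\|$), and the observation that it is precisely the radial dependence of $f$ that forces the cross term to be the \emph{symmetric} outer product $\hat{\x}\hat{\x}^\t$ rather than a generic rank-one matrix $\hat{\x}\,\vv^\t$, which need not be symmetric. This lemma is then the workhorse for concluding that $\M\nabla_\x\wt{\Phi}$ admits a potential — via the standard fact that a smooth vector field on a simply connected domain with symmetric Jacobian is a gradient — which in turn is what lets the attractor motion policy~\eqref{eq:candidate motion policy} be realized as the GDS~\eqref{eq:motion policy of GDS}.
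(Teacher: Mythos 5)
Your proof is correct and follows essentially the same route as the paper's: product rule, symmetry of the Hessian $\nabla_\x^2\wt{\Phi}$, and the observation that radial dependence of $f$ makes the cross term the symmetric outer product $\kappa(\|\x\|)f'(\|\x\|)\hat{\x}\hat{\x}^\t$. Your version is if anything slightly cleaner, since the paper's displayed result contains a typo ($\gamma(\x)$ where $\kappa(\|\x\|)$ is meant) that you implicitly correct.
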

\begin{proof}
We first note that $\nabla f(\|\x\|) = f'(\|\x\|)\hat{\x}$ for 
all differentiable $f$. Then the results follow directly as the derivation below
\begin{align} 
\frac{\partial}{\partial\x} \big(f\nabla_\x\wt{\Phi}\big) 
&= f(\x) \nabla_\x^2\wt{\Phi} + \nabla\wt{\Phi}\nabla f^\t \nonumber \\
&=  f(\x) \nabla_\x^2\wt{\Phi} + \gamma(\x)f'(\|\x\|)\hat{\x}{\hat{\x}}^\t
\end{align}
because the Hessian $\nabla^2\wt{\Phi}$ is symmetric.
\end{proof}

Given Lemma~\ref{lm:symmetry condition}, showing symmetry of the Jacobian of $\M\nabla_\x\wt{\Phi}$ is straightforward, because both $\wt\Phi$ considered satisfy $\nabla\wt{\Phi} = \kappa(\|\x\|)\hat{\x}$ for some $\kappa$. 
First, we consider $\M_{\mathrm{uni}}$. We can write $\M_\mathrm{uni}\nabla\wt{\Phi} = w(\x)\nabla\wt{\Phi}$ and $w(\x) = \wt{w}(\|\x\|)$
with $\wt{w}(s) = \trihat{w}\exp(-\frac{s^2}{2\sigma_\gamma}) + w_l$, 
so its Jacobian is symmetric. 
%
Similarly,
\begin{align*}
    \M_\mathrm{stretch}\nabla\wt{\Phi} 
    &= 
        w(\x)\Big(
            \big(1-\alpha(\x)\big)\nabla_\x\wt{\Phi}\nabla_\x\wt{\Phi}^T 
            + \alpha(\x)
        \Big) \nabla_\x\wt{\Phi} \\
    &= w(\x) \Big(
        \big(1-\alpha(\x)\big)\|\nabla_\x\wt{\Phi}\|^2\nabla_\x\wt{\Phi} 
        + \alpha(\x)\nabla_\x\wt{\Phi}
    \Big) \\
    &=  \wt{w}\big(\|\x\|\big) h\big(\|\x\|\big) \nabla_\x\wt{\Phi},
\end{align*}
where $h(s) = (1-\wt{\alpha}(s))\kappa(s)^2 + \wt{\alpha}(s)$ with 
$\wt{\alpha}(s) = \exp(-\frac{s^2}{2\sigma_\alpha})$. 
This expression fits in the form considered in Lemma~\ref{lm:symmetry condition} and therefore it has a symmetric Jacobian.



\subsubsection{Compatibility between metrics and damping}
The condition for the damping part is relative straightforward. We simply need to choose $\wt{\B}$ such that
\begin{align*}
\B(\x,\xd) = \M(\x)\wt{\B}(\x,\xd) \succ 0 
\end{align*}
A sufficient condition is to set $\wt{\B}$ to share the same eigen-system as $\M$.

\subsubsection{Effects of the curvature term}

We have provided conditions for compatibility between metrics, potentials, and damping. We now consider the effects of the curvature acceleration $-\M(\x)^{-1}\bm{\xi}_\M(\x,\xd)$
\begin{align} \tag{\ref{eq:candidate motion policy}}
\xdd^d = -\nabla_\x \wt\Phi(\x) - \wt\B(\x,\xd)\xd  -\M(\x)^{-1}\bm{\xi}_\M(\x,\xd)
\end{align}
due to our non-trivial choice of metric matrix. 

For $\M_\mathrm{uni} = w(\x)\I$, this becomes
\renewcommand{\half}{{\frac{1}{2}}}
\begin{align*}
    \bm{\xi}_\M
    &= \frac{dw}{dt}\xd - \half \nabla_\x w\|\xd\|^2 \\
    &= \big(\xd^\t\nabla_\x w\big)\xd - \half \nabla_\x w\|\xd\|^2 \\
    &= \left(\big(\xd\xd^\t\big) - \half \|\xd\|^2\right) \nabla_\x w\\
    &= -\half\|\xd\|^2\left(\I - 2\ \hat{\xd}\hat{\xd}^T\right) \nabla_\x w
\end{align*}
where $\hat{\xd} = \frac{\xd}{\|\xd\|}$. That gives 
\begin{align*}
    -\M_{\mathrm{uni}}^{-1} \bm{\xi}_\M
    &= \half \|\xd\|^2\left(\I - 2\ \hat{\xd}\hat{\xd}^T\right) \frac{\nabla w}{w} \\
    &= \half \|\xd\|^2 H^r_\xd \big[\nabla\log w\big],
\end{align*}
where $H^r_\vv[\y] = \left(\I - 2\ \hat{\vv}\hat{\vv}^T\right)\y$ 
is the Householder reflection of $\y$ across the plane normal to $\vv$. In this case, 
it acts to align the system toward the goal and provides a bit of drag.

The derivatives of $\M_\mathrm{stretch}$ are similar but more complex. We recommend a combination of finite-differencing and automatic differentiation to systematize the calculations.

\subsubsection{Revisiting the attractor in~\cite{ratliff2018riemannian}} 
Let us revisit our motivating example
\begin{align} \tag{\ref{eqn:OrigAttractor}}
\xdd^d = \f_{\mathrm{a}}(\x,\xd) \coloneqq -  \gamma_p \theta_\alpha(\x) - \gamma_d \xd,
\end{align}
From using the results above, we see that~\eqref{eqn:OrigAttractor} fits in the form in~\eqref{eq:candidate motion policy} but missing the curvature term $-\M^{-1} \bm{\xi}_\M$. As we show in Section~\ref{sec:1DExample}, the curvature term provides correction for consistent behaviors and stability, which suggests that the original motion policy in~\cite{ratliff2018riemannian} could lose stability in general (e.g. when the velocity is high). Nevertheless, from the above analysis, we show that if we add the curvature correction back, i.e.,
\begin{align*}
\xdd = \f_{\mathrm{a}}(\x,\xd) - \M^{-1} \bm{\xi}_\M
\end{align*}
then the system is provably stable.

\subsection{Joint Limits} \label{apx:JointLimits}

We adopt a similar approach to handling joint limits as~\cite{ratliff2018riemannian}, but here we modify the velocity dependent components of the metric to match our theoretical requirements for stability
and fully derive the curvature terms introduced by the nonlinearities and velocity dependence.  We emphasize that, due to the invariance of RMPs to
reparameterization that results from our complete handling of curvature terms, the behavior of the joint limit RMPs and the way in which they interact with the rest of the system are independent of the specific implementation.  That said, we derive an analogous result here to the one presented in~\cite{ratliff2018riemannian} to show that these joint limit RMPs effectively scale the columns of each task space's Jacobian matrix to smoothly regulate their degrees of freedom as a function of joint limit proximity. In implementation, these RMPs can be treated as any other RMP.

\subsubsection{Integrating RMPs with joint limits}
We first define a class of joint limit metrics that
can be used in joint limit RMPs. We show, given a joint limit RMP, the RMP algebra defined in Section~\ref{sec:RMPAlgebra} can be seen as
producing the same Jacobian modification as described in~\cite{ratliff2018riemannian}.
We present the result more generally here as a lemma.
 Note that as diagonal entries of $\A$ approach infinity, entries of $\D$ approach zero and the corresponding column of $\wt{\J}$ vanishes.
\begin{lemma}[Effect of diagonal RMPs] \label{lemma:JointLimits}
Let 
$\{(\xdd_i^d,\M_i)^{\TT_i}\}_{i=1}^n$ be a collection of RMPs defined
on task spaces $\TT_i$. Let $\wt{\xdd}_i^d = \xdd_i^d-\Jd_i\qd$ and let
\begin{align}
    [\M\qdd^d,\M]^\mathcal{C} 
    = \Big[\sum_i\J_i^T\M_i\wt{\xdd}_i^d,\ \sum_i\J_i^T\M_i\J_i\Big]^\mathcal{C}
\end{align}
denote their normal form pullback and combination to space $\mathcal{C}$ through task maps 
$\psi_i:\mathcal{C}\rightarrow\TT_i$ with Jacobians 
$\J_i = \frac{\partial\psi_i}{\partial \q}$. Let $[\A\qdd_l^d, \A]^\mathcal{C}$
denote an RMP with 
diagonal\footnote{We choose this form for the diagonal 
dependent metric (without loss
of generality since it's positive definite), to be convenient notationally below.} a velocity-dependent metric 
$\A(\q, \qd) = \lambda\D(\q, \qd)^{-2}$, where $\lambda > 0$.
Then 
$[\M\qdd^d,\M]^\mathcal{C} + [\A\qdd_l^d, \A]^\mathcal{C}$
has metric $\wt{\M} = \D^{-1}\left(\sum_i \wt{\J}^T\M_i\wt{\J} + \lambda\I\right)\D^{-1}$
and motion policy
\begin{align}
    \qdd_c^d 
    &= \D\left(\sum_i \wt{\J}^T\M_i\wt{\J} + \lambda\I\right)^\dagger
        \Big(\sum_i \wt{\J}^T\M_i\wt{\xdd}_i^d + \lambda\D^{-1}\qdd_l^d\Big) 
    \\\label{eqn:JointLimitsLeastSquares}&= 
    \D\left[
        \argmin_{\qdd} \left(\frac{1}{2}\sum_i\|\wt{\xdd}_i^d - \wt{\J}_i\qdd\|^2_{\M_i}
        + \frac{\lambda}{2}\|\wt{\qdd}_l^d - \qdd\|^2\right)\right],
\end{align}
with $\wt{\J} = \J\D$ and $\wt{\qdd}_l^d = \D^{-1}\qdd_l^d$.
\end{lemma}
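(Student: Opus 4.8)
The plan is to unwind the \algebra operations on $\CC$ and then apply the change of variables $\qdd\mapsto\D\qdd$. First I would recall that, by \pullback with identity task maps, combining RMPs given in natural form on the same manifold sums their force vectors and their inertia matrices, and that \resolve produces the canonical form by applying the pseudoinverse of the combined inertia matrix to the combined force. Hence $[\M\qdd^d,\M]^{\CC}+[\A\qdd_l^d,\A]^{\CC}$ is the natural-form RMP $[\,\M\qdd^d+\A\qdd_l^d,\ \M+\A\,]^{\CC}$, with metric $\wt{\M}=\M+\A=\sum_i\J_i^\t\M_i\J_i+\lambda\D^{-2}$ and motion policy $\qdd_c^d=\wt{\M}^\dagger(\M\qdd^d+\A\qdd_l^d)$. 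Since $\f_i-\M_i\Jd_i\qd=\M_i(\xdd_i^d-\Jd_i\qd)=\M_i\wt{\xdd}_i^d$, the pullback force is $\M\qdd^d=\sum_i\J_i^\t\M_i\wt{\xdd}_i^d$, and $\A\qdd_l^d=\lambda\D^{-2}\qdd_l^d$; substituting these expresses everything in terms of the given data.

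Next I would factor the diagonal matrix $\D$ (symmetric, and invertible since $\lambda>0$) out of $\wt{\M}$. Using $\wt{\J}_i=\J_i\D$ and $\D^\t=\D$,
\[
\wt{\M}=\D^{-1}\Bigl(\D\bigl(\textstyle\sum_i\J_i^\t\M_i\J_i\bigr)\D+\lambda\I\Bigr)\D^{-1}=\D^{-1}\Bigl(\textstyle\sum_i\wt{\J}_i^\t\M_i\wt{\J}_i+\lambda\I\Bigr)\D^{-1},
\]
which is the asserted form of $\wt{\M}$. The observation that makes the rest routine is $\wt{\M}\succ0$ (as $\sum_i\J_i^\t\M_i\J_i\succeq0$ and $\lambda\D^{-2}\succ0$), so $\wt{\M}^\dagger=\wt{\M}^{-1}$ and $\sum_i\wt{\J}_i^\t\M_i\wt{\J}_i+\lambda\I$ is likewise invertible; this is what justifies $(\D^{-1}X\D^{-1})^{-1}=\D X^{-1}\D$, an identity false for generic pseudoinverses. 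Inverting the factored form and commuting the outer $\D$'s through the force, with $\D\J_i^\t=\wt{\J}_i^\t$ and $\D(\lambda\D^{-2})=\lambda\D^{-1}$, gives
\[
\qdd_c^d=\D\Bigl(\textstyle\sum_i\wt{\J}_i^\t\M_i\wt{\J}_i+\lambda\I\Bigr)^{-1}\Bigl(\textstyle\sum_i\wt{\J}_i^\t\M_i\wt{\xdd}_i^d+\lambda\D^{-1}\qdd_l^d\Bigr),
\]
which, with $\wt{\qdd}_l^d=\D^{-1}\qdd_l^d$, is the first asserted identity.

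For the least-squares form I would set $\mathcal J(\qdd)=\tfrac12\sum_i\|\wt{\xdd}_i^d-\wt{\J}_i\qdd\|_{\M_i}^2+\tfrac{\lambda}{2}\|\wt{\qdd}_l^d-\qdd\|^2$, note it is a strictly convex quadratic (Hessian $\sum_i\wt{\J}_i^\t\M_i\wt{\J}_i+\lambda\I\succ0$), and read its minimizer off $\nabla\mathcal J(\qdd)=0$, i.e.\ the normal equations $\bigl(\sum_i\wt{\J}_i^\t\M_i\wt{\J}_i+\lambda\I\bigr)\qdd=\sum_i\wt{\J}_i^\t\M_i\wt{\xdd}_i^d+\lambda\wt{\qdd}_l^d$, whose unique solution is exactly the bracketed vector in the previous display; thus $\qdd_c^d=\D[\argmin_{\qdd}\mathcal J(\qdd)]$. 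The closing remark is then immediate: $\A=\lambda\D^{-2}$ diagonal gives $A_{jj}\to\infty\iff D_{jj}\to0$, whereupon the $j$-th column of $\wt{\J}_i=\J_i\D$, being $D_{jj}$ times the $j$-th column of $\J_i$, vanishes.

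There is no genuine obstacle here; the statement is pure linear algebra. The single place that needs explicit care --- and the only step I would write out in full --- is the pseudoinverse bookkeeping: positive-definiteness of $\wt{\M}$ (equivalently $\lambda>0$) must be invoked before replacing $(\cdot)^\dagger$ by $(\cdot)^{-1}$ and pulling $\D$ through the inverse, since those manipulations are not valid for arbitrary Moore--Penrose inverses.
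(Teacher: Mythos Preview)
Your proposal is correct and follows essentially the same route as the paper's proof: sum the two natural-form RMPs, factor $\D^{-1}$ on both sides of the combined metric to expose $\wt{\J}_i=\J_i\D$, and then \resolve. You are actually more careful than the paper on the one nontrivial point---the paper silently pulls $\D$ through the pseudoinverse, whereas you explicitly note $\wt{\M}\succ0$ (from $\lambda\D^{-2}\succ0$) so that $(\cdot)^\dagger=(\cdot)^{-1}$ and $(\D^{-1}X\D^{-1})^{-1}=\D X^{-1}\D$ is legitimate---and you spell out the normal-equations argument for the least-squares form, which the paper leaves as a one-line remark.
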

\begin{proof} Writing out the sum we get
\begin{align*}
    [\M\qdd^d,&\M]^\mathcal{C} + [\A\qdd_l^d, \A]^\mathcal{C}
\\&=
    \left[\sum_i\J_i^T\M_i\wt{\xdd}_i^d + \lambda\D^{-2}\qdd_l^d,\
        \sum_i\J_i^T\M_i\J_i + \lambda\D^{-2}\right]^\mathcal{C}.
\\&=
    \left[\D^{-1}\Big(\sum_i\wt{\J}_i^T\M_i\wt{\xdd}_i^d + \lambda\D^{-1}\qdd_l^d\Big),\
        \D^{-1}\Big(\sum_i\wt{\J}_i^T\M_i\wt{\J}_i + \lambda\I\Big)\D^{-1}\right]^\mathcal{C}.
\end{align*}
This gives the expression for the metric, and the motion policy can be obtained by $\resolve$:
\begin{align*}
\wt{\qdd}_d 
&= 
    \D\left(\sum_i\wt{\J}_i^T\M_i\wt{\J}_i + \lambda\I\right)^{\dagger}
    \left(\sum_i\wt{\J}_i^T\M_i\wt{\xdd}_i^d + \lambda\D^{-1}\qdd_l^d\right)
\\&=
    \D\ \resolve\left(\Big[\sum_i\wt{\J}_i^T\M_i\wt{\xdd}_i^d + \lambda\D^{-1}\qdd_l^d,
        \sum_i\wt{\J}_i^T\M_i\wt{\J}_i + \lambda\I\Big]^\mathcal{C}\right),
\end{align*}
which is equivalent to the least squares form in~\eqref{eqn:JointLimitsLeastSquares}.
\end{proof}

\subsubsection{A class of velocity-dependent joint limit metrics} \label{sec:JointLimitMetrics}

Here we develop a velocity dependent metric to represent joint limits. We construct
it for each joint independently, denoting the joint angle by $q\in[l_l,l_u]$. 
Let $a(q, \dot{q})$ denote a one-dimensional velocity-dependent metric on $q$. We want 
$a\rightarrow\infty$ as $q$ is close to the joint limit and 
$\dot{q}$ heads toward the joint limit. 
Such a metric can be constructed
using a form related to that given in \cite{ratliff2018riemannian},
choosing $a = b^{-2}$ for
\begin{align} \label{eqn:JointLimitInvMetric}
    b = 
    s \Big(\alpha_u d + (1-\alpha_u) 1\Big) 
    + (1-s) \Big(\alpha_l d + (1-\alpha_l) 1\Big).
\end{align}
with 
$s = \frac{q - l_l}{l_u - l_l}$,
$d = 4s(1-s) = 4\left(\frac{q-l_l}{l_u-l_l}\right)\left(\frac{l_u - q}{l_u - l_l}\right)$, 
and velocity gates $\alpha_u = 1 - e^{-\dot{q}_+^2/(2\sigma^2)}$ 
and $\alpha_l = 1 - e^{-\dot{q}_-^2/(2\sigma^2)}$ for $\sigma>0$.
(Choosing $a = b^{-2}$ makes intuitive sense with regard to Lemma~\ref{lemma:JointLimits}.)
Since $s = \frac{q - l_l}{l_u - l_l}$ and 
$1-s = \frac{l_u - q}{l_u-l_l}$, $s$ indicates whether $q$ is close to $l_u$ 
($s\rightarrow 1$ as $q\rightarrow l_u$)
while $1-s$ indicates whether it is close to $l_l$. Likewise, $\alpha_u$ indicates
whether $\dot{q}$ is moving in a positive direction and $\alpha_l$ indicates
a negative direction. Therefore, this equation encodes a smoothed binary logic
that can be read ``if close to the upper limit and moving in the positive 
direction use $d$,
and if close to the lower limit and moving negatively $d$; in 
all other cases use $1$.'' Said another way, ``if close to either limit 
and moving toward it, use $d$, otherwise use $1$.'' Note that $\sup d = 1$
and $d\rightarrow 0$ as $q\rightarrow\{l_l,l_u\}$, so $a = b^{-2}$ has the 
desired property discussed above. 
All that remains to be shown is that this choice of $a(q, \dot{q})$
satisfies the condition $\dot{q}\frac{\partial a}{\partial\dot{q}}\geq 0$ of 
Theorem~\ref{th:condition on velocity metric}.

\begin{lemma} The velocity-dependent metric defined by $a = b^{-2}$ with $b$ given 
by~\eqref{eqn:JointLimitInvMetric} satisfies 
the sufficient condition of Theorem~\ref{th:condition on velocity metric} for stability,
i.e.
$\frac{\partial a}{\partial \dot{q}}\dot{q}\geq0$ for all $\dot{q}$.
\end{lemma}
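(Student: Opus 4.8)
The plan is to peel the claim down to a sign condition on the one‑dimensional velocity gates. Since $a=b^{-2}$, we have $\partial_{\dot q}a=-2b^{-3}\,\partial_{\dot q}b$, so on the set where $b>0$ the inequality $\dot q\,\partial_{\dot q}a\ge 0$ holds if and only if $\dot q\,\partial_{\dot q}b\le 0$. First I would record $b>0$: rewriting \eqref{eqn:JointLimitInvMetric} as $b=1-(1-d)\big(s\alpha_u+(1-s)\alpha_l\big)$, note that $s\alpha_u+(1-s)\alpha_l\in[0,1]$ (a convex combination of values in $[0,1]$) and that $d=4s(1-s)\in(0,1]$ strictly in the interior of $[l_l,l_u]$, so $b\in(0,1]$ there; at the limits $a=b^{-2}\to\infty$, the intended behavior, and the metric is considered on the open interval. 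Hence it suffices to prove $\dot q\,\partial_{\dot q}b\le 0$.

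Next I would differentiate $b$ in $\dot q$, remembering that $s$ and $d$ depend only on $q$. Since $\partial_{\alpha_u}\big(\alpha_u d+(1-\alpha_u)\big)=d-1$ and likewise for $\alpha_l$,
\[
\partial_{\dot q}b=(d-1)\Big(s\,\partial_{\dot q}\alpha_u+(1-s)\,\partial_{\dot q}\alpha_l\Big).
\]
Because $d\le 1$ and $s,1-s\ge 0$, multiplying by $\dot q$ gives $\dot q\,\partial_{\dot q}b\le 0$ as soon as $\dot q\,\partial_{\dot q}\alpha_u\ge 0$ and $\dot q\,\partial_{\dot q}\alpha_l\ge 0$. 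So the whole statement is reduced to those two gates.

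Finally I would verify the gates directly. Writing $\dot q_+=\max\{0,\dot q\}$ and $\dot q_-=\min\{0,\dot q\}$, the maps $\dot q\mapsto\dot q_+^2$ and $\dot q\mapsto\dot q_-^2$ are $C^1$ with derivatives $2\dot q_+$ and $2\dot q_-$, so $\partial_{\dot q}\alpha_u=\sigma^{-2}\dot q_+\,e^{-\dot q_+^2/(2\sigma^2)}$ and $\partial_{\dot q}\alpha_l=\sigma^{-2}\dot q_-\,e^{-\dot q_-^2/(2\sigma^2)}$. Then $\dot q\,\partial_{\dot q}\alpha_u=\sigma^{-2}\,\dot q\,\dot q_+\,e^{-\dot q_+^2/(2\sigma^2)}\ge 0$ since $\dot q\,\dot q_+\ge 0$ for every $\dot q$, and identically $\dot q\,\partial_{\dot q}\alpha_l\ge 0$ since $\dot q\,\dot q_-\ge 0$. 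Chaining the three reductions yields $\dot q\,\partial_{\dot q}a\ge 0$, which is exactly the sufficient condition of Theorem~\ref{th:condition on velocity metric}. The argument is essentially bookkeeping; the only delicate points are confirming $b>0$ (so that $a$ is well defined and the sign flip is legitimate), which needs $d>0$ in the interior, and the $C^1$ smoothness of the soft gates at $\dot q=0$, which is precisely why the squared‑ramp form $\dot q_\pm^2$ is used rather than $|\dot q_\pm|$ — neither is a real obstacle, so I expect a short proof.
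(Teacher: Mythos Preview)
Your proposal is correct and follows essentially the same route as the paper's proof: reduce via $\partial_{\dot q}a=-2b^{-3}\partial_{\dot q}b$ to showing $\dot q\,\partial_{\dot q}b\le 0$, rewrite $b$ affinely in $\alpha_u,\alpha_l$ with nonnegative coefficients $s(1-d)$ and $(1-s)(1-d)$, and then check the signs of the velocity gates. Your version is actually a bit more careful than the paper's, which simply asserts $b\ge 0$ and the signs of $\partial_{\dot q}\alpha_u,\partial_{\dot q}\alpha_l$ without writing them out; your explicit verification that $b>0$ in the interior and that the squared-ramp gates are $C^1$ at $\dot q=0$ fills in details the paper leaves implicit.
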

\begin{proof}
We start by noting
\begin{align}
    \frac{\partial a}{\partial \dot{q}}\dot{q} = \frac{\partial}{\partial\dot{q}}b^{-2}\dot{q} 
    = -2b^{-3} \frac{\partial b}{\partial\dot{q}}\dot{q}.
\end{align}
Since $b \geq 0$ we have $\frac{\partial b}{\partial\dot{q}}\dot{q}\leq0$ implies 
$\frac{\partial a}{\partial \dot{q}}\dot{q}\geq 0$. We can rearrange $b$ to be more 
transparent
to derivatives with respect to $\dot{q}$:
\begin{align}
    b &= s\Big(\alpha_u(d-1) + 1\Big) + (1-s)\Big(\alpha_l(d-1)+1\Big) \\
    &= -s(1-d)\alpha_u - (1-s)(1-d) \alpha_l + c,\\
    &= -\gamma_u\alpha_u - \gamma_l\alpha_l + c,
\end{align}
where $c$ is independent of $\dot{q}$ and where
$\gamma_u,\gamma_l\geq0$ and both independent of $\dot{q}$.
Therefore, we have
\begin{align} \label{eqn:JointLimitInvMetricDeriv}
    \frac{\partial b}{\partial\dot{q}}\dot{q} = -\gamma_u\frac{\partial\alpha_u}{\partial\dot{q}} 
        - \gamma_l\frac{\partial\alpha_l}{\partial\dot{q}}.
\end{align}
Since $\frac{\partial\alpha_u}{\partial\dot{q}} > 0$ for $\dot{q} > 0$ and $0$ otherwise,
while $\frac{\partial\alpha_l}{\partial\dot{q}} < 0$ for $\dot{q} < 0$ and $0$ otherwise,
Equation~\ref{eqn:JointLimitInvMetricDeriv} implies 
$\frac{\partial b}{\partial\dot{q}}\dot{q}\leq0$ for all $\dot{q}$ and hence
$\frac{\partial a}{\partial\dot{q}}\dot{q}\geq 0$ for all $\dot{q}$.
\end{proof}     

We note that there are other choices for joint limit metrics, including those
used for obstacle avoidance. In fact, one way to create joint limit controllers
would be to treat each joint limit as an obstacle. We choose to use the above
limit controller due to its intuitive interpretation as a velocity-dependent
modification of a controller designed in a space $u$ with the relationship $q =
(l_u - l_l)\sigma(u) + l_l$ with $\sigma(u) = 1/ (1+e^{-u})$.

\subsubsection{Motion policies for joint-limit avoidance}

The differential equation $\qdd_l^d$ from the joint limit RMPs (see Lemma~\ref{lemma:JointLimits}) encodes the curvature terms
from the metric $\A$. Denoting those as $\A^{-1}\bm\xi_\A$ with 
$\bm\xi_\A = \sdot{\A}{\q}\qd - \frac{1}{2}\nabla_\q\left(\qd^T\A\qd\right) 
= \diag\big(\frac{1}{2}\frac{d\A_{ii}}{dq_i}\dot{q}_i^2\big)_i$, 
we often choose this differential
equation to be 
\begin{align}
    \qdd_l^d = \big(\eta_p(\q_0 - \q) - \eta_d\qd\big) - \A^{-1}\bm\xi_\A,
\end{align}
for $\eta_p,\eta_d\geq0$.
As shown in Lemma~\ref{lemma:JointLimits}, this differential equation can be viewed as a transformation  $\wt{\qdd}_l^d = \D^{-1}\qdd_l^d = \A^{\frac{1}{2}}\qdd_l^d$ 
in the final joint limit 
corrected expression. Since $\A$ becomes large near joint limits, this transformation effectively scales up the $i$th dimension of $\qdd_l^d$ when $q_i$ nears a joint limit and $\dot{q}_i$
is headed toward it.

\section{Details of the Reaching-through-clutter Experiments} \label{apx:ReachingExperiment}

We give some details on the reaching experiments here.

\subsection{Experimental method}

We set up a collection of clutter-filled environments with cylindrical
obstacles of varying sizes in simulation as depicted in Fig.~\ref{fig:robots}, and tested the performance of \flow and two potential 
field methods on a modeled ABB YuMi robot.

\vspace{3mm}
\noindent Compared methods:
\begin{enumerate}
\item {\bf \flow:} We implement \flow using the RMPs  in Section~\ref{sec:example RMPs} and 
  detailed in Appendix~\ref{apx:Practice}. In particular, we place collision-avoidance
  controllers on distance spaces $s_{ij} = d_j(\x_i)$, where $j=1,\ldots,m$ indexes
  the world obstacle $o_j$ and $i=1,\ldots,n$ indexes the $n$ control point along the robot's
  body. Each collision-avoidance controller uses a weight function $w_o(\x)$ that 
  ranges from $0$ when the robot is far from the obstacle to $w_o^\mathrm{max}\gg 0$ when the robot is in contact with the obstacle's surface.
  Similarly, the attractor potential uses a weight function $w_a(\x)$ that 
  ranges from $w_a^{\mathrm{min}}$ far from the target to $w_a^{\mathrm{max}}$
  close to the target.
\item {\bf PF-basic:} This variant is a basic implementation of obstacle 
  avoidance potential fields with dynamics shaping. We use the 
  RMP framework to implement this variant by placing collision-avoidance
  controllers on the same body control points used in \flow but with isotropic
  metrics of the form 
  $\G_o^\mathrm{basic}(\x) = w_o^\mathrm{max}\I$ for each control point, with 
  $w_o^\mathrm{max}$ matching
  the value \flow uses. Similarly, the attractor uses the same attractor potential
  as \flow, but with a constant isotropic metric with the form
  $\G_a^\mathrm{basic}(\x) = w_a^\mathrm{max}\I$.
\item {\bf PF-nonlinear:} This variant matches PF-basic in construction, except
  it uses a \textit{nonlinear} isotropic metrics of the form 
  $\G_o^\mathrm{nlin}(\x_i) = w_o(\x)\I$ and $\G_a^\mathrm{nlin}(\x_i) = w_a(\x)\I$
  for obstacle-avoidance and attraction, respectively, using weight functions 
  matching \flow.
\end{enumerate}

\textit{A note on curvature terms:} PF-basic uses constant metrics, so has no curvature
terms; PF-nonlinear has nontrivial curvature terms arising from the spatially
varying metrics, but we ignore them here to match common practice from 
the OSC literature.

\textit{Parameter scaling of PF-basic:} Isotropic metrics do not express
spacial directionality toward obstacles, and that leads to 
an inability of the system to effectively trade off the
competing controller requirements. That conflict results in more collisions and 
increased instability. We,
therefore,
compare PF-basic under these baseline metric weights (matching \flow) 
with variants that 
incrementally strengthen collision avoidance controllers and C-space postural
controllers ($f_\mathcal{C}(\q, \qd) = \gamma_p(\q_0 - \q) - \gamma_d\qd$)
to improve these performance measures in the experiment. 
We use the following weight scalings (first entry denotes the obstacle 
metric scalar, and the second entry denotes the C-space metric scalar):
``low'' $(3, 10)$,
``med'' $(5, 50)$, and 
``high'' $(10, 100)$.

\textit{Environments:} 
We run each of these variants on $6$ obstacle environments with $20$ randomly 
sampled target locations each distributed on the opposite side 
of the obstacle field from the robot. Three of the environments use four smaller 
obstacles (depicted in panel 3 of Fig.~\ref{fig:robots}), and the remaining
three environments used two large obstacles (depicted in panel 4 of
Fig.~\ref{fig:robots}). Each environment used 
the same $20$ targets to avoid implicit sampling bias in target choice. 

\subsection{Performance measures}

We report results in Fig.~\ref{fig:reach} in terms of mean and one standard deviation error bars calculated across the $120$ trials for each of the following performance measures:\footnote{There is no guarantee of feasibility in planning 
problems in general, so in all cases, we measure performance relative to the performance
of \flow, which is empirically stable and near optimal across these 
problems.}
\begin{enumerate}
\item {\it Time to goal (``time''):} Length of time, in seconds, it takes for the 
    robot to reach a convergence state. This convergence state 
    is either the target, or its best-effort local minimum. If the system never converges, as in the case of many potential field trials for infeasible problems, the trial times 
    out after 5 seconds. This metric measures time-efficiency of the movement.
\item {\it C-space path length (``length''):} This is the total path length $\int\|\qd\|dt$ of the movement through the configuration space across the trial. This metric measures how economical the movement is. In many of the potential-field variants with lower weights, we see significant fighting among the controllers resulting in highly inefficient extraneous motions.
\item {\it Minimal achievable distance to goal (``goal distance''):} Measures how close,
    in meters, the system is able to get to the goal with its end-effector.
\item {\it Percent time in collision for colliding trials (``collision intensity''):}
    Given that a trial has a collision, this metric measures the fraction of time 
    the system is in collision throughout the trial. This metric indicates
    the intensity of the collision. Low values indicate short grazing collisions
    while higher values indicate long term obstacle penetration.
\item {\it Fraction of trails with collisions (``collision failure''):} Reports
    the fraction of trials with any collision event. We consider these to be 
    collision-avoidance controller failures.
\end{enumerate}

\subsection{Discussion}

In Figure~\ref{fig:reach}, we see that \flow outperforms each of these variants 
significantly, with some informative trends:
\begin{enumerate}
\item \flow never collides, so its collision intensity and collision failure 
    values are $0$. 
\item The other techniques, progressing from no scaling of 
    collision-avoidance and C-space controller weights to substantial scaling,
    show a profile of substantial collision in the beginning to fewer (but
    still non-zero) collision events in the end. 
    But we note that improvement in collision-avoidance
    is achieved at the expense of time-efficiency and the robot's 
    ability to reach the goal (it is too conservative).
\item Lower weight scaling of both PF-basic and PF-nonlinear 
    actually achieve some faster times and better goal
    distances, but that is because the system pushes directly 
    through obstacles, effectively
    ``cheating'' during the trial. \flow remains highly economical with its 
    best effort reaching behaviors while ensuring the trials remain collision-free.
\item Lower weight scalings of PF-basic are highly uneconomical in their motion 
    reflective of their relative instability. As the C-space weights on the posture
    controllers increase, the stability and economy of motion increase, but, again,
    at the expense of time-efficiency and optimality of the final reach.
\item There is little empirical difference between PF-basic and PF-nonlinear 
    indicating that the defining feature separating \flow from the potential 
    field techniques is its use of a highly nonlinear metric that explicitly
    stretches the space in the direction of  the obstacle as well as in the direction of the velocity 
    toward the target. Those stretchings
    penalize deviations in the stretched directions during combination
    with other controllers while allowing variation along orthogonal directions.
    By being more explicit about how controllers should instantaneously trade
    off with one another, \flow is better able to mitigate the otherwise conflicting
    control signals.
\end{enumerate}

\subsubsection{Summary:} Isotropic metrics do not effectively convey how each collision and
attractor controller should trade off with one another, resulting in a conflict
of signals that obscure the intent of each controller making simultaneous
collision avoidance, attraction, and posture maintainence more difficult.
Increasing the weights of the controllers can improve their effectiveness, but at
the expence of descreased overall system performance. The resulting motions
are slower and less effective in reaching the goal
in spite of more stable behavior and fewer collisions. A key
feature of \flow is its ability to leverage highly nonlinear metrics that
better convey information about how controllers should trade off with one
another, while retaining provable stability guarantees. In combination, these
features result in efficient and economical obstacle avoidance behavior while
reaching toward targets amid clutter.

\section{Details of integrated system} \label{apx:IntegratedSystem}

We demonstrate the integrated vision and motion system on two physical
dual arm manipulation platforms: a Baxter robot from Rethink Robotics, and 
a YuMi robot from ABB. Footage of our fully integrated
system (see start of Section~\ref{sec:experiments} for the link) depicting tasks such 
as pick and place amid clutter, reactive manipulation of a cabinet drawers
and doors with human interaction, \emph{active} leadthrough with collision 
controllers running, and pick and place into a cabinet drawer.\footnote{We have also run the RMP
portion of the system on an ABB IRB120 and a dual arm Kuka manipulation
platform with lightweight collaborative arms. Only the two platforms mentioned
here, the
YuMi and the Baxter, which use the full motion and vision integration, are
shown in the video for economy of space.}

This full integrated system, shown in the supplementary video, uses the RMPs
described in Section~\ref{sec:example RMPs} (detailed in
Appendix~\ref{apx:Practice}) with a slight modification that the curvature
terms are ignored. Instead, we maintain theoretical stability by using sufficient damping terms as described in Section~\ref{sec:1DExample} and by
operating at slower speeds. Generalization of these RMPs between embodiments was anecdotally
pretty consistent, although, as we demonstrate in our experiments, we would
expect more empirical deviation at higher speeds. For these manipulation tasks,
this early version of the system worked well as demonstrated in the video.

For visual perception, we leveraged consumer depth cameras along with two levels
of perceptual feedback:
\begin{enumerate}
\item {\it Ambient world:} For the Baxter system we create a voxelized 
    representation of the unmodeled ambient world, and use distance fields
    to focus the collision controllers on just the closest obstacle points
    surrounding the arms. This methodology is similar in nature 
    to \cite{2017_rss_system}, except we found empirically that attending 
    to only the closest point to a skeleton representation resulted in 
    oscillation in concaved regions where distance functions might result
    in nonsmooth kinks. We mitigate this issue by finding the closest points
    to a \emph{volume} around each control point, effectively smoothing 
    over points of nondifferentiability in the distance field.
\item {\it Tracked objects:} We use the Dense Articulated Real-time
    Tracking (DART) system of \cite{Sch15DAR} to track articulated
    objects in real time through manipulations. This system is able 
    to track both the robot and environmental objects, such 
    as an articulated cabinet, simulataneously to give accurate 
    measurements of their relative configuration effectively obviating 
    the need for explicit camera-world calibration. As long as the 
    system is initialized in the general region of the object locations 
    (where for the cabinet and the robot, that would mean even up to 
    half a foot of error in translation and a similar scale of error in rotation),
    the DART optimizer will snap to the right configuration when turned 
    on. DART sends information about object locations to the motion generation,
    and receives back information about expected joint configurations (priors) 
    from the 
    motion system generating a robust world representation usable in 
    a number of practical real-world manipulation problems.
\end{enumerate}

Each of our behaviors are decomposed as state machines that use visual feedback
to detect transitions, including transitions to reaction states as needed
to implement behavioral robustness. Each arm is represented as a separate robot
for efficiency, receiving real-time information about other arm's 
current state enabling coordination. Both arms are programmed simultaneously
using a high level language that provides the programmer a unified view
of the surrounding world and command of both arms.


\end{document}